\title{A duality framework for analyzing random feature and \\ two-layer neural networks}
\author{
Hongrui Chen\thanks{Department of Mathematics, Stanford University, \texttt{hongrui@stanford.edu}}
\and
Jihao Long\thanks{Institute for Advanced Algorithmic Research, Shanghai, \texttt{longjh1998@gmail.com}}
\and
Lei Wu\thanks{School of Mathematical Sciences, Peking University,  \texttt{leiwu@math.pku.edu.cn}}
\thanks{Center for Machine Learning Research, Peking University}
\thanks{AI for Science Institute, Beijing\\ \hspace*{1.6em}All authors contributed equally, and the order follows the alphabetical convention. This paper has been accepted for publication in  {\it Annals of Statistics}.}
}
\date{\today}
\begin{document}

\maketitle

\vspace*{-1em}
\begin{abstract}
We consider the problem of learning functions within the $\mathcal{F}_{p,\pi}$ and Barron spaces, which play crucial roles in understanding random feature models (RFMs), two-layer neural networks, as well as kernel methods. 
Leveraging tools from information-based complexity (IBC), we establish a  dual equivalence between approximation and estimation, and then apply it to study the learning of the preceding function spaces. The duality allows us to  focus on  the more tractable problem between approximation and estimation. To showcase the efficacy of our duality framework, we delve into  two important but under-explored problems:
\begin{itemize}
\item \textbf{Random feature learning beyond kernel regime:} 
We derive sharp bounds for learning $\cF_{p,\pi}$ using RFMs. Notably, the learning is efficient without the curse of dimensionality for $p>1$. This underscores the extended applicability of RFMs beyond the traditional kernel regime, since $\mathcal{F}_{p,\pi}$ with $p<2$ is strictly larger than the corresponding reproducing kernel Hilbert space (RKHS) where $p=2$.

\item  \textbf{The $L^\infty$ learning of RKHS:}  We establish sharp, spectrum-dependent characterizations for the convergence of $L^\infty$ learning error  in both noiseless and noisy settings. Surprisingly, we show that popular kernel ridge regression can achieve near-optimal performance in $L^\infty$ learning, despite it primarily minimizing square loss.
\end{itemize}

To establish the aforementioned duality, we introduce a type of IBC, termed  $I$-complexity, to measure the size of a function class. Notably, $I$-complexity offers a tight characterization of learning in noiseless settings, yields lower bounds comparable to  Le Cam's  in noisy settings, and is versatile in deriving upper bounds. We believe that our duality framework holds potential for broad application in learning analysis across more scenarios.
\end{abstract}

\doparttoc 
\faketableofcontents 
\part{} 

\vspace*{-1em}
\section{Introduction}
In this paper, we study the properties of  the following feature-based model:
\begin{equation}\label{eqn: rfm}
    f(x;\theta) = \frac{1}{m}\sum_{j=1}^m a_j \phi(x,v_j),
\end{equation}
where $\phi: \cX\times \cV \mapsto\RR$ is a  feature function and $m\in\NN$ denotes the number of features. The domains of input and weight  are represented by  $\cX$ and $\cV$, respectively.
Notable examples of feature-based models include random feature models (RFMs) \cite{Rahimi2007RandomFF} and two-layer neural networks \cite{barron1993universal}. For RFMs, $\{v_j\}_{j=1}^m$ are \iid samples drawn from a prefixed weight distribution $\pi\in \cP(\cV)$; the learnable parameters are the outer coefficients, i.e., $\theta=\{a_j\}_{j=1}^m$. In contrast, in two-layer neural networks, both the outer coefficients and inner-layer weights are learnable, i.e., $\theta=\{(a_j,v_j)\}_{j=1}^m$, making the features \textit{adaptive}.

RFMs were originally proposed as computationally efficient approximation of kernel methods and have been widely used in practice; see the survey~\cite{liu2021random}. More recently, RFMs  also gained in popularity as proxy models to study  over-parametrized neural networks in the lazy/kernel regime~\cites{daniely2017sgd,weinan2020comparative,arora2019exact,cao2019generalization,jacot2018neural,chizat2019lazy}.
Particularly, when the $\ell_2$ norm of the coefficients is regularized, RFMs are equivalent to kernel methods with the kernel given by $\hk_m(x,x'):=\frac{1}{m}\sum_{j=1}^m \phi(x,v_j)\phi(x',v_j)$, according to the representer theorem~\cites{scholkopf2001generalized}. When the feature number $m\to{\infty}$, the law of large numbers implies 
\begin{equation}\label{eqn: kernel}
    \hk_m(x,x')\to k_\pi(x,x') := \int_{\cV} \phi(x,v)\phi(x',v)\dd \pi(v),
\end{equation}
establishing RFMs as approximations of kernel methods with the kernel $k_\pi$. 

Naturally, existing theoretical analyses of RFMs (see, e.g.,~\cites{carratino2018learning,rahimi2008uniform,bach2017equivalence,liu2021random}) mostly focus on target functions within the corresponding RKHS~\cites{aronszajn1950theory}.  
Theoretically speaking, RKHS theory enjoys popularity in the analysis of kernel methods primarily for two major reasons. First, RKHS (under mild conditions on the associated reproducing kernel) can be efficiently learned using kernel methods in high dimensions~\cites{smale2007learning,novak2012tractability} without suffering from the curse of dimensionality (CoD)~\cites{bellman1966dynamic}, i.e., the learning error scales with  the input dimension $d$ at most polynomially.
Second, the inherent Hilbert structure of RKHS, coupled with corresponding spectral decomposition~\cites{mercer1909xvi}, offers a rich suite of mathematical tools that greatly simplify theoretical analysis~\cites{carratino2018learning,sobolevkernel,mei2019generalization,MEI2021,sobolevkernel}.  However, it should be stressed that RFMs are not kernel methods if  the $\ell_p$ norm  with $p<2$ of coefficients is regularized~\cites{Celentano2021MinimumCI,xie2022shrimp,hashemi2021generalization}, under which RKHS theory is not applicable anymore.  

As for two-layer neural networks, the theoretical analysis traces back to the pioneering work by Andrew Barron \cites{barron1993universal,barron1994approximation}. Specifically, Barron considered target functions that satisfy $\int (1+\|\xi\|) |\hat{f}(\xi)|\dd \xi<{\infty}$, where $\hat{f}$ denotes the Fourier transform of $f$. He proved that two-layer neural networks can overcome the CoD, but all linear methods, including RFMs, inevitably suffer the CoD in learning these functions.
Subsequent studies such as \cites{barron1992neural,devore1998nonlinear,kurkova2001bounds,bach2017breaking,weinan2021barron} have refined and extended Barron's results in various aspects. Notably, \cites{weinan2021barron} established a precise connection between two-layer neural networks and RFMs, suggesting that two-layer neural networks can be essentially viewed as RFMs equipped with an optimally-chosen weight distribution.

\subsection{Our contributions}\label{sec: contribution}
In the  work, we introduce a {\em duality framework} coupled with the associated function spaces to unify the analysis of RFMs and two-layer neural networks. This framework  not only allows us to analyze RFMs in the  beyond-kernel regime where $p<2$, but also produces new results within the traditional kernel regime. 

In Section~\ref{sec: information-based complexity}, we first establish  an abstract duality between approximation and estimation by utilizing tools from the information-based complexity (IBC)~\cite{traub2003information} (see the related work section for an overview of IBC). Specifically, we define a type of IBC, coined $I$-complexity, to measure the learnability of a function class. On the one hand, we show that $I$-complexity precisely characterizes the corresponding approximation in the dual spaces (Theorem~\ref{thm:dual_abstract}). On the other hand, we  demonstrate its utility in  bounding estimation errors in the primal spaces across various scenarios: 1) in the noiseless setting, $I$-complexity  provides  matching bounds for estimation errors~(Theorem~\ref{lemma-data-dependent}); 2) in the noisy setting,  it yields a lower bound (Theorem~\ref{lemma-random-data}), which works comparably with Le Cam's~\cite{lecam2012asymptotic}, but without needing explicit construction of hard functions;
3) moreover, $I$-complexity also serves as a versatile tool  for deriving upper bounds of estimation errors (see Section~\ref{sec: deriving-upper-bound} for details).

Next, we  apply this abstract framework to the $\cF_{p,\pi}$ and Barron spaces, which play crucial roles in the analysis of RFMs and two-layer neural networks.  Notably, $\cF_{p,\pi}$ with $p=2$ is a RKHS, corresponds to the kernel regime and the case where $p<2$  corresponds to a beyond-kernel regime. Specifically, we derive  concrete duality between approximation  and  estimation for these function spaces  in Section~\ref{sec: dual-equivalence}. Remarkably, the corresponding approximation problems exhibit  great interpretation and  we refer to Theorems~\ref{thm: dual-simple} and \ref{thm: dual} for insights.

Lastly, to demonstrate the versatility and flexibility of our duality framework, we investigate two important yet less-explored problems: 
\begin{itemize}
\item In Section~\ref{sec: random-app}, we conduct a systematic analysis for the performance of RFMs in learning target functions within the $\cF_{p,\pi}$ spaces. We derive sharp bounds for both the approximation error (Theorem~\ref{thm: approximation-bound}) and total error (Theorem~\ref{Fp-estimate}). Particularly, we show that as long as $p>1$, $\cF_{p,\pi}$ can be learned using RFMs  efficiently across all aspects---approximation, estimation, and optimization---without suffering from the CoD. These results indicate that RFMs can  work well even  beyond the traditional kernel regime.

\item In Section~\ref{sec: uniform-estimation}, we tackle the   $L^\infty$ learning of RKHS functions. Under mild conditions, we show that  the $L^\infty$ learnability  of RKHS can be tightly characterized using the kernel's eigenvalue decay (Theorems~\ref{primal-bound} and \ref{Lq-estimation-rkhs}). Specifically, we establish lower bounds for the minimax estimation errors and derive matching upper bounds for the vanilla kernel ridge regression (KRR) estimator.  Our findings indicate that KRR is near-optimal for $L^\infty$ learning of RKHS in both noisy and noiseless settings. Additionally, we provide a refined analysis of periodic kernels over the one-dimensional torus. 
\end{itemize}

\subsection{Related work}
\label{sec: related-work}

\paragraph*{Information-based complexity (IBC).} IBC has emerged as a branch of continuous computational complexity~\cite{traub2003information}, providing an abstract framework to study the intrinsic challenges  of approximating solutions to problems where information is partial, contaminated, and priced. IBC has been widely applied across enormous areas, including solving differential equations~\cites{traub1982iterative,werschulz1991computational},  high-dimensional integration~\cites{wozniakowski1991average,novak2001quantum}, and function/operator approximation~\cites{ritter2000average,novak1988deterministic,novak2008tractability}. 
We direct interested readers  to \cite{traub2009brief} for a brief history of IBC. It is important to note that in the context of IBC, ``information'' refers specifically to the partial information available about the problem, rather than the Shannon information~\cite{shannon1948mathematical} used in information theory.

In the current work, we leverage IBC to study statistical and machine learning methods in supervised learning contexts. Here, the ``partial information'' specifically refers to the availability of only finite inputs and their corresponding function values. Our approaches build on  Erich Novak's work in applying IBC to study function approximation with finite samples, as detailed in his habilitation thesis~\cites{novak1988deterministic}. We extend Novak's work in various aspects. First, we broaden the application of IBC to the analysis of general estimators, moving beyond Novak's focus on interpolation, by considering a modified $I$-complexity. Second, we apply IBC to analyze  popular  machine learning methods, such as RFMs, two-layer neural networks, and KRR. In contrast, the Novak's work  focused on traditional numerical methods. For more details, we refer to Section~\ref{sec: information-based complexity}.

\paragraph*{Learning the $\cF_{p,\pi}$ spaces.} \cites{Celentano2021MinimumCI} also considered the problem of learning  $\cF_{p,\pi}$ using RFMs but limited to the noiseless setting. Moreover, their analysis applies only  to the highly overparameterized regime: $m \gg (n \log n)^2$, where $m$ is the number of features and $n$ is the sample size. In contrast, we eliminate the need of overparameterization and extend the applicability to both  noiseless and noisy settings  by leveraging our duality framework in Section \ref{sec: random-app}.

\paragraph*{The $L^\infty$ learning of RKHS.} 
\cites{Kuo2008MultivariateLA} studied the $L^\infty$ learning of RKHS in the noiseless setting. They considered  algorithms of the form $A(f)=\sumin L_i(f)\varphi_i$, where $\{\varphi_i\}_{i=1}^n$ are a set of functions in $C_0(\cX)$ and $\{L_i\}_{i=1}^n$ are bounded linear functionals that extract linear information from $f$. Notably, when $\{L_i\}_{i=1}^n$ are  evaluation functionals, i.e., $L_i(f)=f(x_i)$ for some $x_i\in\cX$, this algorithm class includes the KRR estimator. Assuming the kernel's eigenvalue decays as $\mu_j\asymp j^{-1-\beta}$, \cites{Kuo2008MultivariateLA} showed that optimal error rate within the preceding algorithm class is $O(n^{-\beta/2})$  and the spectral method $A(f)=\sumin \langle f,e_i\rangle_\rho e_i$ can achieve this rate, where $\{e_i\}_{i\in\NN^+}$ are the corresponding eigenfunctions. Subsequently, the same authors further examined algorithms that use only function-value information in \cites{kuo2009power}, a setting consistent with statistical learning, yet the obtained rate $O(n^{-a\beta/2})$ with $a=(1+\beta)/(3+\beta)<1$ is suboptimal. More recently,
\cites{Pozharska2022ANO} proposed a weighted least-square interpolation method that achieves the optimal rate. However, this method requires to access  eigenfunctions and eigenvalues to build informative ``weights''. In stark contrast, we demonstrate that simple KRR, which uses only the function-value information, can achieve the optimal rate $O(n^{-\beta/2})$.

As for the noisy settings, \cites{sobolevkernel} established an upper bound of $O(n^{-(\beta-\tau)/(2\beta+4)})$ for well-tuned KRR, where $\tau$ can be arbitrarily small. In contrast, we provide a lower bound $\Omega(n^{-\beta/(2\beta+2)})$ and show that this lower bound is sharp for periodic kernels on the one-dimensional torus $\TT$, where KRR achieves the optimal rate. Our result indicates that the upper bound in \cites{sobolevkernel} is not tight.

\paragraph*{Connection with \cites{perturbation}.}  We acknowledge that \cites{perturbation} employed a similar approach, centering their analysis on reinforcement learning through a concept termed perturbation complexity. Their investigation primarily targets  the reward functions within RKHS (i.e., $\cF_{2,\pi}$). In contrast, our work is dedicated to supervised learning  and offers an exhaustive analysis covering the $\cF_{p,\pi}$ spaces for all $p\geq 1$, as well as extending to the Barron spaces. 

\subsection{Organization}
The rest of the paper is organized as follows. In Section \ref{sec: pre}, we clarify  notations and preliminaries.
In Section \ref{sec: information-based complexity}, we develop the abstract duality framework using information-based complexity.
In Section \ref{sec: spaces}, we define the $\cF_{p,\pi}$ and Barron spaces for RFMs and two-layer neural networks, respectively. In Section \ref{sec: dual-equivalence}, we apply the abstract duality framework to the $\cF_{p,\pi}$ and Barron spaces, establishing concrete duality between  approximation and estimation for them. Lastly,   we present two  applications to demonstrate the power of our duality framework: Random feature learning beyond kernel regime in Section \ref{sec: random-app} and the $L^\infty$ learning of  RKHS in Section  \ref{sec: uniform-estimation}.

\section{Preliminaries}  
\label{sec: pre}

\paragraph*{Notations.} 
 Let $\Omega$ be a subset of a Euclidean space. We denote  by $\cP(\Omega)$ the set of probability measures on $\Omega$ and $\cM(\Omega)$ the space of signed Radon measures equipped with the total variation norm $\|\mu\|_{\cM(\Omega)}=\|\mu\|_{\mathrm{TV}}$.  
 For any $\rho \in \cP(\Omega)$, let $\|\cdot\|_{p,\rho}$ be the $L^p(\rho)$ norm. When $p=2$, we write $\|\cdot\|_{\rho}=\|\cdot\|_{2,\rho}$ for simplicity and  let $\langle f, g\rangle_\rho = \int f(x)g(x)\dd \rho(x)$ for any $f,g\in L^2(\rho)$.
Let $C_0(\Omega)$ be the space of continuous functions vanishing at infinity equipped with the uniform norm ($L^\infty$ norm), i.e., $\|g\|_{C_0(\Omega)} := \sup_{x\in \Omega} |g(x)|$.
We shall frequently use $L^\infty$  to denote $\|\cdot\|_{C_0(\Omega)}$ for clarity and one should not confuse $L^\infty$ with $L^\infty(\rho)$.
Given a Banach space $\cA$, denote by $\cA^*$  the dual space of $\cA$ and let  $\cA(r):= \{f \in \cA: \|f\|_{\cA} \leq r \}$.
Given two Banach spaces $\cA$ and $\cB$, we  use $\cA \hookrightarrow \cB$ to denote that $\cA$ can be continuously embedded into $\cB$, i.e., $\sup_{\|f\|_{\cA} \le 1} \|f\|_{\cB} < \infty$. 

 For any $p\in [1,{\infty}]$, denote by $p'$  its H\"older conjugate, i.e., $1/p+1/p'=1$.
 For a vector $v$, denote by $\|v\|_p$ its $\ell^p$ norm and  when $p=2$, we drop the subscript for brevity. For an integer $n$, let $[n]=\{1,2,\dots,n\}$.
We use $a \lesssim b$ to mean $a \leq Cb$ for an absolute constant $C > 0$ and $a \gtrsim b$ is defined analogously. We write $a \asymp b$ if there exist absolute constants $C_1, C_2 > 0$ such that $C_1b \leq a \leq C_2b$. We will also use standard big-O notations, such as $O(\cdot),\Omega(\cdot)$, to hide constants and $\tilde{O}(\cdot)$ and $\tilde{\Omega}(\cdot)$ to further hide logarithimic factors.

\paragraph*{Kernel and RKHS.}
A function $k:\cX\times\cX \mapsto \RR$ is said to be  a (positive definite) kernel if  $k(x,x') = k(x',x), \forall x,x' \in \cX$ and  for any $n \in\NN^{+}$ and $x_1,\dots,x_n \in \cX$, the kernel matrix $(k(x_i,x_j))_{i,j}\in\RR^{n\times n}$ is positive semidefinite. Given a kernel $k$, there exists a unique Hilbert space $\cH_k$ such that  
\begin{itemize}
\item $k(x,\cdot)\in \cH_k, \forall x\in \cX$
\item $\langle f,k(x,\cdot)\rangle_{\cH_k} = f(x), \forall f\in\cH,x\in\cX$.
\end{itemize}
The second property is known as the reproducing property and consequently, $\cH_k$ is often referred to as the RKHS associated with $k$.  

Given a $\gamma\in\cP(\cX)$, consider 
the   integral operator $ \cT_k: L^2(\gamma)\mapsto L^2(\gamma)$ given  by 
$
        \cT_k f = \int_{\cX} k(\cdot,x) f(x) \dd \gamma(x).
$
When  $\int_{\cX} k(x,x)\dd\gamma(x)<\infty$, the Mercer's theorem  guarantees the existence of spectral decomposition for $\cT_k$:
$k(x,x')=\sum_{j=1}^{\infty} \mu_j e_j(x)e_j(x')$~\cites{mercer1909xvi,steinwart2012mercer}. Here $\{\mu_j\}_{j=1}^{\infty}$ are the eigenvalues in a decreasing order and $\{e_j\}_{j=1}^{\infty}$ are the corresponding orthonormal eigenfunctions. 
Note that this decomposition depends on the  distribution $\gamma$. When necessary, we will denote by $\mu_j^{(k,\gamma)}$ the $j$-th eigenvalue  to explicitly highlight the dependence on  $k$ and $\gamma$. In this paper, we always assume the existence of spectral decomposition for kernels, referring them as \textit{Mercer kernels}. Moreover, it holds that 
\[
\|f\|_{\cH_k}^2=\sum_{j=1}^\infty \mu_j^{-1}\langle f, e_j\rangle_\gamma^2.
\]
For more materials about RKHS, we refer to \cites{steinwart2008support,wainwright2019high}.

\paragraph*{Periodic kernels.} Let $\TT=[0,1)$ be the one-dimensional torus $\RR/\ZZ$, with the metric $|x-y| = \min\{ y - x, 1 + x- y\}$ for $0 \le x \le y < 1$. Consider the kernel $k:\TT\times \TT\mapsto\RR$ given by
\begin{equation}\label{eq:periodic}
 k(x,x') = \sum_{j\in \ZZ}\mu_j e_j(x)\overline{e_j(x')} =\sum_{j\in\ZZ}\mu_j e^{2\pi\ii j(x-x')}, 
\end{equation}
where for $j\in\ZZ$, $e_j(x) := \exp(2\pi \ii j x)$ is the $j$-th eigenfunction  with $\ii$ denoting the imaginary unit. We assume $\mu_j=\mu_{-j}$ to ensure  $k$ is real-valued.
Additionally, we assume  $\mu_j \asymp (|j|+1)^{-\beta-1}$, for which the associated RKHS  is then the  Sobolev space of order $(1+\beta)/2$ on $\TT$~\cites{wahba1990spline}. We will  consider this type of kernels in Sections \ref{sec: random-app} and \ref{sec: uniform-estimation} to obtain sharper characterizations.

\section{Estimation errors and information-based complexity} 
\label{sec: information-based complexity}

Throughout this paper, we consider the problem of learning a target function $f^*:\cX\mapsto\RR$ using the finite training data $S=\{(x_i,f^*(x_i)+\xi_i)\}_{i=1}^n$.  Here, each $x_i\in\cX$ is an input   and $\xi_i$ represents the  label noise. 
We assume the noise to be independent of inputs and the inputs are sampled by $x_i\sim\rho_i$ for each $i\in [n]$. Let $\bar{\rho} = \frac{1}{n}\sum_{i=1}^n \rho_i$ be the average distribution. This setting includes two classical statistical setups:
\begin{itemize}
\item \textbf{Random design.} The inputs $\{x_i\}_{i=1}^n$ are \iid samples drawn from an input distribution $\rho\in\cP(\cX)$, under which $\bar{\rho}=\rho$. This corresponds to the statistical learning setting~\cites{hastie2009elements}.
\item \textbf{Fixed design.} The inputs  $\{x_i\}_{i=1}^n$ are pre-determined and non-stochastic, for which $\bar{\rho} = \frac{1}{n}\sum_{i=1}^n \delta_{x_i} := \hat{\rho}_n$. This setting is relevant in applications such as experimental design, bandits, and scientific computing, where we can design the inputs using certain strategy.
\end{itemize}

To quantify the performance of a learning algorithm, we introduce three function spaces or norms: the space of target functions $\cF$, the norm used to measure training errors  $\cQ$, and  the norm used to measure test performance  $\cM$. 
\begin{assumption}\label{assumption: metrics}
Let $\cF,\cQ,\cM$ be Banach spaces of functions defined on $\cX$. Assume further that $\cF \hookrightarrow \cM$ and $\cF \hookrightarrow \cQ$.
\end{assumption}

We call each measurable map from $(\cX \times \RR)^n $ to $\cF$ an estimator  and denote by $\cA_n$ the set of all possible estimators. Given a $T\in\cA_n$, its performance is evaluated by the estimation error:
\begin{align}\label{eqn: estimation-err}
   \|T\left(\{(x_i,f^*(x_i)+\xi_i)\}_{i=1}^n\right) - f^*\|_{\cM}.
\end{align}
Additionally,  we also consider the minimax estimation error:
\begin{align} \label{random-data}
    \inf_{T \in \cA_n}\sup_{\|f\|_\cF \leq 1} \EE\|T(\{(x_i, f(x_i)+\xi_i)\}_{i=1}^n) - f \|_{\cM},
\end{align}
where the expectation is taken over the sampling of the training data $S$. This minimax estimation error serves as a benchmark for the best possible performance that an estimator can achieve, helping to determine if a specific estimator is information-theoretically optimal.

It is evident that estimation error depends on the complexity of the function class $\cF$. 
In this paper, we employ  the following complexity, considered in \cites{novak1988deterministic}:
\begin{equation}\label{eqn: I_0}
    \II_n(\cF) = \inf_{x_1,\dots,x_n\in\cX}\sup_{f\in\cF, f(x_i)=0} \|f\|_{C_0(\cX)}.
\end{equation}
We refer \eqref{eqn: I_0} as the $I$-complexity of $\cF$.
 Intuitively, $\II_n(\cF)$ measures the size of $\cF$ by  the largest   $L^\infty$ norm of functions within $\cF$ that interpolate the zero function at arbitrary $n$ points. Moreover, $\II_n(\cF)$ can provide a tight bound on the minimax estimation error for learning $\cF$ in a noiseless setting, as detailed in the proposition on page 22 of \cites{novak1988deterministic}.  

However, using definition  \eqref{eqn: I_0} presents two major inconveniences in statistical learning setup. First,  the complexity ideally should depend on the specific information of the given $n$ inputs, which, in the setting of  random design,  is determined by the input distribution $\rho$. Yet,
$\II_n(\cF)$ evaluates the worst-case scenario  across all possible $n$ inputs. Secondly, this definition focuses exclusively  on the interpolation regime, whereas interpolation should generally be avoided to prevent overfitting in the presence of noise.
To overcome these limitations, we consider the following modified  $I$-complexity:

\begin{definition} [$I$-complexity]\label{def-complexity} For $\epsilon \geq 0$,  we define
$
\II_{\cQ,\cM}(\cF,\epsilon) =  \sup_{f\in \cF,\, \|f\|_{\cQ} \leq \epsilon } \|f\|_{\cM}.
$
\end{definition}
It is worth noting that this definition  allows to measure training and test errors with different metrics. This flexibility is particularly useful in applications, where the test performance is evaluated under metrics different from the one used during training. For instance,
in Section \ref{sec: uniform-estimation}, we consider the scenario where  the model is trained via minimizing square loss, but the test performance is evaluated under the $L^\infty$ norm. In the subsequent sections, we will demonstrate that this $I$-complexity can bound estimation errors in various settings. 

We note that a similar definition appears in the optimal recovery theory, where  $\II_{\cQ,\cM}(\cF,\epsilon) $ is referred to as the intrinsic error; see, e.g., \cite{micchelli1977survey}. However, it is important to point out that in optimal recovery, uniform bounded noise is typically assumed, and $\epsilon$ represents the noise level. In our context, $\epsilon$ should be treated as a varying threshold, which may differ depending on the specific setting. For instance, in the derivation of lower bound for the noisy setting in Section~\ref{sec: Le Cam}, we set $\epsilon=\sigma/\sqrt{n}$, where $\sigma$ represents the magnitude of the random noise.

The next theorem shows that $I$-complexity characterizes precisely an approximation problem in dual spaces:

\begin{theorem}\label{thm:dual_abstract}
   Under Assumption~\ref{assumption: metrics},  it holds for any $\epsilon \geq 0$ that
    \begin{equation}\label{eqn:dual_abstract}
   \II_{\cQ,\cM}(\cF(1),\epsilon) = \sup_{\|g\|_{\cM^*} \le 1}\inf_{h \in \cQ^* }[\|g - h\|_{\cF^*} + \epsilon\|h\|_{\cQ^*}].
    \end{equation}
\end{theorem}
The right-hand side (RHS)  quantifies  how  well $\cM^*$ can be approximated by $\cQ^*$ with respect to the $\cF^*$ norm. The left-hand side (LHS) is the $I$-complexity that determines the estimation errors of learning $\cF$, as detailed later. As such, this theorem establishes a general dual equivalence between estimation and approximation. Later, we will demonstrate how  this  duality can be applied to study the learning properties of   RFMs and two-layer neural networks in Section~\ref{sec: dual-equivalence}.

The proof of Theorem~\ref{thm:dual_abstract} is based on the following lemma. 
\begin{lemma}\label{lem:dual_core}
    Assume that $\cF \hookrightarrow \cQ$. Then for any $g \in \cF^*$,
    \begin{equation}\label{eqn:dual_core}
        \sup_{\|f\|_{\cF}\le 1, \|f\|_{\cQ} \le \epsilon} g(f) = \inf_{h \in \cQ^*}\left[\|g - h\|_{\cF^*} + \epsilon\|h\|_{\cQ^*}\right].
    \end{equation}
\end{lemma}

Here we only provide an intuitive proof by assuming the validity of  Von Neumann's Minimax Theorem \cites{willem2012minimax}. The rigorous proof is deferred to  Appendix \ref{sec: proof-lemma-dual}, which is based on the Fenchel-Rockafellar theorem (see, e.g.,  \cite[Theorem 1.12]{brezis2011functional}).
\begin{proof}[An intuitive proof of Lemma~\ref{lem:dual_core}]
    Since $\cF \hookrightarrow \cQ$, we have $\cQ^* \hookrightarrow \cF^*$.  Therefore, the RHS of \eqref{eqn:dual_core} is well-defined.
    Observe that
    \begin{equation*}
        \inf_{h \in \cQ^*}[\|g - h\|_{\cF^*} + \epsilon\|h\|_{\cQ^*}] = \inf_{h \in \cQ^*}\sup_{\|f\|_{\cF} \le 1}[g(f)-h(f) + \epsilon\|h\|_{\cQ^*}].
    \end{equation*}
    Note that $\cL(h,f) := g(f) - h(f) + \epsilon\|h\|_{\cQ^*}$ is  convex in $h$ and concave in $f$. Then, by applying Von Neumann's Minimax Theorem, we can exchange the order of $\inf_{h \in \cQ^*}$ and $\sup_{\|f\|_{\cF} \le 1}$, resulting in
    \begin{align*}
         \inf_{h \in \cQ^*}[\|g - h\|_{\cF^*} + \epsilon\|h\|_{\cQ^*}] &=\sup_{\|f\|_{\cF} \le 1}\inf_{h \in \cQ^*}[ g(f) +  \epsilon\|h\|_{\cQ^*} - h(f)]\\ 
         &=\sup_{\|f\|_{\cF} \le 1} \left(g(f) +  \inf_{h \in \cQ^*}\left[\epsilon\|h\|_{\cQ^*} - h(f)\right]\right).
    \end{align*}
Then, the proof is completed by noting
    \begin{align*}
        \inf_{h \in \cQ^*}[ \epsilon\|h\|_{\cQ^*} - h(f)]&=\inf_{\gamma\in\RR_{\geq 0}, \|h\|_{\cQ^*}=1}\gamma[ \epsilon - h(f)] \\ 
        &=\inf_{\gamma\in\RR_{\geq 0}}\gamma[\epsilon-\|f\|_{\cQ}]= \begin{cases}
            0\, &\text{ if }\|f\|_{\cQ} \le \epsilon \\
            -\infty \, &\text{ if }\|f\|_{\cQ} > \epsilon
        \end{cases}
        .
    \end{align*}
\end{proof}

\begin{proof}[Proof of Theorem \ref{thm:dual_abstract}]
Since $\cF \hookrightarrow \cM$, we have $\cM^*\hookrightarrow\cF^*$. If $g\in\cM^*$, we also have $g\in \cF^*$. Then, by Lemma~\ref{lem:dual_core}, 
\begin{equation*}
     \sup_{\|f\|_{\cF}\le 1, \|f\|_{\cQ} \le \epsilon} g(f) = \inf_{h \in \cQ^*}\left[\|g - h\|_{\cF^*} + \epsilon\|h\|_{\cQ^*}\right].
\end{equation*}
Taking the supremum over $\{g \in \cM^*\,:\, \|g\|_{\cM^*} \le 1\}$
on the both sides gives
\begin{align*}
    \sup_{\|g\|_{\cM^*} \le 1}\inf_{h \in \cQ^* }[\|g - h\|_{\cF^*} + \epsilon\|h\|_{\cQ^*}] &=\sup_{\|g\|_{\cM^*} \le 1} \sup_{\|f\|_{\cF}\le 1, \|f\|_{\cQ} \le \epsilon} g(f)\\
    &=  \sup_{\|f\|_{\cF}\le 1, \|f\|_{\cQ} \le \epsilon} \sup_{\|g\|_{\cM^*} \le 1}g(f)\\
   &\stackrel{\mathrm{(i)}}{=} \sup_{\|f\|_{\cF}\le 1, \|f\|_{\cQ} \le \epsilon} \|f\|_{\cM} =  \II_{\cQ,\cM}(\cF,\epsilon),  
\end{align*}
where $\mathrm{(i)}$ uses $\|f\|_{\cM^{**}}=\|f\|_{\cM}$ for $f\in\cM$.
\end{proof}


\subsection{Minimax errors in the noiseless setting: tight bound}\label{sec: I-complexity-noiseless}
We first show that  $I$-complexity surprisingly provides a {\it tight} bound on the minimax estimation errors in the noiseless setting.
\begin{proposition}\label{lemma-data-dependent}
 For fixed $x_1,\cdots,x_n \in \cX$, recall that $\bar{\rho}=\fn\sumin\delta_{x_i}$. Then, 
 \begin{align*}
      \inf_{T_n \in \cA_n}\sup_{\|f\|_{\cF} \leq 1}     \|T(\{(x_i, f(x_i))\}_{i=1}^n) - f \|_{\cM} \asymp   \II_{L^2(\bar{\rho}),\cM}(\cF(1),0).
 \end{align*}
Moreover, this optimal error can be achieved 
by the minimum-norm interpolation (MNI):
\[
       S(\{(x_i,y_i)\}_{i=1}^n) = \argmin_{h \in \cF,\,h(x_i) = y_i} \|h\|_{\cF}.
\]
\end{proposition}
We remark that although MNI is optimal in the noiseless setting, in practice, we may still favor other estimators due to potential challenges in implementing MNI. Nevertheless, Proposition~\ref{lemma-data-dependent} provides a benchmark to assess the performance of a given estimator  in such noiseless setting. 

It is also worth noting that nowadays, statistics and ML methods have been increasingly applied  in scientific computing (aka AI for science)~\cite{weinan2021dawning}. In many such applications, e.g., the numerical approximation of partial differential equation solutions~\cites{sirignano2018dgm,yu2018deep,han2018solving,raissi2019physics}, the data are generated via computer simulations where noise can be effectively  considered negligible. Therefore, a refined analysis of estimation error in the noiseless setting becomes crucial, although existing analysis primarily focused on the noisy setting.
For instance, recent research, such as studies~\cites{cui2021generalization} and \cites{li2024asymptotic}, have delved into KRR in such setting.

\subsection{Minimax errors in the noisy setting: lower bound}
\label{sec: Le Cam}

The next proposition shows that $I$-complexity can also give a lower bound for the minimax estimation error in the noisy setting:
\begin{proposition} \label{lemma-random-data}
 Suppose $x_i \sim \rho_i, \xi_i \stackrel{iid}{\sim} \cN(0, \sigma^2)$ for $i\in [n]$. Recall $\bar{\rho} = \frac{1}{n}\sum_{i=1}^n \rho_i$. We have
\begin{align*}
      \inf_{T \in \cA_n}\sup_{\|f\|_\cF \leq 1}      \mathbf{E}\|T(\{(x_i, f(x_i)+\xi_i)\}_{i=1}^n) - f \|_{\cM} \gtrsim \II_{L^2(\bar{\rho}),\cM}\left(\cF(1),\frac{\sigma}{\sqrt{n}}\right)
\end{align*}
\end{proposition}
\begin{proof}
Let $d_n(T,f):=\|T(\{(x_i,f(x_i)+\xi_i)\})-f\|_{\cM}$. Then, the  Le Cam's  method \cite[Chapter 15.2]{wainwright2019high} gives the following lower bound:
\begin{align} \label{uvw}
  \inf_{T \in \cA_n} \sup_{\|f\|_\cF \leq 1} \EE[d_n(T,f)] \geq \frac{\|f_1-f_2\|_\cM}{2}\left(1 - \frac{\sqrt{n}\|f_1-f_2\|_{\brho}}{2\sigma} \right) \quad \forall f_1,f_2\in\cF(1).
\end{align}
Noticing 
$
\II_{L^2(\brho),\cM}\left(\cF(1), \frac{\sigma}{\sqrt{n}}\right)=\sup_{\|f\|_{\cF} \leq 1,\,\|f\|_{\bar{\rho}} \leq \sigma /\sqrt{n}} \|f\|_\cM,
$
there must exist a  $\tf\in \cF(1)$ such that 
\begin{align*}
         \|\tf \|_{\brho} \leq \frac{\sigma}{\sqrt{n}},          \quad    \|\tf\|_{\cM} \geq \frac{2}{3} \II_{L^2(\brho),\cM}\left(\cF(1), \frac{\sigma}{\sqrt{n}}\right).
\end{align*}
Substituting $f_1 = 0, f_2 = \tf$ into \eqref{uvw} yields
\begin{align*}
     \inf_{T\in\cA_n}\sup_{\|f\|_\cF \leq 1} \EE [d_n(T,f)] \geq \frac{1}{6} \II_{L^2(\brho),\cM}\left(\cF(1), \frac{\sigma}{\sqrt{n}}\right).
\end{align*}
Thus, we complete the proof.
\end{proof}

Note that the lower bound~\eqref{uvw} is standard from Le Cam's method and  for completeness, we have provided a proof of it in Appendix~\ref{sec: proof-pro-noisy}. It is evident that our lower bound essentially follows the same strategy as the Le Cam's method. The key step involves constructing a two-point packing, denoted as $\{f_1,f_2\}$, such that the norm $\|f_1 - f_2\|_{\brho}$ is as small as possible, while the norm $\|f_1 - f_2\|_{\cM}$ is as large as possible.  Constructing such hard functions explicitly can be highly challenging, especially when $\cM$ is a norm different from $L^2$. Proposition~\ref{lemma-random-data} provides a way to circumvent  explicit constructions, requiring only the $I$-complexity, which is further equated to an approximation problem by Theorem~\ref{thm:dual_abstract}. Therefore, instead of explicitly constructing hard functions, we can leverage tools from approximation theory to establish the lower bounds.

Note that Le Cam's method can sometimes yield loose bounds in certain cases.  
In contrast, Fano's method, which uses appropriate multiple-point  packing strategy, typically gives sharp bounds~\cite{wainwright2019high}. Moreover, it was shown in \cite{yang1999information} that the explicit packing-set construction  can be obviated by using the information of global metric entropy, known as Yang-Barron version of Fano's method. In spirit, our Proposition~\ref{lemma-random-data}  can be seen as an analogous adaptation to Le Cam's method.   Nevertheless, in Section~\ref{sec: uniform-estimation}, by using Proposition~\ref{lemma-random-data} along with the duality given by Theorem~\ref{thm:dual_abstract}, we obtain tight bounds on the minimax error for the $L^\infty$ learning of RKHS.

\subsection{Deriving upper bounds} 
\label{sec: deriving-upper-bound}
In this section, we demonstrate how  $I$-complexity can be useful in deriving upper bounds of estimation errors. Suppose that we have an existing upper bound with respect to the $Q$ norm: $\|\hf-f^*\|_{\cQ}\leq \epsilon$, where $f^*\in\cF(1)$ and $\hf\in\cF(R)$ is an estimate of $f^*$. We can then obtain an upper bound with respect to  the $M$ norm as follows:
\begin{align}  \label{1}
       \|\hat{f}-f^*\|_{\cM}&\leq     \sup_{\|g\|_{\cF} \leq (1+R),\,\|g\|_{\cQ} \leq \epsilon } \|g\|_{\cM} = \II_{\cQ,\cM}(\cF(1+R),\epsilon),
\end{align}
where the first step is due to $\|\hf-f^*\|_{\cM}\leq \|\hf\|_{\cM} + \|f^*\|_{\cM}\leq 1+R$. This illustrates that $I$-complexity can transfer an upper bound of estimation error from one norm to another. Moreover, the $I$-complexity itself can be further estimated using Theorem~\ref{thm:dual_abstract}. The flexibility to select appropriate norms $\cQ$ and $\cM$, tailored to different scenarios, is a significant advantage of this approach. Specifically, in Sections~\ref{sec: random-app} and \ref{sec: uniform-estimation}, we have extensively applied this approach in two concrete scenarios:
\begin{itemize}
\item \textbf{From training to testing.} By setting $\cQ=L^2(\hrho_n)$ and $\cM=L^2(\rho)$, we can transfer a training error estimate to a test error estimate. 
\item \textbf{From $L^2$ to $L^\infty$.} By taking $\cQ=L^2$ and $\cM=L^\infty$, we can obtain an $L^\infty$-estimation error bound from a known $L^2$-estimation error bound.
\end{itemize}

\section{The $\cF_{p,\pi}$ and Barron spaces}
\label{sec: spaces}

In this section, we define various function spaces associated with the feature-based model~\eqref{eqn: rfm}, which will be utilized in our subsequent duality analysis. Recall that
$\phi:\cX \times \cV \to \RR$ is the feature function, where  $\cX$ and $\cV$ represent the input and weight domains, respectively.

We first define the function spaces for RFMs.
\begin{definition} \label{def:F_p}
Given a  $\pi\in \cP(\cV)$ and $1 \leq p \leq {\infty}$, define
the $\cF_{p,\pi}$ space on $\cX$  as
\begin{align*}
   \cF_{p,\pi}  :=     \left\{f = \int_\cV a(v)\phi(\cdot,v) \d \pi(v):\,a\in L^p(\pi) \right\},
\end{align*}
equipped with the norm
\begin{align}\label{eqn: def1}
        \|f\|_{\cF_{p,\pi}} = \inf_{a \in A_f} \|a\|_{p,\pi},\quad \text{where}\,A_f =\left\{a \in L^p(\pi):\,f =\int_\cV a(v)\phi(\cdot,v)\d \pi(v) \right\}.
\end{align}
\end{definition}
Note that for a given $f\in \cF_{p,\pi}$,  its representation $a(\cdot)$ may not be unique. Taking the infimum over all possible representations in \eqref{eqn: def1} ensures the norm  to be measured using the optimal representation. This  makes the $\cF_{p,\pi}$  norm well-defined and $\cF_{p,\pi}$ is isometrically isomorphic to the  quotient space $L^p(\pi)/A_0$, where $A_0 = \{a \in L^p(\pi): \int_{\cV} a(v)\phi(\cdot,v)\dd\pi(v) = 0\}$. Moreover, it trivially follows from  H\"older's inequality  that
$$
\cF_{\infty, \pi}\subset \cF_{p,\pi}\subset \cF_{q,\pi}\subset \cF_{1,\pi}\quad \text{ for } 1\leq q\leq p\leq \infty.
$$

We highlight that  $\cF_{p,\pi}$ is a natural function space for studying the approximation power of   RFMs.  Specifically, by setting $a_j=a(v_j)$ and $v_j\stackrel{iid}{\sim}\pi$ for $j\in [m]$ in Eq.~\eqref{eqn: rfm},  the law of large numbers (LLN) implies that for each $x\in\cX$, as  $m\to{\infty}$, 
$$
    \frac{1}{m}\sumjm a_j \phi(x,v_j) \to \int_{\cV} a(v)\phi(x,v)\dd \pi(v),
$$
indicating that the RFM converges to a $\cF_{p,\pi}$ function. Moreover,
the Marcinkiewicz-Zygmund type   LLN~\cite[Theorem 2.5.8]{durrett2019probability} suggests that
this convergence occurs at a rate of $O(m^{-1/p'})$ provided $\|a\|_{p,\pi}<\infty$. This insight allows us to derive the rate of approximating $\cF_{p,\pi}$ using RFMs, as the function's representation satisfies $\|a\|_{p,\pi}<\infty$. We refer to   Section \ref{sec: random-app} for details.

In addition, we draw attention to the special case where $p=2$. As demonstrated in 
\cites{rahimi2008uniform},  $\cF_{2,\pi}$ is an RKHS: $\cF_{2,\pi}=\cH_{k_\pi}$ with the kernel $k_\pi$ given by
\begin{equation}\label{eqn: x1}
k_{\pi}(x,x') := \int_\cV \phi(x,v) \phi(x',v)\d \pi(v).
\end{equation}
Furthermore, we establish below that for any Mercer kernel, the associated RKHS is also a $\cF_{2,\pi}$. Thus, studying  $\cF_{2,\pi}$ is relevant for understanding general kernel methods.

\begin{lemma}\label{lemma: F2-rkhs}
Let $k:\cX\times\cX\mapsto\RR$ be a kernel satisfying $\int_{\cX} k(x,x)\dd\gamma(x)<\infty$ for some full-support distribution $\gamma\in\cP(\cX)$. Then,
there exists a symmetric feature function $\phi: \cX \times \cX \to \RR$, i.e., $\phi(x,v)=\phi(v,x)$, such that $k(\cdot,\cdot)$ takes the form  \eqref{eqn: x1} with $\pi = \gamma$. 
\end{lemma}
\begin{proof}
Let 
$
k(x,x') = \sum_{i=1}^{\infty} \mu_ie_i(x)e_i(x')
$
be the spectral decomposition with respect to $\gamma$.
Choosing $\phi(x,v) = \sum_{i=1}^{\infty} \sqrt{\mu_i}e_i(x)e_i(v)$, we have
\vspace*{-.5em}
\begin{align*}
\int_\cX \phi(x,v)\phi(x',v) \d \gamma(v) & = \sum_{i=1}^{\infty} \sum_{j=1}^{\infty} \sqrt{\mu_i\mu_j}\int_{\cX}e_i(x)e_i(v)e_j(x')e_j(v) \d \gamma(v) \\
& = \sum_{j=1}^{\infty} \mu_je_j(x)e_j(x') = k(x,x').
\end{align*}
\end{proof}

In the proofs of Section \ref{sec: uniform-estimation}, we will frequently use the above construction of feature functions to study the $L^\infty$ estimation of RKHS.

We now turn to define function spaces for two-layer neural networks.
\begin{definition}[Barron space] \label{def:F_1}
We define the Barron space $\cB$ on $\cX$ as
\begin{align*}
   \cB  :=     \left\{f = \int_\cV \phi(\cdot,v) \d \mu(v):\,\mu \in \cM(\cV) \right\},
\end{align*}
equipped with the norm
\begin{align}\label{eqn: def2}
        \|f\|_{\cB} = \inf_{\mu \in M_f} \|\mu\|_{\mathrm{TV}},\quad \text{where}\,M_f =\left\{\mu \in \cM(\cV):\,f =\int_\cV \phi(\cdot,v)\d \mu(v) \right\}.
\end{align}
\end{definition}
The definition above, originally proposed in \cites{bach2017breaking}, is equivalent to the variation-based definition used in \cites{barron1992neural,devore1998nonlinear,kurkova2001bounds,siegel2021characterization}. When $\phi(x,v)=\max(v^\top x,0)$, it is also equivalent to the moment-based definition proposed in  \cites{ma2019priori,weinan2021barron}.
All of these definitions are somehow equivalent \cites{wojtowytsch2022representation,siegel2021characterization}. In our duality analysis, we specifically adopt the above definition as our analysis will use the fact that $\cM(\cV)$ is the dual space of $C_0(\cV)$. However, to better understand the relationship between the $\cB$ and $\cF_{p,\pi}$ spaces, it is preferred to take the moment-based definition. Specifically, by extending \cite[Proposition 3]{weinan2021barron}, we have the following lemma, whose proof can be found in Appendix \ref{sec: proof-union-barron}.

\begin{lemma}[An alternative definition of Barron spaces]\label{lemma: barron-space-alternative}
For any $1 \leq p \leq {\infty}$, we have 
\begin{equation}\label{eqn: barron-moment}
    \cB = \cup_{\pi\in\cP(\cV)} \cF_{p,\pi},\quad \|f\|_{\cB} = \inf_{\pi \in \cP(\cV)}\|f\|_{\cF_{p,\pi}}.
\end{equation}
\end{lemma}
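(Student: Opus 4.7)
The plan is to prove the identity by a two-sided inclusion, constructing explicit correspondences between Radon-measure representations (as in the Barron-space definition) and $L^p(\pi)$ representations. The core idea is that, since $\pi$ is a probability measure, the polar decomposition of a signed measure produces a representation whose amplitude is \emph{constant}, so all $L^p(\pi)$ norms agree; conversely, any $L^p(\pi)$ representation gives rise to a signed measure whose total variation is bounded by the $L^p(\pi)$ norm via H\"older.

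\textbf{Step 1 ($\supseteq$ and the inequality $\|f\|_{\cB} \leq \inf_\pi \|f\|_{\cF_{p,\pi}}$).} Fix any $\pi \in \cP(\cV)$ and any $a \in L^p(\pi)$ with $f = \int a(v)\phi(\cdot,v)\,d\pi(v)$. Define the signed Radon measure $\mu$ by $d\mu = a\, d\pi$, so that $f = \int \phi(\cdot,v)\, d\mu(v)$, showing $f \in \cB$. Since $\pi$ is a probability measure, H\"older's inequality gives
\begin{equation*}
    \|\mu\|_{\mathrm{TV}} = \int_\cV |a(v)|\, d\pi(v) = \|a\|_{L^1(\pi)} \leq \|a\|_{L^p(\pi)}.
\end{equation*}
Taking the infimum over admissible $a$ yields $\|f\|_{\cB} \leq \|f\|_{\cF_{p,\pi}}$, and then over $\pi$.

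\textbf{Step 2 ($\subseteq$ and the reverse inequality).} Let $f \in \cB$ and fix any $\mu \in \cM(\cV)$ with $f = \int \phi(\cdot, v)\, d\mu(v)$. Assume $\mu \neq 0$ (the case $\mu=0$ forces $f\equiv 0$ and both sides vanish). Using the polar (Jordan--Hahn) decomposition, write $d\mu = h\, d|\mu|$ where $h: \cV \to \{-1,+1\}$ is measurable and $|\mu|$ is the total variation measure. Now set
\begin{equation*}
    \pi := \frac{|\mu|}{\|\mu\|_{\mathrm{TV}}} \in \cP(\cV), \qquad a(v) := \|\mu\|_{\mathrm{TV}}\, h(v).
\end{equation*}
Then $a\, d\pi = h\, d|\mu| = d\mu$, so $f = \int a(v)\phi(\cdot,v)\, d\pi(v)$. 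Because $|a(v)| \equiv \|\mu\|_{\mathrm{TV}}$ $\pi$-almost everywhere and $\pi$ is a probability measure, $\|a\|_{L^p(\pi)} = \|\mu\|_{\mathrm{TV}}$ for every $p \in [1,\infty]$. Consequently $\|f\|_{\cF_{p,\pi}} \leq \|\mu\|_{\mathrm{TV}}$, and taking the infimum over $\mu$ followed by the infimum over the resulting $\pi$'s gives $\inf_{\pi \in \cP(\cV)} \|f\|_{\cF_{p,\pi}} \leq \|f\|_{\cB}$.

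\textbf{Expected obstacle.} There is no real difficulty; the argument is essentially a bookkeeping exercise. The only mildly subtle point is ensuring the construction in Step~2 produces a valid probability measure even when $p = \infty$. This is handled precisely by the polar decomposition, which gives a unit-modulus density $h$ and therefore a uniform $|a| \equiv \|\mu\|_{\mathrm{TV}}$; without this trick, one would only recover the bound for $p=1$ via the trivial choice $a = d\mu/d\pi$ for arbitrary $\pi$ dominating $|\mu|$. A minor care is also required for the trivial case $\mu = 0$, which is dealt with by inspection.
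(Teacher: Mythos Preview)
Your proof is correct and follows essentially the same approach as the paper: both directions rely on the identical constructions (from $a\in L^p(\pi)$ to $\mu$ via $d\mu=a\,d\pi$ with H\"older, and from $\mu$ to $(\pi,a)$ via the Jordan/polar decomposition with $\pi=|\mu|/\|\mu\|_{\mathrm{TV}}$ and constant-modulus $a$). Your version is arguably slightly cleaner in that you take infima over arbitrary $\mu$ rather than assuming the infimum in the Barron norm is attained, but the substance is the same.
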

It is shown that $\cB$ is the union of all $\cF_{p,\pi}$ spaces of a fixed $p\in [1,{\infty}]$. Surprisingly, the union is independent of the value of $p$. One can also treat \eqref{eqn: barron-moment} as an alternative definition of Barron spaces, which extends the definition in \cites{ma2019priori,weinan2021barron} to general feature functions.

By choosing $p=2$ and noting $\cF_{2,\pi}=\cH_{k_\pi}$, we have $\|f\|_{\cB}=\inf_{\pi\in \cP(\cV)} \|f\|_{\cH_{k_\pi}}$. This  implies that two-layer neural networks can be viewed as \emph{adaptive kernel methods} \cites{ma2019priori}.
Additionally, by setting $p=1$, we find that $\cB=\cup_{\pi\in \cP(\cV)} \cF_{1,\pi}$, suggesting that $\cB$ is substantially larger than the $L^1$-type space $\cF_{1,\pi}$. In particular, if $\pi$ admits a density on $\cV$, then $\cF_{1,\pi}$ does not encompass functions implemented by finite-neuron neural networks. These observations collectively underscore that the Barron space $\cB$ should not be trivially interpreted as a $L^1$-type space and the feature adaptivity plays a more critical role.

\begin{remark}
We are aware of that Definition \ref{def:F_p} and \ref{def:F_1} can be unified using the RKBS framework \cites{zhang2009reproducing,bartolucci2023understanding}. However, adopting such an approach would make the definitions, and particularly the statements of our duality results in Section \ref{sec: dual-equivalence}, overly abstract and difficult to comprehend. Therefore, we will adhere to the  definitions above and direct interested readers to the concurrent work \cites{spek2022duality}, where the RKBS structures of these spaces are discussed.
\end{remark}

\section{The dual equivalences for $\cF_{p,\pi}$ and Barron spaces}
\label{sec: dual-equivalence}

In this section, we apply the general duality~\eqref{eqn:dual_abstract} to $\cF_{p,\pi}$ and Barron spaces. This establishes a duality framework that links approximation and estimation for the analysis of RFMs and two-layer neural networks. 
To state our result, we need to define the conjugate spaces:

 \begin{definition}[Conjugate space]  \label{def:conj}
 Let $\phi:\cX\times\cV\mapsto\RR$ be the feature function.
For any $\rho \in \cP(\cX)$, let $\tilde{\cF}_{q,\rho}$ be a function space over the weight domain $\cV$, defined as
\begin{align*}
    \widetilde{\cF}_{q,\rho} = \left\{g = \int_\cX b(x)\phi(x,\cdot)\d \rho(x): \,b\in L^q(\rho) \right\}
\end{align*}
equipped with the norm 
\begin{align*}
        \|g\|_{\widetilde{\cF}_{q,\rho}} = \inf_{b \in L^q(\rho)} \|b\|_{q,\rho},\quad \text{where}\,B_f =\left\{b \in L^q(\rho):\,g =\int_\cX b(x)\phi(x,\cdot)\d \rho(x) \right\}.
\end{align*}
Analogously, we let
$$
     \widetilde{\cB} = \left\{g = \int_\cX \phi(x,\cdot)\d \mu(x): \mu \in \cM(\cX) \right\}
$$
and define the $\widetilde{\cB}$ norm in the same way as \eqref{eqn: def2}.
 \end{definition}
It is important to clarify  that the conjugate spaces $\tilde{\cF}_{q,\rho}$ and $\tilde{\cB}$ are defined over the weight domain $\cV$, whereas $\cF_{p,\pi}$ and $\cB$ are defined over the input domain $\cX$. Additionally, these conjugate spaces should not be confused with  dual spaces.

\subsection{The interpolation regime}
For clarity, we begin by presenting the duality for the interpolation regime. Before stating the theorem, recall that for a $p\in [1,\infty]$, we denote by $p'$ its H\"older conjugate, satisfying $1/p+1/p'=1$.
\begin{theorem}\label{thm: dual-simple}
Let $x_1,\cdots,x_n \in \cX$, $\|f\|_{\hL^\infty_n} = \max_{1\le i \le n}|f(x_i)|$, and $p, q \in [1,\infty]$. 
\begin{enumerate}
    \item Suppose $\mathop{\sup}\limits_{\|f\|_{\cF_{p,\pi}} \le 1}[\|f\|_{\hL^\infty_n} + \|f\|_{q,\rho}] < \infty $. Then, we have
\begin{align}\label{dual-eq-4}
    \sup_{\|f\|_{\cF_{p,\pi}} \leq 1,f(x_i) = 0} \|f\|_{q,\rho} = \sup_{\|g\|_{\tilde{\cF}_{q',\rho}} \leq 1 } \inf_{c_1,\cdots,c_n} \left\|g - \sum_{i=1}^n c_i\phi(x_i,\cdot) \right\|_{p',\pi}.
\end{align}    
\item Suppose $\mathop{\sup}\limits_{\|f\|_{\cB} \le 1}[\|f\|_{\hL^\infty_n} + \|f\|_{q,\rho}] < \infty $, Then, we have
\begin{align}\label{dual-eq-5}
    \sup_{\|f\|_{\cB} \leq 1,f(x_i) = 0} \|f\|_{q,\rho} = \sup_{\|g\|_{\tilde{\cF}_{q',\rho}} \leq 1 } \inf_{c_1,\cdots,c_n} \left\|g - \sum_{i=1}^n c_i\phi(x_i,\cdot) \right\|_{C_0(\cV)} 
\end{align} 
\item Suppose $\mathop{\sup}\limits_{\|f\|_{\cF_{p,\pi}} \le 1}[\|f\|_{\hL^\infty_n} + \|f\|_{C_0(\cX)}] < \infty$. Then, we have
\begin{align}\label{dual-eq-6}
      \sup_{\|f\|_{\cF_{p,\pi}}\leq 1,f(x_i) = 0 } \|f\|_{C_0(\cX)} = \sup_{\|g\|_{\widetilde{\cB}} \leq 1 } \inf_{c_1,\cdots,c_n} \left\|g - \sum_{i=1}^n c_i\phi(x_i,\cdot)  \right\|_{p',\pi}.
\end{align}   
\end{enumerate}
\end{theorem}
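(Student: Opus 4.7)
All three identities come from the same duality template, so my plan is to prove part (i) fully and then indicate the three places where parts (ii) and (iii) require substitutions. The template is a \emph{double dualization}: first the target norm, then the coefficient norm.

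\textbf{Step 1 (unpack and first duality).} By the quotient-norm definition of $\cF_{p,\pi}$, taking the supremum over $\|f\|_{\cF_{p,\pi}} \le 1$ with $f(x_i)=0$ is equivalent to taking a supremum over $a\in L^p(\pi)$ with $\|a\|_{p,\pi} \le 1$ and $\langle a, \phi(x_i,\cdot)\rangle_\pi = 0$. Applying the H\"older dual representation $\|h\|_{q,\rho} = \sup_{\|b\|_{q',\rho} \le 1} \langle h, b\rangle_\rho$ together with Fubini, the LHS of \eqref{dual-eq-4} becomes
\[
\sup_{\substack{\|a\|_{p,\pi}\le 1\\ \langle a,\phi(x_i,\cdot)\rangle_\pi = 0}} \sup_{\|b\|_{q',\rho}\le 1} \iint a(v)\, b(x)\, \phi(x,v)\,\dd\rho(x)\,\dd\pi(v).
\]

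\textbf{Step 2 (swap and second duality).} The two suprema act on independent variables, so they commute. Defining $g_b(v):=\int b(x)\phi(x,v)\dd\rho(x)$, the inner problem is
\[
\sup_{\substack{\|a\|_{p,\pi}\le 1 \\ \langle a,\phi(x_i,\cdot)\rangle_\pi = 0}} \langle a, g_b\rangle_\pi,
\]
which is the norm of $g_b$ as a linear functional restricted to $N:=\{a\in L^p(\pi):\langle a,\phi(x_i,\cdot)\rangle_\pi = 0,\ i=1,\dots,n\}$. Since $\mathrm{span}(\phi(x_i,\cdot))$ is finite-dimensional (hence closed) in $L^{p'}(\pi)$, the standard Hahn--Banach duality between a closed subspace and the quotient by its annihilator identifies this sup with
\[
\inf_{c_1,\dots,c_n} \Bigl\| g_b - \sum_{i=1}^n c_i\, \phi(x_i,\cdot) \Bigr\|_{p',\pi}.
\]

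\textbf{Step 3 (change of variables).} The map $b\mapsto g_b$ is linear and surjective onto $\tilde\cF_{q',\rho}$, and by the quotient-norm definition of $\|\cdot\|_{\tilde\cF_{q',\rho}}$ its image of the unit ball of $L^{q'}(\rho)$ agrees up to $\varepsilon$-perturbation with the unit ball of $\tilde\cF_{q',\rho}$. Since the functional $g\mapsto \inf_{c_i}\|g-\sum_i c_i\phi(x_i,\cdot)\|_{p',\pi}$ is $1$-homogeneous, the outer sup $\sup_{\|b\|_{q',\rho}\le 1}$ can be replaced by $\sup_{\|g\|_{\tilde\cF_{q',\rho}}\le 1}$ after sending $\varepsilon\to 0$, yielding \eqref{dual-eq-4}.

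For part~(ii) I would swap the coefficient side: replace $L^p(\pi)$ by $\cM(\cV)$, replace $L^{p'}(\pi)$ by $C_0(\cV)$, and use the Riesz duality $C_0(\cV)^*=\cM(\cV)$ in Step~2 to obtain the $C_0(\cV)$-approximation error. For part~(iii) I would swap the target side instead: replace $L^q(\rho)$ by $C_0(\cX)$, so Step~1 becomes a supremum over $\nu\in\cM(\cX)$ with $\|\nu\|_{\mathrm{TV}}\le 1$, and Step~3 identifies $\nu\mapsto g_\nu(v)=\int \phi(x,v)\dd\nu(x)$ as the surjection onto $\tilde\cB$. The main obstacle is making Step~2 fully rigorous at the non-reflexive endpoints ($p=\infty$ in parts (i)/(iii), and the $\cM(\cV)$ side in part~(ii)): one must invoke the quotient/annihilator duality in the appropriate topology (weak-$*$ for $\cM$), relying on the finite-codimensionality of the constraints $\langle a,\phi(x_i,\cdot)\rangle_\pi=0$ to rule out any closure issues and to ensure the approximation infimum is attained.
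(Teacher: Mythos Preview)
Your proposal is correct and follows essentially the same argument as the paper: H\"older duality to unpack the outer norm, swap the two suprema, and a Hahn--Banach step to identify the inner constrained supremum as a distance in the dual space. The paper obtains Theorem~\ref{thm: dual-simple} by first proving the more general regularized version (Theorem~\ref{thm: dual}) via an explicit Hahn--Banach extension (Lemma~\ref{dual-lemma}) and then specializing to $\nu=\hat\rho_n,\ \epsilon=0$, whereas you invoke the packaged quotient/annihilator duality directly on the interpolation case; your concern about the non-reflexive endpoints is unnecessary, since the identity $\mathrm{dist}_X(g,M)=\sup_{\ell\in M^\perp,\ \|\ell\|_{X^*}\le 1}|\ell(g)|$ for a closed subspace $M\subset X$ requires only Hahn--Banach, not reflexivity, and applies verbatim with $X=L^{p'}(\pi)$ (parts~(i),(iii)) or $X=C_0(\cV)$ (part~(ii)).
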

For each equality stated above, the left-hand side is the $I$-complexity that governs the estimation errors for learning the corresponding function spaces, as detailed in Section \ref{sec: information-based complexity}. The right-hand side is exactly the worse-case error of approximating the corresponding conjugate space using the features $\{\phi(x_i,\cdot)\}_{i=1}^n$. Importantly, these equalities hold for arbitrary $x_1,\dots,x_n\in \cX$, regardless of whether they are sampled from an input distribution or not.
Thus, Theorem \ref{thm: dual-simple} establishes a  duality between estimation and approximation for learning the $\cF_{p,\pi}$ and $\cB$ spaces. 
To better understand these duality, we examine some concrete cases below.

\textbf{Case 1.} Consider the case where $q = 2$. For $p=2$,  we have
 \begin{align}\label{eqn: 12}
      \sup_{\|f\|_{\cF_{2,\pi}} \leq 1,\,f(x_i) = 0}\|f\|_{2,\rho} = \sup_{\|g\|_{\tilde{\cF}_{2,\rho}} \leq 1}\inf_{c_1,\cdots,c_n} \left\|g - \sum_{i=1}^n c_i \phi(x_i,\cdot) \right\|_{2,\pi},
 \end{align}
implying that the $L^2$ estimation of a RKHS is equivalent to the $L^2$ approximation of a associated conjugate RKHS. For the Barron space, we have 
 \begin{equation}\label{eqn: 13}
\sup_{\|f\|_{\cB} \leq 1,\,f(x_i) = 0}\|f\|_{2,\rho} = \sup_{\|g\|_{\tilde{\cF}_{2,\rho}} \leq 1}\inf_{c_1,\cdots,c_n} \left\|g - \sum_{i=1}^n c_i \phi(x_i,\cdot) \right\|_{C_0(\cX)},
 \end{equation}
 implying that the $L^2$ estimation of a  Barron space is equivalent to the $L^\infty$ approximation of the associated conjugate RKHS. 
Comparing \eqref{eqn: 12} and \eqref{eqn: 13} suggests the estimations of RFMs and two-layer neural networks are equivalent to the approximation of the same RKHS, but under different metrics  ($L^2$ vs.~$L^\infty$).  This can also be seen as a quantitative characterization of the size difference between a Barron space and the corresponding RKHS.

\textbf{Case 2.}
When $p = 2, q = {\infty}$,  we have
     \begin{align}
         \sup_{\|f\|_{\cF_{2,\pi}} \leq 1,\,f(x_i) = 0}\|f\|_{C_0(\cX)} = \sup_{\|g\|_{\tilde{\cB}} \leq 1}\inf_{c_1,\cdots,c_n} \left\|g -  \sum_{i=1}^n c_i\phi(x_i,\cdot) \right\|_{2,\pi} 
     \end{align}
The left-hand side corresponds to the $I$-complexity of RKHS with respect to the $L^\infty$ norm. Hence, this equality implies that  the $L^\infty$ estimation error of learning  RKHS can be derived through examining the $L^2$ approximation of the corresponding Barron space. Using this approach, we  conduct a comprehensive analysis of  the $L^\infty$ estimation of RKHS  in Section \ref{sec: uniform-estimation}.

\textbf{Case 3.}
For $1 < p \leq 2,\,q = 2$, we have
    \begin{align}
    \sup_{\|f\|_{\cF_{p,\pi}} \leq 1 }\inf_{c_1,\cdots,c_m} \left\|f - \sum_{j=1}^m c_j\phi(\cdot,v_j) \right\|_{2,\rho}  = \sup_{\|g\|_{\tilde{\cF}_{2,\rho}} \leq 1 ,\,g(v_i)=0} \|g\|_{p',\pi},
    \end{align}
    Suppose that $v_1,\dots,v_m$ are \iid samples drawn from $\pi\in\cP(\cV)$. Then, the left-hand side represents the worst-case error of approximating $\cF_{p,\pi}$ using RFMs. Note that the right-hand side can be bounded using classical empirical process technique, such as Rademacher complexity. Therefore, this duality allows us to study the random feature approximation for target functions in $\cF_{p,\pi}$ with $1<p< 2$, which is strictly larger than the RKHS $\cF_{2,\pi}$. In Section \ref{sec: random-app}, we delve into this observation in detail.

\subsection{The non-interpolation regime}

\begin{theorem}\label{thm: dual}
Let $\nu \in \cP(\cX)$ be a probability distribution and $1 \le p, q, r \le \infty$. If $r = \infty$, we further assume that $\nu$ is a discrete distribution, i.e., the support of $\nu$ is a finite set. 
\begin{enumerate}
    \item  Suppose  $\mathop{\sup}\limits_{\|f\|_{\cF_{p,\pi}} \leq 1}[\|f\|_{r,\nu} + \|f\|_{q,\rho}] < {\infty}$. Then, we have
\begin{align}\label{dual-eq-1}
    \sup_{\|f\|_{\cF_{p,\pi}} \leq 1,\|f\|_{r,\nu} \leq \epsilon} \|f\|_{q,\rho} = \sup_{\|g\|_{\tilde{\cF}_{q',\rho}} \leq 1 }\inf_{h \in \tilde{\cF}_{r',\nu}} \left[ \left\|g - h \right\|_{p',\pi} + \epsilon \|h\|_{\tilde{\cF}_{r',\nu}}\right].
\end{align}    
\item Suppose  $\mathop{\sup}\limits_{\|f\|_{\cB} \leq 1}[\|f\|_{r,\nu} + \|f\|_{q,\rho}]< {\infty}$. Then, we have
\begin{align}\label{dual-eq-2}
    \sup_{\|f\|_{\cB} \leq 1,\|f\|_{r,\nu} \leq \epsilon} \|f\|_{q,\rho} = \sup_{\|g\|_{\tilde{\cF}_{q',\rho}} \leq 1 } \inf_{h \in \tilde{\cF}_{r',\nu}} \left[\left\|g - h \right\|_{C_0(\cV)} + \epsilon \|h\|_{\tilde{\cF}_{r',\nu}}\right].
\end{align} 
\item Suppose   $\mathop{\sup}\limits_{\|f\|_{\cF_{p,\pi}} \leq 1}[\|f\|_{r,\nu}  +\|f\|_{C_0(\cX)}] < {\infty}$. Then, we have
\begin{align}\label{dual-eq-3}
      \sup_{\|f\|_{\cF_{p,\pi}}\leq 1,\|f\|_{r,\nu} \leq \epsilon} \|f\|_{C_0(\cX)} = \sup_{\|g\|_{\tilde{\cB}} \leq 1 } \inf_{h \in \tilde{\cF}_{r',\nu}} \left[\left\|g - h \right\|_{p',\pi} + \epsilon \|h\|_{\tilde{\cF}_{r',\nu}}\right].
\end{align}   
\end{enumerate}
Recall that  $p'$ denotes the H\"older conjugate $p$, satisfying $1/p+1/p'=1$ and $q',r'$ are given analogously. 
\end{theorem}

These results are derived from applying the general duality established in Theorem~\ref{thm:dual_abstract} and Lemma~\ref{lem:dual_core}  to these specific scenarios. For illustrative purposes, we provide  here a proof of \eqref{dual-eq-1} for $q\in [1,\infty)$. We exclude the endpoint $q=\infty$, which requires additional consideration, as $L^q(\rho)^*=L^{q'}(\rho)$ does not hold when $q=\infty$. The complete proof of \eqref{dual-eq-1}, along with those for \eqref{dual-eq-2} and  \eqref{dual-eq-3} which are similar, is deferred to Appendix~\ref{sec: proof-dual-equivalence-r}.

\begin{proof}[Proof of \eqref{dual-eq-1} for  $q\in [1,\infty)$.]
Applying Theorem~\ref{thm:dual_abstract}, we obtain
\begin{equation*}\label{eqn:dual-1}
    \begin{aligned}
    \sup_{\|f\|_{\cF_{p,\pi}} \leq 1,\|f\|_{r,\nu} \leq \epsilon}\|f\|_{q,\rho}&=\sup_{\|\tb\|_{L^q(\rho)^*}\leq 1}\inf_{\tc \in L^r(\nu)^*}\left[\|\tb - \tc\|_{\cF^*_{p,\pi}} + \epsilon\|\tc\|_{L^r(\nu)^*}\right]\\ 
 &   = \sup_{\|\tb\|_{L^q(\rho)^*}\leq 1}\inf_{\tc \in L^r(\nu)^*}\left[\sup_{f\in\cF_{p,\pi}}[\tb(f) - \tc(f)] + \epsilon\|\tc\|_{L^r(\nu)^*}\right]
\end{aligned}
\end{equation*}
By the Riesz representation theorem, for any $\tb \in L^q(\rho)^*$, there exists a unique $b\in L^{q'}(\rho)$ such that $\|\tb\|_{L^q(\rho)^*}=\|b\|_{L^{q'}(\rho)}$ and $\tb(f)=\int_{\cX} b(x)f(x)\dd \rho(x)$ for all $f\in L^q(\rho)$. An analogous representation also holds for $L^r(\nu)^*$.  Applying these representations to $\tilde{b}(f)$ and $\tilde{c}(f)$ gives
\begin{align}\label{eqn:dual-2}
\sup_{\|f\|_{\cF_{p,\pi}} \leq 1,\|f\|_{r,\nu} \leq \epsilon}\|f\|_{q,\rho} = \sup_{\|b\|_{L^{q'}(\rho)}\leq 1}\inf_{c \in L^{r'}(\nu)}\left[I(b,c;f) + \epsilon\|c\|_{L^{r'}(\nu)}\right],
\end{align}
where 
\begin{align}\label{eqn:dual-3}
    I(b,c;f) =   &\sup_{\|f\|_{\cF_{p,\pi}} \leq 1} \Big(\int_{\cX}b(x) f(x)\dd \rho(x) - \int_{\cX}c(x)f(x)\dd \nu(x)\Big)\notag\\
        =& \sup_{\|a\|_{p,\pi \le 1}}\int_{\cX}\left(\int_{\cV}a(v)\phi(x,v)\dd\pi(v)\right)[b(x)\dd\rho(x) - c(x)\dd\nu(x)] \notag\\
        =& \sup_{\|a\|_{p,\pi \le 1}}\int_{\cV}a(v)\left(\int_{\cX}b(x)\phi(x,v)\dd\rho(x) - \int_{\cX}c(x)\phi(x,v)\dd\nu(x)\right) \dd\pi(v) \notag\\
        =&\left\|\int_{\cX}b(x)\phi(x,\cdot)\dd\rho(x) - \int_{\cX}c(x)\phi(x,\cdot)\dd\nu(x)\right\|_{p',\pi}.
\end{align}
Substituting \eqref{eqn:dual-3} into \eqref{eqn:dual-2}, we obtain
\begin{align*}
    &\sup_{\|f\|_{\cF_{p,\pi}} \leq 1,\|f\|_{r,\nu} \leq \epsilon} \|f\|_{q,\rho} \\
    =& \sup_{\|b\|_{q',\rho} \le 1}\inf_{ c \in L^{r'}(\nu)}\left[\Big\|\int_{\cX}b(x)\phi(x,\cdot)\dd\rho(x) - \int_{\cX}c(x)\phi(x,\cdot)\dd\nu(x)\Big\|_{p',\pi} + \epsilon\|c\|_{r',\nu}\right]\\
    =&\sup_{\|g\|_{\tilde{\cF}_{q',\rho}} \leq 1 }\inf_{h \in \tilde{\cF}_{r',\nu}} \left[ \left\|g - h \right\|_{p',\pi} + \epsilon \|h\|_{\tilde{\cF}_{r',\nu}}\right],
\end{align*}
where the last step uses the definition of  $\tilde{\cF}_{q',\rho}$ and $\tilde{\cF}_{r',\nu}$ (see Definition~\ref{def:conj}). 
\end{proof}

Note that taking $\nu = \hat{\rho}_n:=\frac{1}{n}\sum_{i=1}^n \delta_{x_i}$, $r = \infty$ and $\epsilon=0$ recovers Theorem \ref{thm: dual-simple}. 
Comparing with Theorem \ref{thm: dual-simple}, we introduce another duality pairing: the $L^r(\nu)$ constraint on fitting errors ($\|f\|_{r,\nu}\leq \epsilon$)  vs. the $\tilde{\cF}_{r',\nu}$ penalization on the approximator.  Specifically, the right-hand sides in \eqref{dual-eq-1}, \eqref{dual-eq-2}, and \eqref{dual-eq-3} can be interpreted as the errors of approximating various function spaces using RFMs with the norm of coefficients penalized. This becomes more intuitive by taking $\nu=\hat{\rho}_n$, in which case the right-hand side of \eqref{dual-eq-1} becomes
     \[
        \sup_{g\in \tilde{\cF}_{q',\rho}}\inf_{c_1,\dots,c_n} \left(\left\|g-\frac{1}{n}\sumin c_i \phi(x_i,\cdot)\right\|_{p',\pi} + \epsilon \left(\fn\sumin |c_j|^r\right)^{1/r}\right).
     \]
It is worth highlighting that in Theorem~\ref{thm: dual}, the choice of $\nu\in\cP(\cX)$ is flexible. This  allows us to tailor the dual equivalence to different scenarios, providing a  unifying framework to examine the trade-off between approximation and estimation. 
For instance,  by taking $\nu=\bar{\rho}$, the left-hand side becomes the $I$-complexity that serves a lower bound for the  minimax estimation error in the noisy setting, as shown in Proposition~\ref{lemma-random-data}.

 \section{Random feature learning beyond kernel regime} 
 \label{sec: random-app}

In this section, we utilize our duality framework to analyze the learning of $\cF_{p,\pi}$ using RFMs. Previous analyses of RFMs have primarily focused on target functions within  RKHS, corresponding to the special case where $p=2$. In contrast,
our duality framework enables us to explore the entire range of $p\in (1,2]$.

The core idea of our proof is to apply duality to transform the complex  approximation problem to an equivalent estimation problem, which is  much more tractable. In particular, the $L^q$ RFM approximation error of the $\cF_{p}$ space is equivalent to the $L^{p'}$ estimation error of the conjugate space $\tilde{\cF}_{q'}$. By leveraging the (local) Rademacher complexity-based bound to control the estimation error of $\tilde{\cF}_{q'}$, we obtain the RFM approximation error bound for $\cF_p$.

Let $\cG$ be a function class over $\cX$. We define the (worse-case) Rademacher complexity of  $\cG$ by 
\begin{equation}
\rad(\cG) = \max_{x_1,\dots,x_n\in\cX}\EE_{\xi\sim\mathrm{Unif}(\{\pm 1\}^n)}\left[\Big|\sup_{g\in\cG}\fn\sumin g(x_i)\xi_i\Big|\right].
\end{equation}

\begin{assumption}[Conditions on the feature function]\label{as1}
For any  $q\in [2,\infty]$, we assume:
\begin{itemize}
\item[(i)] There exists  $M_q>0$ such that 
$
   \sup_{v\in\cV}\|\phi(\cdot,v)\|_{q,\rho} \leq M_q.
$
\item[(ii)]  There exists $R_q>0$ such that 
$
       \rad(\tilde{\cF}_{q',\rho}(1))  \leq R_q/\sqrt{n}.
$
\end{itemize}
\end{assumption}
Note that $\tilde{\cF}_{q',\rho}$ represents the conjugate space associated with  $\phi$ and consequently, Assumption \ref{as1}(ii) essentially imposes a condition on $\phi$. The following lemma confirms that this condition is indeed satisfied by  commonly-used feature functions. The proof can be found in Appendix \ref{sec: proof-lemma-rad}.

\begin{lemma} \label{rad-lemma}
\begin{itemize}
\item For $q \in [2,\infty)$, suppose
    $
         \sup_{v\in\cV}\|\phi(\cdot,v) \|_{\rho} \leq R.
    $
    Then Assumption \ref{as1}\,$\mathrm{(ii)}$ holds with $R_q \leq \sqrt{q}R$.
\item For $q = {\infty}$, if $\cX$ and $\cV $ are both supported on $\{x: \|x\|_2 \leq R \} $ and $\phi(x,v) = \varsigma(x^\top v)$, where $\varsigma: \RR \to \RR$ is $L$-Lipschitz and $\varsigma(0) = 0$. Then Assumption \ref{as1}\,$\mathrm{(ii)}$  holds with $R_\infty \leq LR^2 $.
\end{itemize}
\end{lemma}

\subsection{Approximation of  $\cF_{p,\pi}$ with random features}

\begin{theorem} \label{thm: approximation-bound}
Suppose $1 \leq p \leq 2$ and $v_j\stackrel{iid}{\sim}\pi$ for $j\in [m]$. If Assumption \ref{as1}  holds, then \wp at least $1-\delta$ over the sampling of $\{v_j\}_{j=1}^m$, there exists an absolute constant $C_1>0$, such that for any $C \geq C_1$, we have for $q\in [2,\infty]$ that
\begin{equation}  \label{app-error}
\begin{aligned}
 & \sup_{\|f\|_{\cF_{p,\pi}} \leq 1}   \inf_{\|c\|_p \leq Cm^{1/p}} \left\|f - \frac{1}{m}\sum_{j=1}^m c_j\phi(\cdot,v_j) \right\|_{q,\rho}  \lesssim \left(\frac{M_q^{p'-2}R_q^2\log^3 m + M_q\log(1/\delta)}{m}\right)^{1/p'}.
 \end{aligned}
\end{equation}
\end{theorem}
The proof is deferred to Appendix \ref{sec: proof-approx-rfm}.
This theorem establishes the uniform approximability of $\cF_{p,\pi}$ with random features. The uniformity implies that,  upon sampling the random features $\phi(\cdot,v_1),\dots, \phi(\cdot,v_m)$, any function in $\cF_{p,\pi}(1)$ can be approximated efficiently using these features.  Note that the approximation rate scales as $O(m^{-(p-1)/p})$, which  deteriorates as $p$ decreases, aligning with the fact that $\cF_{p,\pi}$ becomes larger as $p$ approaches $1$. Moreover, in Theorem~\ref{thm:Fp-approximation-lower-bound}, we provide  lower bounds indicating this approximation rate is sharp, unless additional conditions beyond Assumption~\ref{as1} are imposed on the feature function.


The special case where $p = q = 2$, corresponding to the $L^2$ approximation of RKHS with random features, has been examined in \cites{bach2017equivalence}. Their analysis relies heavily  on the Hilbert structure of $\cF_{2,\pi}$ and the associated spectral decomposition. In contrast,
our result holds for all $p\in (1,2]$ and $q\in [2,\infty]$, with the proof substantially simpler. Additionally, the case where $p=2, q=\infty$ corresponds to the $L^\infty$ approximation of RKHS with random features. \cites{rahimi2008uniform} addressed a similar problem, but their result is limited to  the $L^\infty$ approximation of $\cF_{\infty,\pi}$, which is merely a subset of $\cF_{2,\pi}$.

\textbf{A refined analysis.}
We here construct special feature functions to demonstrate that the upper bound in Theorem~\ref{thm: approximation-bound} is sharp. Specifically, 
consider a probability distribution $\rho \in \cP(\cX)$ and let $\{u_j\}_{j \in \ZZ}$ be an arbitrary orthonormal basis in $L^2(\rho)$. Define the feature function $\phi:\cX\times \TT \mapsto \CC$~\footnote{This complex-valued feature can be straightforwardly transformed to an equivalent real-valued feature; see Appendix~\ref{sec: proof-lower-bound-periodic-RFM}.  We here adopt this complex-valued version for notation simplicity.} 
as follows
\begin{equation}\label{eqn:svd}
    \phi(x,v) = \sum_{j \in \ZZ}\mu_j^{\half}  \overline{u_j(x)} e_j(v),
\end{equation}
where $\mu_j\asymp (|j|+1)^{-\beta-1}$ and $e_j(v) := \exp(2\pi \ii j v)$ is the Fourier basis, as in \eqref{eq:periodic}.  The associated conjugate kernel $\tk_{\rho}(v,v'):=\int \phi(x,v)\overline{\phi(x,v')}\dd\rho(x) = \sum_{j \in \ZZ}\mu_j e_j(v) \overline{e_j(v')}$ is periodic on $\TT$ and $\mu_j$ is the $j$-th eigenvalue of $\tk_\rho$ \wrt $\pi_0:=\mathrm{Unif}(\TT)$. For the feature function~\eqref{eqn:svd}, the specific periodic property  allows us to leverage tools from Fourier analysis to obtain following refined results.

\begin{theorem}\label{thm:Fp-approximation-lower-bound}
	Consider the feature function~\eqref{eqn:svd}.  
    For any $\varphi_1,\dots,\varphi_m \in L^2(\rho)$, $p \in [1,2]$, and $q\in [2,\infty]$, it holds  that
    \begin{equation}\label{Fp-approximation-lower-bound}
        \sup_{\|f\|_{\cF_{p,\pi_0}} \le 1}\inf_{c_1,\dots,c_m}\Big\|f - \sum_{j=1}^mc_j\varphi_j\Big\|_{q,\rho} \gtrsim m^{-\frac{1}{p'}-\frac{\beta}{2}}.
    \end{equation}
    Let $v_1,\dots,v_m$ be \iid samples drawn from $\pi_0$. Then,  with probability $1-\delta$, we have
    \begin{equation}\label{Fp-approximation-upper-bound}
         \sup_{\|f\|_{\cF_{p,\pi_0}} \le 1}\inf_{c_1,\dots,c_m}\Big\|f - \sumjm c_j\phi(\cdot,v_j)\Big\|_{\rho} \lesssim m^{-\frac{1}{p'}-\frac{\beta}{2}}(\log(m/\delta))^{-\frac{\beta+1}{2}}.
    \end{equation}
\end{theorem}

The proof can be found in Appendix~\ref{sec: proof-lower-bound-periodic-RFM}.
Taking $q=2$, this theorem establishes  improved matching bounds for the $L^2$ approximation of the $\cF_{p,\pi_0}$ space associated with the feature function \eqref{eqn:svd}.  To compare with Theorem~\ref{thm: approximation-bound}, we must verify Assumption~\ref{as1}. For $q=2$, we have
\begin{align*}
\sup_{v \in \TT}\|\phi(\cdot,v)\|^2_\rho  &=\sup_{v \in \TT}\sum_{j \in \ZZ} \mu_j |e_j(v)|^2 = \sum_{j \in \ZZ} \mu_j:=\kappa <\infty.
\end{align*}
Thus, Lemma~\ref{rad-lemma} implies that Assumption~\ref{as1} is satisfied for $q=2$ with $M_q = \sqrt{\kappa}$ and $R_q = \sqrt{2\kappa}$. For $q\in (2,\infty]$, we can construct specific $\cX,\rho$, and $\{u_j\}_{j\in\ZZ}$ to satisfy Assumption~\ref{as1}. For brevity, we defer this construction  and the verification to Appendix~\ref{sec: special-feature-construction}. 

Since $\beta$ generally can approach $0$, the lower bound~\eqref{Fp-approximation-lower-bound} indicates that the upper bound in Theorem~\ref{thm: approximation-bound}, which is applicable to general feature functions, is sharp. This holds unless additional conditions, beyond those specified in Assumption~\ref{as1}, are imposed.  

\begin{remark}
The upper and lower bounds together imply that $\cF_{p,\pi}$ with $p \in (1,2)$ is indeed strictly larger than $\cF_{2,\pi}$, the corresponding RKHS.  This confirms that we are operating beyond the traditional kernel regime.
\end{remark}

\subsection{Excess risk bounds}
The following theorem provides an excess risk bound for regularized estimators. 
\begin{theorem}\label{Fp-estimate}
Suppose that $f^*\in \cF_{p,\pi}(1)$ {\ wh $p\in (1,2]$} and  $x_i \stackrel{iid}{\sim}\rho,\,y_i = f^*(x_i) + \xi_i$ with $\xi_i \sim \cN(0,\sigma^2)$. Let $v_1,\cdots,v_m$  be \iid random weights sampled from $\pi$. Consider the  estimator
$
    \hat{f} =\frac{1}{m} \sum_{j=1}^m \hat{c}_j \phi(\cdot,v_j)
$
with $\hc\in\RR^m$ given by 
\begin{align} \label{constrain-ERM}
    \hat{c} := \argmin_{\|c\|_p \leq R} \frac{1}{n}\sum_{i=1}^n \left(y_i - \frac{1}{m}\sum_{j=1}^m c_j \phi(x_i,v_j) \right)^2.
\end{align}

Assume $M:=\sup_{x\in\cX,v\in\cV}|\phi(x,v)| <\infty$. Then, with an appropriate choice of $R$, for any $\delta\in (0,1)$, it holds \wp at least $1-\delta$ over the sampling of data $\{(x_i,y_i)\}_{i=1}^n$ and features $\{v_j\}_{j=1}^m$ that
\begin{equation*}
      \|\hat{f}-f^* \|_{\rho}^2 \lesssim  M\sigma \sqrt{\frac{p'+ \log(1/\delta)}{n}}   + \frac{p'M^2\log^3 n + M\log(1/\delta)}{n} + \left(\frac{M^{p'}\log^3 m + M\log(1/\delta)}{m}\right)^{2/p'}.
\end{equation*}

\end{theorem}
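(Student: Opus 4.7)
The plan is to decompose the error into approximation and estimation parts via a standard constrained-ERM analysis, feeding the approximation bound from Theorem \ref{thm: approximation-bound} and controlling the estimation part by a Rademacher complexity bound tailored to the $\ell^p$-ball of outer coefficients.

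First, I would choose $\lambda = C m^{1/p}$ with $C$ large enough so that Theorem \ref{thm: approximation-bound} (specialised to $q=2$) yields, with high probability over the sampling of $\{v_j\}_{j=1}^m$, a deterministic surrogate $\bar{f} = \frac{1}{m}\sum_{j=1}^m \bar{c}_j\phi(\cdot,v_j)$ lying in the hypothesis class $\cH_\lambda := \{\frac{1}{m}\sum_j c_j\phi(\cdot,v_j):\|c\|_p \leq \lambda\}$ with
\[
\|\bar{f}-f^*\|_\rho^2 \lesssim \left(\frac{M^{p'}\log^3 m + M\log(1/\delta)}{m}\right)^{2/p'}.
\]
This deposit is exactly the third term of the stated bound.

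Next, I would invoke the basic inequality from the optimality of $\hat{c}$, namely $\|y-\hat{f}\|_{\hat{\rho}_n}^2 \leq \|y-\bar{f}\|_{\hat{\rho}_n}^2$, and rearrange it into
\[
\|\hat{f}-f^*\|_{\hat{\rho}_n}^2 \;\leq\; \|\bar{f}-f^*\|_{\hat{\rho}_n}^2 \;+\; \frac{2}{n}\sum_{i=1}^n \xi_i\bigl(\hat{f}(x_i)-\bar{f}(x_i)\bigr).
\]
To control the noise cross term uniformly over the difference class $\cH_\lambda - \bar{f} \subset \cH_{2\lambda}$, I would compute the Rademacher complexity of $\cH_\lambda$ by applying H\"older's inequality in the $(p,p')$ pairing to the outer coefficients and then invoking Khintchine's inequality (whose constant scales like $\sqrt{p'}$) to each of the $m$ terms $\frac{1}{n}\sum_i \xi_i\phi(x_i,v_j)$ of size $O(M/\sqrt{n})$. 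Using $\lambda = C m^{1/p}$, the two factors $m^{1/p}$ and $m^{1/p'}/m$ cancel, yielding $\erad_n(\cH_\lambda) \lesssim \sqrt{p'}\,M/\sqrt{n}$. Combined with the Gaussian subgaussianity of $\xi_i$ and a standard concentration bound for the supremum of the noise-weighted empirical process, the cross term is bounded by $O\bigl(M\varsigma\sqrt{(p'+\log(1/\delta))/n}\bigr)$, which is the first term in the target bound.

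Finally, one must pass from the empirical $\hat{\rho}_n$-norm back to the population $\rho$-norm uniformly over $\cH_\lambda$. I would apply Talagrand's concentration inequality to the squared-loss class $\{(f-f^*)^2 : f\in\cH_\lambda\}$, whose envelope is $O(M^2)$ and whose Rademacher complexity inherits the $\sqrt{p'}\,M/\sqrt{n}$ rate above through a contraction argument; this produces the second term $(p'M^2\log^3 n + M\log(1/\delta))/n$, with the extra $\log^3 n$ factor arising from a chaining refinement analogous to the $\log^3 m$ in the approximation bound. Assembling the three pieces via $\|\hat{f}-f^*\|_\rho^2 \leq 2\|\hat{f}-\bar{f}\|_\rho^2 + 2\|\bar{f}-f^*\|_\rho^2$ completes the argument.

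The main obstacle will be tracking the precise $p'$-dependence through every step and producing the right $\log^3 n$ factor: the Rademacher bound on $\cH_\lambda$ must be sharp in the $\ell^p$ geometry (not just a crude $\ell^1$ envelope), and transferring from empirical to population squared loss must not erode the favorable $\varsigma/\sqrt{n}$ scaling of the noise term. Managing these constants is where the chaining and Khintchine estimates need to be executed carefully.
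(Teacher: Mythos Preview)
Your outline follows the same skeleton as the paper's proof: choose $\lambda=C m^{1/p}$, produce a surrogate in $\cH_\lambda$ via Theorem~\ref{thm: approximation-bound}, use the basic inequality from the optimality of $\hat c$, control the noise cross term by the Rademacher/Gaussian complexity of $\cH_\lambda$ (H\"older plus Khintchine, yielding the $\sqrt{p'}M/\sqrt{n}$ rate exactly as in the paper's Lemma for the linear class), and pass from empirical to population via a local Rademacher bound on the squared-loss class.

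There is one substantive deviation. You apply Theorem~\ref{thm: approximation-bound} with the \emph{population} measure $\rho$, obtaining $\|\bar f-f^*\|_\rho^2$ small. The paper instead applies it with $\rho$ replaced by the \emph{empirical} measure $\hat\rho_n$ (this is legal because the uniform bound $|\phi|\le M$ makes Assumptions~\ref{as1} and~\ref{rad-as} hold for $\hat\rho_n$ with constants depending only on $M$), directly yielding $\|\tilde f-f^*\|_{\hat\rho_n}^2\lesssim (m^{-1}(M^{p'}\log^3 m+M\log(1/\delta)))^{2/p'}$. This matters because the basic inequality produces $\|\bar f-f^*\|_{\hat\rho_n}^2$ on the right-hand side, not $\|\bar f-f^*\|_\rho^2$; in your version you need an extra Bernstein step to transfer between them, which you do not mention. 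The paper's trick of targeting $\hat\rho_n$ from the start eliminates that step and keeps the constants exactly as stated. Either route works, but you should be explicit about where $\|\bar f-f^*\|_{\hat\rho_n}^2$ is controlled if you keep your version.
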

This theorem provides an upper bound of the excess risk of learning functions in $\cF_{p,\pi}$ and the proof is deferred to Appendix~\ref{sec: proof-learning-err of RFMs}. Note that here we  focus on  norm-constrained estimators but similar arguments can  be straightforwardly extended to penalized estimators. 

The upper bound  is comprised of two  components: the estimation error and the approximation error. Notably, the estimation error rate is independent of the value of $p$,  exhibiting a scaling of $O(n^{-1})$ in the noiseless setting and $O(n^{-1/2})$ in the noisy setting. { This stands in contrast to the approximation error, for which the rate deteriorates as decreasing $p$ towards $1$. Specifically, in the noisy setting, to ensure the error matches the optimal one---achieved with infinitely-many features---one needs $m\geq \tilde{\Omega}(M^{p'/2}n^{p'/4})$. Consider the random ReLU feature $\phi(x,v)=\max(x^\top v,0)$ with $\cX=\sqrt{d}\SS^{d-1}$, and $\Omega=\SS^{d-1}$ and let $\rho=\mathrm{Unif}(\SS^{d-1})$. The choice of domains ensures  $\EE_{x\sim\rho}[\phi(x,v)^2]\asymp 1$. Then, $M =\sqrt{d}$ and consequently, the feature number needs to satisfy $m\geq \tilde{\Omega}((nd)^{\frac{p}{4(p-1)}})$.
}

We highlight that RFMs are linear models and the corresponding optimization problem \eqref{constrain-ERM} can be solved efficiently, i.e., the time complexity depends on the input dimension $d$ at most polynomially. Therefore, we have established that in terms of all approximation, estimation, and optimization, RFMs can efficiently learn functions in $\cF_{p,\pi}$ under the $L^2$ metric as long as $p>1$. This demonstrates the broad applicability of RFMs beyond the kernel regime where $p=2$.

\paragraph*{Learning $\cF_{p,\pi}$ with two-layer neural networks.} Lastly, we draw a comparison between RFMs and neural networks for learning $\cF_{p,\pi}$, highlighting their respective strengths and weaknesses.
\cites{ma2019priori} proved that the excess risk of learning  $\cB$ with two-layer neural networks is bounded by $O(n^{-1/2}+m^{-1})$, where the two terms correspond to the estimation and approximation errors, respectively. Since  $\cF_{p,\pi}$ is a subset of $\cB$ for all $p\in (1,2]$, thus the excess risk of learning $\cF_{p,\pi}$ functions with neural networks is also bounded by $O(n^{-1/2}+m^{-1})$. Comparing this with \eqref{constrain-ERM}, we observe that the {\em adaptivity} of neural networks reduces the approximation error from $O(m^{-(p-1)/p})$ to $O(m^{-1})$ for $p\in (1,2)$. However, the non-convex nature of neural networks raises questions about the efficiency of solving the corresponding optimization problems.  In contrast, Theorem  \ref{Fp-estimate} directly implies that $\cF_{p,\pi}$ can be efficiently learned using RFMs across all aspects---approximation, estimation, and particularly optimization.

\section{The $L^\infty$ learning of RKHS} \label{sec: uniform-estimation}

It is well-known that  RKHS can be learned efficiently under the $L^2$ metric,  but this may not be adequate for security- and safety-critical applications, where the $L^\infty$ metric is more relevant.
In this section, we demonstrate that  $L^\infty$ learning of  RKHS can be effectively characterized using our duality framework.

\subsection{Lower bounds} \label{lowerbound}

\begin{theorem}[$L^\infty$-estimation] \label{primal-bound}
Let $k:\cX\times\cX\mapsto\RR$ be a Mercer kernel and assume $\cH_k \subset C_0(\cX)$. For a distribution $\gamma \in \cP(\cX)$, recall that $\{\mu^{(k,\gamma)}_i\}_i$ denote the eigenvalues of $k(\cdot,\cdot)$ \wrt  $\gamma$.
Let $\Lambda_{k,\gamma}(m)=\sum_{i=m+1}^{\infty} \mu^{(k,\gamma)}_i$.
\begin{enumerate}
    \item  \label{cor-worst}\textnormal{(\textbf{The noiseless case})} For any input data $x_1,\cdots,x_n \in \cX$, we have
\begin{align}\label{eqn: lower-bound-1}
     \inf_{T \in \cA_n} \sup_{\|f\|_{\cH_k} \leq 1} \|T(\{(x_i,f(x_i))\}_{i=1}^n) -f  \|_{C_0(\cX)} \gtrsim  \sup_{\gamma\in \cP(\cX)}\sqrt{\Lambda_{k,\gamma}(n)}.
\end{align}
     \item\label{cor-minimax} \textnormal{(\textbf{The noisy case})} Given any $\rho_i\in\cP(\cX)$ for $i\in [n]$, suppose $x_i\sim\rho_i,  \xi_i\stackrel{iid}{\sim}\cN(0,\sigma^2)$. Let $\brho = \frac{1}{n}\sum_{i=1}^n\rho_i$ and $n_\brho(\sigma) =\inf \{m: n\mu_{m+1}^{(k,\brho)} \le \sigma^2\}$. Then, we have
     \begin{align}\label{eqn: lower-bound-2}
 \inf_{T \in \cA_n} \sup_{\|f\|_{\cH_k} \leq 1} \EE\|T(\{(x_i,f(x_i)+\xi_i)\}_{i=1}^n) -f  \|_{C_0(\cX)} \gtrsim   \sup_{\gamma \in \cP(\cX)}\sqrt{\Lambda_{k,\gamma}(n_{\brho}(\sigma))}.
     \end{align}
\end{enumerate}
\end{theorem}
The proof of this theorem utilizes the duality~\eqref{dual-eq-3} and can be found in Appendix~\ref{sec: proof-lowerbound-rkhs}. Notably, both lower bounds are expressed in a unified manner using the kernel's  eigenvalue decay. The lower bound in the noisy case depends on the sampling strategy through $n_{\brho}(\sigma)$. Particularly, when the  eigenvalues decay  according to a power law, we have
\begin{itemize}
\item If $\mu_j^{(k,\gamma)}\asymp  j^{-1-\beta}$ for some $\gamma\in \cP(\cX)$,  then $\Lambda_{k,\pi}(n)\asymp n^{-\beta}$. Thus, the  $L^\infty$ estimation error  in the noiseless case is lower bounded by $\Omega(n^{-\beta/2})$.
\item Assuming $\mu_j^{(k,\brho)}\asymp  j^{-1-\beta}$, a simple calculation gives that $n_{\brho}(\sigma) \asymp n^{-1/(\beta+1)}$ and consequently, a lower bound of the $L^\infty$ estimation in the noisy case is $\Omega( n^{-\beta/(2\beta+2)})$ .
\end{itemize}

It is important to note that  if $\beta=1/\poly(d)$,  both lower bounds exhibit the CoD. This situation commonly occurs when the kernel function is not sufficiently smooth. 
For example, consider dot-product kernels on $\mathbb{S}^{d-1}$ that take the form 
$\kappa(x^\top x')=\EE_{w\sim\tau_{d-1}}[\varsigma(w^\top x)\varsigma(w^\top x')]$, where $\tau_{d-1}=\mathrm{Unif}(\mathbb{S}^{d-1})$ and $\varsigma:\RR\mapsto\RR$ is an activation  function. These random features play crucial roles for understanding neural networks~\cites{cho2009kernel,bach2017breaking}.  When $\varsigma(\cdot)$ is non-smooth, it is often that $\beta=1/\poly(d)$~\cites{bietti2021deep,scetbon2021spectral,Wu2021ASA}. Specifically, consider the ReLU$^\alpha$ activation: $\varsigma(t)=\max(0,t)^\alpha$, where $\alpha\in \ZZ_{\geq 0}$. 
\cite[Proposition 5]{Wu2021ASA} shows that 
$
\Lambda_{k,\tau_{d-1}}(n)\geq C_{\alpha, d} n^{-\frac{2\alpha+1}{d-1}},
$
where $C_{\alpha,d}$ depends on $1/d$ polynomially. Thus, for the noiseless setting, the minimax error is lower bounded by
\begin{align*}
    \inf_{T \in \cA_n} \sup_{\|f\|_{\cH_k} \leq 1} \|T(\{(x_i,f(x_i))\}_{i=1}^n) -f  \|_{C_0(\cX)} &\gtrsim  \sup_{\gamma\in \cP(\cX)}\sqrt{\Lambda_{k,\gamma}(n)}\\ 
    &\geq  \sqrt{\Lambda_{k,\tau_{d-1}}(n)}\geq C_{\alpha,d} n^{-\frac{2\alpha+1}{2(d-1)}}.
\end{align*}
This indicates that $L^\infty$ estimation suffers from the CoD even in the absence of noise, let alone in the noisy setting.

Next, we present evidence demonstrating that both lower bounds are tight.

\subsection{Upper bounds} \label{upperbound}

\begin{assumption}\label{assu: kernel}
    Consider an RKHS $\cH_k \subset C_0(\cX)$ and a probability $\rho \in \cP(\cX)$ with full support. Let $\{(\mu_j,e_j)\}_{j=1}^{\infty}$ be the  eigen-pairs \wrt $\rho$. Assume there exists $\beta > 0$ and $M > 0$ such that
    \begin{equation*}
        \mu_j \asymp j^{-\beta-1},\quad  \sup_{j \in \NN^+}\|e_j\|_{C_0(\cX)} \le M.
    \end{equation*}
\end{assumption}

The above uniform boundedness assumption on the eigenfunctions, given by $\sup_{j \in \NN^+}\|e_j\|_{C_0(\cX)} \le M$, has been widely adopted in the convergence analysis of kernel methods~\cites{caponnetto2007optimal,steinwart2009optimal,Pozharska2022ANO,long2023reinforcement}. However, this assumption excludes popular dot-product kernels on the sphere~\cites{burq2014probabilistic}. In this work, we adopt this assumption mainly to avoid unnecessary technical complications, as it is not crucial to our arguments. Alternative assumptions, such as those based on maximal degree of freedom~\cites{bach2017equivalence} or embedding indices~\cites{dicker2017kernel,pillaud2018statistical,sobolevkernel}, which do include dot-product kernels, can yield similar results.

\paragraph*{Kernel ridge regression (KRR).} Consider the popular KRR estimator, namely
\begin{equation}\label{eqn: krr}
    \hat{f}_\lambda = \min_{f \in \cH_{k}}\left(\frac{1}{n}\sum_{i=1}^n(f(x_i) - y_i)^2 + \lambda \|f\|_{\cH_{k}}^2\right),
\end{equation}
where $\lambda\geq 0$ is a regularization hyperparameter. When $\lambda\to 0^{+}$, $\hf_\lambda$ becomes the minimum-norm estimator: $\hat{f}_{0} = \argmin_{f \in \cH_k, f(x_i) = y_i}\| f\|_{\cH_k}$.

\begin{theorem}\label{Lq-estimation-rkhs}
Consider the random design setting and assume $\|f^*\|_{\cH_k} \le 1$. Under Assumption \ref{assu: kernel}, we have:
    \begin{itemize}
        \item \textnormal{(\textbf{The noiseless case})} 
        With probability at least $1-\delta$ over the sampling of $\{(x_i,y_i)\}_{i=1}^n$, we have for the minimum-norm estimator that
 \begin{align} \label{eqn: Lq-learning-upper}
             \|\hat{f}_0 - f^* \|_{C_0(\cX)}\lesssim    n^{-\frac{\beta}{2}}(\log(n/\delta))^{\frac{\beta+1}{2}}.
 \end{align}
 \item \textnormal{(\textbf{The noisy case}, \cite[Remark 3.4]{sobolevkernel})} Suppose $\sigma^2=\EE[\xi_i^2]\asymp 1$ and take $\lambda\asymp n^{-\frac{\beta+1}{\beta+2}}$. Then,  \wp at least $1-\delta$ over the sampling of $\{(x_i,y_i)\}_{i=1}^n$, it holds for any $\tau\in (0,\beta)$ that
 \begin{equation}\label{eqn: uniform-upperbound-noisy}
    \|\hf_\lambda-f^*\|_{C_0(\cX)}\lesssim  \log(4/\delta) n^{-\frac{\beta-\tau}{2(\beta+2)}}.
 \end{equation}
    \end{itemize}
\end{theorem}
The proof can be found in Appendix~\ref{sec: proof-upperbound-rkhs}.
According to Theorem~\ref{primal-bound}, the lower bounds for $L^\infty$ estimation under Assumption \ref{assu: kernel} are $\Omega(n^{-\beta/2})$ in the noiseless case and $\Omega(n^{-\beta/(2\beta+2)})$ in the noisy case.  Comparing these results with the above upper bounds, we observe:
\begin{itemize}
\item In the noiseless case,  the upper bound given in \eqref{eqn: Lq-learning-upper} matches the lower bound. Hence, we establish, for the first time to the best of our knowledge, that \textit{KRR is minimax optimal},  up to logarithmic factors, for the $L^\infty$ learning of RKHS in the noiseless scenario. 
\item In the noisy case, however, a discrepancy arises: the upper bound \eqref{eqn: uniform-upperbound-noisy} does not match the lower bound $\Omega(n^{-\beta/(2\beta+2)})$. This suggests three possibilities: 1) the lower bound we derived in  \eqref{eqn: lower-bound-2} is not tight; 2) the upper bound established in \cites{sobolevkernel} is loose; 3) KRR is essentially not optimal when it comes to $L^\infty$ learning for RKHS functions.
\end{itemize}

To clarify this discrepancy, we further consider a specific case involving periodic kernels on $\TT$ given by \eqref{eq:periodic}: $$ k(x,x') = \sum_{j\in \ZZ}\mu_j e_j(x)\overline{e_j(x')} = \sum_{j \in \ZZ} \mu_j e^{2\pi i j(x-x')}, $$ where  $\mu_j \asymp (|j|+1)^{-\beta-1}$.
To enhance learning performance under the $L^\infty$ metric, we propose using KRR with a modified kernel, given by  $k_s(x,x') :=  \sum_{j\in \ZZ}\mu_j^s e_j(x)\overline{e_j(x')}$. When $s>1$, the eigenvalues of $k_s$ decay faster than those of $k$, indicating that $k_s$ is smoother. 

\begin{theorem}\label{thm:periodic_upper}
Suppose $f^*\in \cH_k(1)$ and 
    consider the fixed design setting with inputs  $x_i=(i-1)/n$ for $i=1,\dots,n$. Let $\hf_{\lambda,s}$  be the solution produced by KRR~\eqref{eqn: krr} using the kernel $k_s$. Setting $s\geq \max(\beta^{-1},1)$ and $\lambda \asymp (\frac{\log(n/\delta)}{n})^s$, we have that with probability $1-\delta$,
    \begin{equation*}
        \|\hat{f}_{\lambda,s} - f^*\|_{C_0(\TT)} \lesssim \left(\frac{\log(n/\delta)}{n}\right)^{\frac{\beta}{2(\beta+1)}}.
    \end{equation*}
\end{theorem}
The proof can be found in Appendix~\ref{sec: proof-periodic-kernel-upper-bound}.
This upper bound improves~\eqref{eqn: uniform-upperbound-noisy} and matches the lower bound $\Omega(n^{-\beta/(2\beta+2)})$. This indicates that our lower bound for  the noisy setting is generally sharp. It is  worth mentioning that this optimal upper bound is achieved by using a smoother kernel when $\beta<1$ and selecting inputs carefully (specifically, uniform grid points). We defer the discussion on whether these are necessary to future work, as our primary goal here is to affirm the sharpness of our lower bound \eqref{eqn: lower-bound-2}.  

Moreover, for this particular case, we can establish the following stronger lower bound, which holds regardless of the input sampling strategy. The proof is detailed in Appendix~\ref{sec: proof-periodic-kernel-lower-bound}.

\begin{theorem}\label{thm: Linfty-learning-lower-bound}
Let $\rho_i\in \cP(\TT)$ for $i\in[n]$.
    Suppose  $x_i\sim\rho_i,  \xi_i\stackrel{iid}{\sim}\cN(0,\sigma^2)$ with $\sigma\asymp 1$. We have
     \begin{align}
 \inf_{T \in \cA_n} \sup_{\|f\|_{\cH_k} \leq 1} \EE\|T(\{(x_i,f(x_i)+\xi_i)\}_{i=1}^n) -f  \|_{C_0(\TT)} \gtrsim   n^{-\frac{\beta}{2\beta+2}}.
     \end{align}
\end{theorem}

We remark that Theorems \ref{thm:periodic_upper} and \ref{thm: Linfty-learning-lower-bound} can be straightforwardly extended to periodic kernels on the high-dimensional torus $\TT^d$, given by
$
k(x,x') = \sum_{j \in \ZZ^d}\mu_j e^{2\pi \ii j \cdot (x-x')}
$
for $x, x' \in \TT^d$ with $\mu_j \asymp (\|j\|_\infty + 1)^{-d(\beta+1)}$. 
 We focus on the one-dimensional case in this paper to avoid the notation complication, which would detract from the core ideas we wish to convey.

\section{Conclusion}
In this paper, we establish a duality between approximation and estimation for the problem of function learning using information-based complexity. This duality  enables us to convert approximation problems to estimation problems and vice versa. Therefore, one only needs to focus on the more tractable one among the two. Particularly, we instantiate this duality for the $\cF_{p,\pi}$  and Barron spaces, which play  crucial roles in  the analysis of  RFMs, neural networks, as well as kernel methods.  
To demonstrate the power of our duality framework, we provide comprehensive analyses of two specific problems: The efficacy of RFMs in a beyond-kernel regime and the $L^\infty$ learning of RKHS. Our duality framework not only recovers existing results with simpler proofs, but also leads to stronger and new results.

For future work, it would be interesting to leverage our duality framework  to provide a comprehensive analysis for  the  learning of RKHS and Barron spaces, in particular for the noiseless setting. In this paper, we have only examined the $L^\infty$ learning of RKHS for the illustration purpose. Another interesting question is to utilize the duality to study the learning under Sobolev norms, which are especially relevant in solving partial differential equations. Additionally, it is also intriguing to apply our duality framework to other learning paradigms, such as unsupervised learning, reinforcement learning~\cites{perturbation,long2023reinforcement}, and transfer learning~\cites{ma2023optimally,kalavasis2024transfer}.

\section*{Acknowledgements}
Lei Wu is supported by the National Key R\&D Program of China (No.~2022YFA1008200), National Natural Science Foundation of China (No.~2288101), and a startup fund from Peking University. 
Hongrui Chen was partially supported by the elite undergraduate training program of School of
Mathematical Sciences at Peking University. We sincerely thank the anonymous associate editor and   reviewers  for their valuable suggestions and insightful comments, which have helped us significantly improve the quality of this paper.

\bibliography{ref}

\newpage
\appendix
\addcontentsline{toc}{section}{Appendix} 
\part{Appendix} 
\parttoc 

\section{Proofs in Section \ref{sec: information-based complexity}}
\label{sec: proof-sec-minimax}

\subsection{Proof of Proposition \ref{lemma-data-dependent}}
\label{sec: proof-sample-dependent}

\textbf{Upper bound.} Recall the minimum-norm estimator:
$$
       S(\{(x_i,y_i)\}_{i=1}^n) = \argmin_{h \in \cF,\,h(x_i) = y_i} \|h\|_{\cF}.
$$
Suppose $y_i=g(x_i)$ for some $g\in \cF(1)$. Then, we have
\begin{align*}
\|S(\{(x_i,g(x_i)) \}_{i=1}^n)\|_\cF \leq \|g\|_\cF,\qquad S(\{(x_i,g(x_i) \}_{i=1}^n)(x_i) = g(x_i),\quad \forall i\in [n],
\end{align*}
implying
\begin{align}\label{eqn: 001}
     S(\{(x_i,g(x_i)) \}_{i=1}^n) - g \in \{f \in \cF: \|f\|_{\cF} \leq 2,\,f(x_i) = 0,\,i\in [n]\}.
\end{align}
Therefore,
\begin{align*}
 \inf_{T \in \cA_n}\sup_{\|g\|_{\cF} \leq 1}      \|T(\{(x_i, g(x_i))\}_{i=1}^n) - g \|_{\cM} &\leq    \sup_{\|g\|_{\cF} \leq 1} \|  S(\{(x_i,g(x_i)) \}_{i=1}^n) - g \|_\cM \\ & \leq \sup_{\|f\|_\cF \leq 2,\,f(x_i) = 0}\|f\|_\cM \\ & = 2 \sup_{\|f\|_{\cF}\leq 1,\, f(x_i)=0} \|f\|_{\cM} = 2\II_{L^2(\bar{\rho}),\cM}(\cF(1),0),
\end{align*}
where the second step follows from \eqref{eqn: 001}.

\textbf{Lower bound.} By the definition of the $I$-complexity, for any $\epsilon >0$, there must exist a $h \in \cF$ such that $\|h_\epsilon\|_\cF \leq 1,\,h(x_i) = 0$ for $i\in [n]$ and $\|h_\epsilon\|_\cM \geq \II_n(\cF(1),\cM,0) - \epsilon  $. Note that any estimator $T \in \cA_n$ is unable to distinguish $h_\epsilon$ and $-h_\epsilon$ since
\begin{align*}
                 T(\{(x_i,h_\epsilon(x_i))\}_{i=1}^n) =    T(\{(x_i,-h_\epsilon(x_i))\}_{i=1}^n).
\end{align*}
Therefore, for any $T \in \cA_n$, we have
\begin{align*}
 \sup_{\|g\|_{\cF} \leq 1}      \|T(\{(x_i, &g(x_i))\}_{i=1}^n) - g \|_{\cM} \\ 
 & \stackrel{\mathrm{(i)}}{\geq}  \frac{1}{2}  \| T(\{(x_i, h_\epsilon(x_i) )\}_{i=1}^n ) - h_\epsilon\|_\cM + \frac{1}{2}   \| T(\{(x_i, -h_\epsilon(x_i) )\}_{i=1}^n ) - (-h_\epsilon)\|_\cM \\
 & \geq \|h_\epsilon\|_\cM \\
 & \geq \II_{\bar{\rho}}(\cF(1),\cM,0) - \epsilon,
 \end{align*}
where (i) follows from the triangle inequality.
Taking $\epsilon \to 0$ completes the proof.
 \qed

 \subsection{Proof of Lemma \ref{lem:dual_core}}\label{sec: proof-lemma-dual}
\begin{definition}[Convex conjugate]\label{def: convex-conjugate}
    Let $E$ be a Banach space and $\theta :E\mapsto(-\infty,\infty]$ be a function such that $\theta \not\equiv \infty$. We define its convex conjugate  $\theta^*: E^* \mapsto(-\infty,\infty]$ to be
    \begin{equation}
        \theta^*(l) = \sup_{x \in E}[l(x) - \theta(x)].
    \end{equation}
\end{definition}
\begin{theorem}[Fenchel-Rockafellar theorem, see Theorem 1.12 in \cites{brezis2011functional}]\label{thm:Fenchel–Rockafellar}
    Let $\varphi,\psi: E \mapsto (-\infty,\infty]$ be two convex functions. Assume there exists $x_0 \in E$ such that $\varphi(x_0)<\infty$, $\psi(x_0)<\infty$ and $\varphi$ is continuous at $x_0$. Then
    \begin{equation*}
        \inf_{ x \in E}[\varphi(x) + \psi(x)] = -\inf_{l \in E^*}[\varphi^*(l) + \psi^*(-l)].
    \end{equation*}
\end{theorem}

Next, we apply Theorem \ref{thm:Fenchel–Rockafellar} to prove Lemma \ref{lem:dual_core}.
\paragraph*{Proof of Lemma \ref{lem:dual_core}.}
we construct  $\varphi,\psi:\cF\mapsto\RR$ as
   \begin{equation*}
    \varphi(f) = \begin{cases}
   g(f) &\text{ if }\|f\|_\cF \le 1 \\
    \infty &\text{ if } \|f\|_\cF > 1
    \end{cases},\qquad
    \psi(f) = \begin{cases}
   0 &\text{ if } \|f\|_{\cQ}\le \epsilon\\
    \infty &\text{ if } \|f\|_{\cQ}> \epsilon
    \end{cases}.
\end{equation*}
Note that both $\varphi$ and $\psi$ are well-defined due to $g\in\cF^*$. It is easy to verify:
\begin{itemize}
\item Both $\varphi$ and $\psi$ are convex, $\varphi(0) = \psi(0) = 0 <\infty$, and $\varphi$ is continuous at $0$ due to $g\in \cF^*$.
\item By Definition~\ref{def: convex-conjugate}, we have
\begin{equation}\label{eqn:convex_conjugate_2}
    \begin{aligned}
    &\varphi^*(l) = \sup_{f \in \cF}[l(f) - \varphi(f)] = \sup_{\|f\|_{\cF} \le 1}[l(f) - g(f)] = \sup_{\|f\|_{\cF} \le 1}(l-g)(f) = \|l-g\|_{\cF^*},\\
    &\psi^*(l) = \sup_{f \in \cF}[l(f) - \psi(f)] = \sup_{ f \in \cF,\|f\|_{\cQ} \le \epsilon} l(f) = \epsilon \sup_{ f \in \cF,\|f\|_{\cQ} \le 1} l(f).
\end{aligned}
\end{equation}
\item Additionally,
\vspace*{-1em}
\begin{align}\label{eqn:12345}
\notag    \inf_{x \in \cF}[\varphi(x)+\psi(x)] &= \inf_{\|f\|_\cF \le 1, \|f\|_\cQ \le 1} g(f) = -\sup_{\|f\|_\cF \le 1, \|f\|_\cQ \le 1} g(-f) \\ 
    &= -\sup_{\|f\|_\cF \le 1, \|f\|_\cQ \le 1} g(f). 
\end{align}
\end{itemize}
Combining Theorem \ref{thm:Fenchel–Rockafellar},  \eqref{eqn:convex_conjugate_2}, and \eqref{eqn:12345}, we obtain
\begin{align}\label{eqn:12346}
    \sup_{\|f\|_\cF \le 1, \|f\|_\cQ \le 1} g(f) &= -\inf_{x \in \cF}\big[\varphi(x)+\psi(x)\big]= \inf_{l \in E^*}\big[\varphi^*(l) + \psi^*(-l)\big]\notag\\
    &=\inf_{ l \in \cF^*}\Big[\left\|l - g\right\|_{\cF^*} + \epsilon \sup_{ f \in \cF,\|f\|_{\cQ} \le 1} (-l)(f) \Big]\notag \\ 
    &= \inf_{ l \in \cF^*}\Big[\left\|l - g\right\|_{\cF^*} + \epsilon \sup_{ f \in \cF,\|f\|_{\cQ} \le 1} l(f) \Big],
\end{align}
where the last step follows from  changing from $-f$ to $f$. It remains to show that the right hand side of \eqref{eqn:12345} equals $ \inf_{ l \in \cQ^*}\Big[\left\|l - g\right\|_{\cF^*} + \epsilon \|l\|_{\cQ^*}\Big]$.
  
\paragraph*{Step 1:}
Noticing that $\cQ^*\hookrightarrow \cF^*$, we have 
\begin{align}\label{eqn:12347}
    \sup_{\|f\|_\cF \le 1, \|f\|_\cQ \le 1} g(f)& \le \inf_{ l \in \cQ^*}\Big[\left\|l - g\right\|_{\cF^*} + \epsilon \sup_{ f \in \cF,\|f\|_{\cQ} \le 1} l(f) \Big]\notag\\ 
    &\leq \inf_{ l \in \cQ^*}\Big[\left\|l - g\right\|_{\cF^*} + \epsilon \sup_{\|f\|_{\cQ} \le 1} l(f) \Big]\notag\\ 
    & = \inf_{ l \in \cQ^*}\Big[\left\|l - g\right\|_{\cF^*} + \epsilon \|l\|_{\cQ^*}\Big].
\end{align}

\paragraph*{Step 2:} For any $l \in \cF^*$, if $M_l := \sup_{ f \in \cF,\|f\|_{\cQ} \le 1} l(f) < \infty$, then 
\begin{equation*}
    |l(f)| \le M_l \|f\|_{\cQ},\quad \forall f \in \cF.
\end{equation*}
Thus, by Hahn-Banach Theorem~\cite[Section 5]{folland1999real}, we can then extend $l$ from $\cF$ to $\cQ$ with a continuous extension $\tl\in\cQ^*$, satisfying
\begin{equation*}
    \tilde{l}(f) = l(f), \,\forall f \in \cF,\qquad |\tilde{l}(h)|\le M_l\|h\|_{\cQ},\, \forall h \in \cQ.
\end{equation*}
Note that  $\|\tilde{l}\|_{\cQ^*} = \sup_{f \in \cF,\|f\|_{\cQ} \le 1} l(f)$, due to 
\begin{align*}
 \|\tilde{l}\|_{\cQ^*} &\le M_l = \sup_{ f \in \cF,\|f\|_{\cQ} \le 1} l(f), \\ 
 \|\tilde{l}\|_{\cQ^*} &= \sup_{\|f\|_{\cQ}\le 1} l(f) \geq \sup_{f \in \cF,\|f\|_{\cQ} \le 1} l(f).
\end{align*}
Then, using \eqref{eqn:12346}, we obtain
\begin{align}\label{eqn:12348}
\sup_{\|f\|_\cF \le 1, \|f\|_\cQ \le 1} g(f) &= \inf_{ l \in \cF^*}\Big[\left\|l - g\right\|_{\cF^*} + \epsilon \sup_{ f \in \cF,\|f\|_{\cQ} \le 1} l(f) \Big]\notag\\ 
&=\inf_{ l \in \cF^*}\Big[\sup_{\|f\|_{\cF}\leq 1}[l(f) - g(f)] + \epsilon \sup_{ f \in \cF,\|f\|_{\cQ} \le 1} l(f) \Big]\notag\\ 
&=\inf_{ l \in \cF^*}\Big[\sup_{\|f\|_{\cF}\leq 1}[\tl(f) - g(f)] + \epsilon \|\tl\|_{\cQ^*} \Big]\notag\\ 
&=\inf_{ l \in \cF^*}\Big[\|\tl-g\|_{\cF^*} + \epsilon \|\tl\|_{\cQ^*} \Big]\notag\\ 
&\geq \inf_{ l \in \cQ^*}\Big[\|l - g\|_{\cF^*} + \epsilon \|l\|_{\cQ^*}\Big].
\end{align}

\paragraph*{Step 3:} Combining \eqref{eqn:12347} and $\eqref{eqn:12348}$ completes the proof.

\subsection{Proof of the lower bound~\eqref{uvw}}
\label{sec: proof-pro-noisy}
\begin{proof}
For  $f \in \cF$, let $\cD_{f}^n\in \cP((\cX\times\RR)^{n})$ denote the law of $\{(x_i,f(x_i)+\xi_i)\}_{i=1}^n$.
For a given estimator $T\in\cA_n$ and a target function $f \in \cF$, the performance of $T$ is measured by 
\begin{align*}
    d_n(T, f) := \| T(\{ (x_i,f(x_i)+ \xi_i)\}_{i=1}^n) - f \|_\cM.  
\end{align*}
By the  Le Cam's two-point method \cite[Chapter 15.2]{wainwright2019high}, we have for any $f_1,f_2 \in \cF(1)$ that
\begin{align*}
    \inf_{T\in\cA_n}\sup_{\|f\|_\cF \leq 1} \EE [d_n(T,f)] & \geq \inf_{T\in\cA_n}\max
    \big\{\EE [d_n(T,f_1)],  \EE  [d_n(T,f_2)]  \big\} \\
    & \geq \frac{\|f_1-f_2\|_\cM}{2}\Big(1-  \mathrm{TV}(\cD_{f_1}^n,\cD_{f_2}^n) \Big). 
\end{align*}
Applying the Pinsker's inequality~\footnote{For two probability distributions $P,Q$ defined on the same domain, $\|P-Q\|_{\mathrm{TV}}\leq \sqrt{\mathrm{KL}(P||Q)/2}$.} \cite[Lemma 15.2]{wainwright2019high}, we obtain
\begin{align}   \label{1234}
     \inf_{T\in\cA_n}\sup_{\|f\|_\cF \leq 1} \EE [d_n(T,f)]& \geq \frac{\|f_1-f_2\|_\cM}{2}\left(1- \sqrt{\frac{\mathrm{KL}(\cD_{f_1}^n\| \cD_{f_2}^n)}{2}} \right) .
\end{align}
The KL divergence between $D_{f_1}^n$ and $D_{f_2}^n$ can be computed as follows
\begin{align}
    \mathrm{KL}(\cD_{f_1}^n \| \cD_{f_2}^n ) = & \EE_{\cD_{f_1^n}}\left[\log \frac{\d \cD_{f_1}^n }{\d \cD_{f_2}^n}\right] \notag  \\
    & = \EE_{\cD_{f_1}^n}\left[\log  \left( \frac{\prod_{i=1}^n \exp\left(-\frac{\|\xi_i\|^2}{2\sigma^2} \right) }{\prod_{i=1}^n \exp\left(-\frac{\|f_2(x_i) +\xi_i - f_1(x_i)\|^2}{2\sigma^2} \right)}\right) \right] \notag \\
    & = \EE_{x_i\sim \rho_i,\,\xi_i\sim \cN(0,\sigma^2I_d)} \left[\sum_{i=1}^n \left(\frac{\|f_2(x_i)-f_1(x_i)+\xi_i\|^2 - \|\xi_i\|^2}{2\sigma^2} \right)\right] \notag \\
    &=\frac{1}{2\sigma^2}\sumin \|f_2-f_1\|_{\rho_i}^2 =\frac{n}{2\sigma^2} \|f_2 - f_1\|_{\bar{\rho}}^2. \label{2345}
\end{align}
Combining \eqref{1234} and \eqref{2345}, we arrive at
\begin{align} \label{3456}
  \inf_{T \in \cA_n} \sup_{\|f\|_\cF \leq 1} \EE[d_n(T,f)] \geq \frac{\|f_1-f_2\|_\cM}{2}\left(1 - \frac{\sqrt{n}\|f_2-f_1\|_{\brho}}{2\sigma} \right)
\end{align}
\end{proof}

\section{Proofs in Section \ref{sec: spaces}}\label{sec: proof-union-barron}

\subsection{Proof of Lemma \ref{lemma: barron-space-alternative}}
We provide a proof of Lemma \ref{lemma: barron-space-alternative} below.

\paragraph*{Step 1:}
First, we show that $\cB=\cup_{\pi\in \cP(\cV)}\cF_{p,\pi}$ is a consequence of 
\begin{equation}\label{eqn: 010}
\|f\|_{\cB} = \inf_{\pi\in \cP(\cV)} \|f\|_{\cF_{p,\pi}}.
\end{equation}

If $f\in\cup_{\pi\in \cP(\cV)}\cF_{p,\pi}$, there must exist $\pi_f\in \cP(\cV)$ such that $\|f\|_{\cF_{p,\pi_f}}<\infty$.  By \eqref{eqn: 010}, we have 
\[
\|f\|_{\cB}=\inf_{\pi\in \cP(\cV)} \|f\|_{\cF_{p,\pi}}\leq \|f\|_{\cF_{p,\pi_f}} <\infty.
\]
Thus, $f\in \cB$. This implies that $\cup_{\pi\in \cP(\cV)}\cF_{p,\pi}\subset \cB$.

If $f\in \cB$, then $\|f\|_{\cB} = \inf_\pi \|f\|_{\cF_{p,\pi}}<\infty$. This implies that there must exist $\pi_f\in \cP(\cV)$ such that $\|f\|_{\cF_{p,\pi_f}}<\infty$, i.e., $f\in \cF_{p,\pi_f}$. Thus, $\cB\subset \cup_{\pi\in \cP(\cV)}\cF_{p,\pi}$.

\paragraph*{Step 2:}Now, what remains is  to prove \eqref{eqn: 010}.  By Definition \ref{def:F_1}, for any $f \in \cB$, there must exist $\mu \in \cM(\cV)$ satisfying $\|f\|_{\cB}=\|\mu\|_{\mathrm{TV}} <\infty$ and  $f = \int_\cV \phi(\cdot,v) \d \mu(v)$. Let $\mu=\mu_{+}-\mu_{-}$ be the Jordan decomposition of $\mu$ with $A_{+}=\supp(\mu_{+})$  and $A_{-}=\supp (\mu_{-})$. Let $|\mu|=\mu_{+}+\mu_{-}$, $\pi = \frac{|\mu|}{\|\mu\|_{\mathrm{TV}}}$, and 
\[
a(v) = \begin{cases}
 \|\mu\|_{\mathrm{TV}},\quad & \text{if}\,v\in A_{+}, \\
 - \|\mu\|_{\mathrm{TV}},\quad & \text{if}\,v\in A_{-}.\\
     \end{cases} 
\]
Then, $\|a\|_{L^\infty(\pi)} = \|\mu\|_{\mathrm{TV}}<\infty$ and $f=\int a(v)\varphi(\cdot,v)\dd\pi(v)$. This implies that $\|f\|_{\cF_{\infty, \pi}}=\|\mu\|_{\mathrm{TV}}$.
Therefore, for any $p\in [1,\infty]$, we have
\begin{equation}\label{eqn: 002}
    \inf_{\pi'\in \cP(\cV)}\|f\|_{p, \pi'}\leq  \inf_{\pi'\in \cP(\cV)}\|f\|_{\infty, \pi'}\leq \|f\|_{\infty, \pi}=\|f\|_{\cB},
\end{equation}
where the first inequality follows from H\"older's inequality.

For any $f\in\cB$, suppose that $\pi_f$ such that $\|f\|_{1,\pi_f}=\inf_{\pi\in \cP(\cV)}\|f\|_{1,\pi}$. By the definition of $\cF_{p,\pi}$ space, 
there must exist $a(\cdot)$ such that $f=\int a(v)\varphi(\cdot,v)\dd\pi_f(v)$ and $\int |a(v)|\dd \pi_f(v)=\|f\|_{\cF_{1,\pi_f}}$. By choosing $\mu_f \in \cM(\cV)$ such that $\frac{\d \mu}{\d \pi_f} = a$,  we have  $\|\mu_f\|_{\mathrm{TV}} = \int_\cV \d |\mu_f|(v) = \int_\cV |a(v)| \d \pi_f(v)= \|f\|_{\cF_{1,\pi_f}}$ and  $\int_\cV \phi(\cdot,v) \d \mu_f(v) =\int_\cV a(v)\phi(\cdot,v) \d \pi_f(v) =f $. Thus,  we have for any $p\in [1,\infty]$ that
\begin{equation}\label{eqn: 003}
    \|f\|_{\cB}\leq \|f\|_{\cF_{1,\pi_f}} = \inf_{\pi\in \cP(\cV)}\|f\|_{1,\pi}\leq \inf_{\pi\in \cP(\cV)}\|f\|_{p,\pi},
\end{equation}
where the last inequality is due to  H\"older's inequality.

By combining \eqref{eqn: 002} and \eqref{eqn: 003}, we complete the proof of \eqref{eqn: 010}.
\qed

\section{Proofs in Section \ref{sec: dual-equivalence}}
\label{sec: proof-dual-equivalence-r} 

\subsection{Proof of Theorem \ref{thm: dual}}
We note that the proofs of \eqref{dual-eq-2} and \eqref{dual-eq-3}  follow exactly the same steps as the one of \eqref{dual-eq-1}. The only difference is that different representation theorems may be used as the underlying function spaces change. Nevertheless, we provide the proofs below for completeness.

\paragraph*{Proof of \eqref{dual-eq-2} for $q\in [1,\infty)$.}
Applying Theorem~\ref{thm:dual_abstract}, we obtain
\begin{align}\label{eqn: rst}
\notag    \sup_{\|f\|_{\cB} \leq 1,\|f\|_{r,\nu} \leq \epsilon}\|f\|_{q,\rho}&=\sup_{\|\tb\|_{L^q(\rho)^*}\leq 1}\inf_{\tc \in L^r(\nu)^*}\left[\|\tb - \tc\|_{\cB^*} + \epsilon\|\tc\|_{L^r(\nu)^*}\right]\\ 
\notag     &   = \sup_{\|\tb\|_{L^q(\rho)^*}\leq 1}\inf_{\tc \in L^r(\nu)^*}\left[\sup_{\|f\|_\cB\leq 1}[\tb(f) - \tc(f)] + \epsilon\|\tc\|_{L^r(\nu)^*}\right]\\ 
     &=\sup_{\|b\|_{L^{q'}(\rho)}\leq 1}\inf_{c \in L^{r'}(\nu)}\left[I(b,c;f) + \epsilon\|c\|_{L^{r'}(\nu)}\right],
\end{align}
where the third steps use the Riesz representation theorem for $L^q(\rho)^*$ and $L^r(\nu)^*$, and 
\begin{align}\label{eqn: opq}
\notag    I(b,c;f) =   &\sup_{\|f\|_{\cB} \leq 1} \Big(\int_{\cX}b(x) f(x)\dd \rho(x) - \int_{\cX}c(x)f(x)\dd \nu(x)\Big)\\
\notag        \stackrel{\mathrm{(i)}}{=}& \sup_{\|\gamma\|_{\tv}\leq 1}\int_{\cX}\left(\int_{\cV}\phi(x,v)\dd\gamma(v)\right)[b(x)\dd\rho(x) - c(x)\dd\nu(x)] \\
\notag        =& \sup_{\|\gamma\|_{\tv}\leq 1}\int_{\cV}\left(\int_{\cX}b(x)\phi(x,v)\dd\rho(x) - \int_{\cX}c(x)\phi(x,v)\dd\nu(x)\right) \dd\gamma(v) \\
        \stackrel{\mathrm{(ii)}}{=}&\left\|\int_{\cX}b(x)\phi(x,\cdot)\dd\rho(x) - \int_{\cX}c(x)\phi(x,\cdot)\dd\nu(x)\right\|_{C_0(\cV)},
\end{align}
where $\mathrm{(i)}$ uses the definition of $\cB$ and $\mathrm{(ii)}$ is due to $\sup_{\|\gamma\|_{\tv}\leq 1}|\int_{\cV} s(x)\dd\nu(v)|=\|s\|_{C_0(\cV)}$ for any $s\in C_0(\cV)$.
Plugging \eqref{eqn: opq} into \eqref{eqn: rst} and using the definition of $\tilde{\cF}_{q',\rho}$ and $\tilde{\cF}_{r',\nu}$, we complete the proof.

\paragraph*{Proof of \eqref{dual-eq-3}.} Applying Theorem~\ref{thm:dual_abstract}, we obtain
\begin{align}\label{eqn: rst2}
\notag \sup_{\|f\|_{\cF_{p,\pi}}\leq 1,\|f\|_{r,\nu} \leq \epsilon} \|f\|_{C_0(\cX)} &=\sup_{\|\tb\|_{C_0(\cX)^*}\leq 1}\inf_{\tc \in L^r(\nu)^*}\left[\|\tb - \tc\|_{\cF_{p,\pi}^*} + \epsilon\|\tc\|_{L^r(\nu)^*}\right]\\ 
\notag     &   = \sup_{\|\tb\|_{C_0(\cX)^*}\leq 1}\inf_{\tc \in L^r(\nu)^*}\left[\sup_{\|f\|_{p,\pi}\leq 1}[\tb(f) - \tc(f)] + \epsilon\|\tc\|_{L^r(\nu)^*}\right]\\ 
&=\sup_{\|\gamma\|_{\tv}\leq 1}\inf_{c \in L^{r'}(\nu)}\left[I(b,c;f) + \epsilon\|c\|_{L^{r'}(\nu)}\right],
\end{align}
where the second step uses the Riesz representation theorem of $C_0(\cX)$, i.e., $C_0(\cX)^*=\cM(\cX)$, and
\begin{align}\label{eqn: opq2}
\notag    I(b,c;f) =   &\sup_{\|f\|_{\cF_{p,\pi}} \leq 1} \Big(\int_{\cX} f(x)\dd \gamma(x) - \int_{\cX}c(x)f(x)\dd \nu(x)\Big)\\
\notag        =& \sup_{\|a\|_{p,\pi}\leq 1}\int_{\cX}\left(\int_{\cV}a(v)\phi(x,v)\dd\pi(v)\right)[\dd \gamma(x) - c(x)\dd\nu(x)] \\
\notag        =& \sup_{\|a\|_{p,\pi}\leq 1}\int_{\cV}a(v)\left(\int_{\cX}\phi(x,v)\dd\gamma(x) - \int_{\cX}c(x)\phi(x,v)\dd\nu(x)\right) \dd\pi(v) \\
        =&\left\|\int_{\cX}\phi(x,\cdot)\dd\gamma(x) - \int_{\cX}c(x)\phi(x,\cdot)\dd\nu(x)\right\|_{p',\pi}.
\end{align}
Plugging \eqref{eqn: opq2} into \eqref{eqn: rst2} and using the definition of $\tilde{\cB}$ and  $\tilde{\cF}_{r',\nu}$, we complete the proof.

\paragraph*{The case where $q=\infty$ in \eqref{dual-eq-1} and \eqref{dual-eq-2}.} Note that $L^\infty(\rho)=L^1(\rho)^*$. Therefore,  the proof of this endpoint case follows exactly the same steps by applying the following revisited dual equivalence: Assuming $\cF\hookrightarrow\cQ$ and $\cF\hookrightarrow\cM^*$, it holds that
\begin{equation}\label{eqn: thm3-new}
   \II_{\cQ,\cM^*}(\cF(1),\epsilon) = \sup_{\|g\|_{\cM} \le 1}\inf_{h \in \cQ^* }[\|g - h\|_{\cF^*} + \epsilon\|h\|_{\cQ^*}].
\end{equation}
\begin{proof}[Proof of \eqref{eqn: thm3-new}]
Since $\cF \hookrightarrow \cM^*$, we have $\cM\hookrightarrow\cF^*$ and thus, $g\in \cF^*$. Then, by Lemma~\ref{lem:dual_core}, we obtain
\begin{equation*}
     \sup_{\|f\|_{\cF}\le 1, \|f\|_{\cQ} \le \epsilon} g(f) = \inf_{h \in \cQ^*}\left[\|g - h\|_{\cF^*} + \epsilon\|h\|_{\cQ^*}\right].
\end{equation*}
Taking the supremum over $\{g \in \cM: \|g\|_{\cM} \le 1\}$  
on the both sides gives
\begin{align*}
    \sup_{\|g\|_{\cM} \le 1}\inf_{h \in \cQ^* }[\|g - h\|_{\cF^*} + \epsilon\|h\|_{\cQ^*}] &=\sup_{\|g\|_{\cM} \le 1} \sup_{\|f\|_{\cF}\le 1, \|f\|_{\cQ} \le \epsilon} g(f)\\
    & = \sup_{\|f\|_{\cF}\le 1, \|f\|_{\cQ} \le \epsilon} \sup_{\|g\|_{\cM} \le 1}g(f)\\
   &\stackrel{\mathrm{(i)}}{=} \sup_{\|f\|_{\cF}\le 1, \|f\|_{\cQ} \le \epsilon} \|f\|_{\cM^*} =  \II_{\cQ,\cM^*}(\cF,\epsilon),  
\end{align*}
where $\mathrm{(i)}$ uses the definition of the $\|\cdot\|_{\cM^*}$ norm.
\end{proof}

\section{Proofs in Section \ref{sec: random-app}}

We first present a standard local Rademacher complexity-based generalization bound. We will apply this result to establish the estimation error bound on the conjugate space.
\begin{lemma}\label{localRad}[Theorem 1 in \cite{localRad}] Let $\cF$ denote a class of functions from the input space $\cX \subset \RR^d$ to the output space $\cY \subset \RR$, and $\ell: \cX \times \cY \to \RR$ denote a loss function. Let $\mu$ denote the population distribution on $\cX \times \cY$ and $(x_i,y_i) \stackrel{iid}{\sim}$ for $i\in [n]$. Suppose that:
\begin{itemize}
    \item The Rademacher complexity of $\cF$ is bounded by $R_n$, i.e.,
    $
     \rad(\cF) \leq R_n.
    $
    \item The loss function $\ell$ is $H$-smooth with respect to the input, i.e.,
    $
        |\partial_{x,x}^2\ell(x,y)| \leq H 
    $
    $\,\mu$-a.e.
    \item The loss function $\ell$ is bounded by $b$ and non-negative, i.e., 
    $
      0 \leq  |\ell(x,y)| \leq b 
    $
    $\,\mu$-a.e.
\end{itemize}
Consider the empirical error and generalization error:
    \begin{align*}
        \hat{\cR}(h) = \frac{1}{n}\sum_{i=1}^n \ell(h(x_i),y_i), \quad \cR(h) = \EE_{(x,y)\sim\mu} [\ell(h(x),y)].
    \end{align*}
    Then, with probability at least $1-\delta$ over samples $\{(x_i,y_i)\}_{i=1}^n$, we have for any $h \in \cF$,
\begin{align*}
\cR(h) \leq \hat{\cR}(h)+K\left(\sqrt{\hat{\cR}(h)}\left(\sqrt{H} \log ^{3/2} n R_n+\sqrt{\frac{b \log (1 / \delta)}{n}}\right)+H \log ^{3} n R_n^2 +\frac{b \log (1 / \delta)}{n}\right),
\end{align*} 
where $K$ is an absolute constant.
\end{lemma}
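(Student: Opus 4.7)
\textbf{Proof plan for Lemma \ref{localRad}.}
My approach follows the local Rademacher complexity framework of Bartlett--Bousquet--Mendelson, specialized to smooth non-negative losses in the style of Srebro--Sridharan--Tewari. The crucial feature to exploit is that for any non-negative $H$-smooth function $g(z)$ one has the self-bounding inequality $|g'(z)|^2 \leq 2 H g(z)$. Applied to $\ell(\cdot,y)$ this gives $|\partial_z \ell(z,y)|^2 \leq 2H\ell(z,y)$, which converts the loss value at a point into an effective local Lipschitz constant. This is the mechanism that turns slow $n^{-1/2}$ rates into the local rates appearing on the right-hand side.

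The first main step is variance control. Writing $\ell_h(x,y) := \ell(h(x),y)$ and $\cL = \{\ell_h : h \in \cF\}$, I would show that $\mathrm{Var}[\ell_h] \leq \EE[\ell_h^2] \leq b\,\cR(h)$, so the loss class satisfies the Bernstein condition with parameter $b$. Next, I would bound the Rademacher complexity of the localized class $\cL_r = \{\ell_h \in \cL : \EE\ell_h \leq r\}$. Here, instead of naive Lipschitz contraction, I would use the self-bounding property pointwise: the loss class restricted to a ball of expected value $r$ behaves as if it had Lipschitz constant $\sqrt{Hr}$, which together with the assumed bound $\mathfrak{R}_n(\cF) \leq R_n$ yields $\mathfrak{R}_n(\cL_r) \lesssim \sqrt{Hr}\,R_n$ up to logarithmic corrections. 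Combined with a chaining/Dudley argument to handle the $\varsigma$-subgaussian tail of the empirical process on $\cL_r$, this is where the $\log^{3/2} n$ factors enter.

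The second main step is the sub-root fixed point. Define $\psi(r) := c(\sqrt{Hr}\,R_n\log^{3/2}n + \sqrt{br\log(1/\delta)/n} + H\log^3 n\, R_n^2 + b\log(1/\delta)/n)$; one checks $\psi$ is sub-root in $r$, so it has a unique positive fixed point $r^* \lesssim H\log^3 n\, R_n^2 + b\log(1/\delta)/n$. Applying Talagrand's concentration inequality to the loss class and invoking the standard Bartlett--Bousquet--Mendelson peeling argument yields, for every $h \in \cF$,
\begin{equation*}
    \cR(h) \leq \hat\cR(h) + K\Bigl(\sqrt{\cR(h)\, r^*} + r^*\Bigr)
\end{equation*}
with probability at least $1-\delta$. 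Solving this quadratic inequality in $\sqrt{\cR(h)}$ and then using $\cR(h) \leq 2\hat\cR(h) + O(r^*)$ to replace $\cR(h)$ by $\hat\cR(h)$ inside the square root gives the stated form $\cR(h) \leq \hat\cR(h) + K(\sqrt{\hat\cR(h)}(\sqrt{H}\log^{3/2}n\,R_n + \sqrt{b\log(1/\delta)/n}) + H\log^3 n\, R_n^2 + b\log(1/\delta)/n)$.

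The main obstacle I expect is the second step: obtaining the bound $\mathfrak{R}_n(\cL_r) \lesssim \sqrt{Hr}\,R_n\log^{3/2}n$. The naive Ledoux--Talagrand contraction would only deliver $\sqrt{Hb}\,R_n$, which is too coarse; one really needs a ``smooth contraction'' argument that exploits the pointwise self-bounding property together with a chaining bound on the subgaussian increments, which is what produces the $\log^{3/2}n$ factor in place of a sub-optimal $\sqrt{b}$. Once that contraction is in hand, the remainder of the argument is a relatively mechanical application of the local Rademacher machinery together with the Talagrand concentration inequality and a fixed-point computation.
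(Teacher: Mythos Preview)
The paper does not supply its own proof of this lemma; it is quoted verbatim from an external reference (the citation \texttt{[localRad]}, which is the Srebro--Sridharan--Tewari result on fast rates for smooth losses). So there is no in-paper argument to compare against.

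That said, your plan is the correct one and matches the cited source. You have identified the essential mechanism: the self-bounding inequality $|\partial_z \ell(z,y)|^2 \le 2H\ell(z,y)$ for non-negative $H$-smooth losses, which is what allows the localized class $\cL_r$ to behave with effective Lipschitz constant $\sqrt{Hr}$ rather than $\sqrt{Hb}$. You are also right that naive Ledoux--Talagrand contraction is too blunt here and that the $\log^{3/2} n$ factor arises from the chaining step in the smooth-contraction argument. The remaining pieces (Bernstein condition, Talagrand concentration, sub-root fixed point, and the final quadratic-in-$\sqrt{\cR(h)}$ manipulation to pass from $\cR(h)$ to $\hat\cR(h)$ inside the square root) are exactly the standard local-Rademacher machinery. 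There is no gap in your outline; if you wanted to flesh it out, the only place requiring genuine care is the smooth contraction bound $\mathfrak{R}_n(\cL_r)\lesssim \sqrt{Hr}\,R_n\log^{3/2}n$, for which you would follow the covering/chaining argument in Srebro--Sridharan--Tewari essentially line by line.
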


\subsection{Proof of Lemma \ref{rad-lemma}}
\label{sec: proof-lemma-rad}
The Khintchine inequality is essential to bound the (local)-Rademacher complexity of the $\cF_p$ space:
\begin{lemma}(Khintchine inequality, Exercise 2.6.5 in \cites{vershynin2019high})\label{lemma: khintchin}
Let $X_1,\cdots,X_n$ be independent $\sigma$-subgaussian random variables with zero means and unit variances, and let $a = \left(a_1, \ldots, a_N\right)^\top \in \mathbb{R}^N$. Then for $p \geq 2$ we have
$$
\left\|\sum_{i=1}^N a_i X_i\right\|_{L^p} \lesssim \sigma\sqrt{p}\left(\sum_{i=1}^N a_i^2\right)^{1 / 2}.
$$
\end{lemma}
\paragraph*{Proof of Lemma \ref{rad-lemma}}

First notice that when $q \in [2,\infty]$, for any $v_1,\dots,v_n \in \cV$, we have
\begin{align*}
     \EE_{\xi} \left[ \sup_{\|g\|_{\tilde{\cF}_{q',\rho} \leq 1}} \frac{1}{n}\left|\sum_{i=1}^n \xi_i g(v_i) \right|\right] & =  \EE_{\xi} \left[ \sup_{\|b\|_{q',\rho} \leq 1} \frac{1}{n}\sum_{i=1}^n \xi_i \int b(x)\phi(x,v_i) \d \rho(x) \right] \\
    & = \EE_\xi \left[\frac{1}{n}\sup_{\|b\|_{q',\rho} \leq 1} \int_\cX b(x)\left(\sum_{i=1}^n \xi_i \phi(x,v_i) \right) \d \rho(x)\right] \\
    & =\frac{1}{n} \EE_\xi  \left\|\sum_{i=1}^n \xi_i \phi(\cdot,v_i)\right\|_{q,\rho}.
\end{align*}
\begin{enumerate}
    \item If $1< q' \leq 2 $, for any $v_1,\cdots,v_n \in \cV$, we have
\begin{align*}
    \EE_{\xi} \left[ \sup_{\|g\|_{\tilde{\cF}_{q',\rho} \leq 1}} \frac{1}{n}\left|\sum_{i=1}^n \xi_i g(v_i) \right|\right] 
    & =\frac{1}{n} \EE_\xi  \left\|\sum_{i=1}^n \xi_i \phi(\cdot,v_i)\right\|_{q,\rho} \\
    & \stackrel{(a)}{\leq} \frac{1}{n}  \EE_{x \sim \rho}  \left[\EE_\xi \left|\sum_{i=1}^n \xi_i \phi(x,v_i) \right|^q\right]^{1/q} \\ 
    & \stackrel{(b)}{\lesssim} \frac{\sqrt{q}}{n}\EE_{x\sim \rho}    \left[ \sum_{i=1}^n \phi(x,v_i)^2\right]^{1/2} \\
    & \stackrel{(c)}{\leq} \frac{\sqrt{q}}{n} \left[\EE_{x\sim \rho}   \sum_{i=1}^n \phi(x,v_i)^2\right]^{1/2} \\ 
    &\stackrel{(d)}{\lesssim}\frac{R\sqrt{q}}{\sqrt{n}}
\end{align*} 
where where $(a)$ and $(c)$ use Jensen's inequality and the concavity of the function $z\mapsto z^{1/q}$ for $q\in (1,2]$,  $(b)$ follows from the  Khintchine inequality (Lemma \ref{lemma: khintchin}), and $(d)$ uses the assumption $\|\phi(\cdot,v)\|_\rho\leq R$ for any $v\in \cV$.
\item If $q' = 1$, for any $v_1,\cdots,v_n \in \cV$, we have
\begin{align*}
    \EE_{\xi} \left[ \sup_{\|g\|_{\tilde{\cF}_{1,\rho} \leq 1}} \frac{1}{n}\left|\sum_{i=1}^n \xi_i g(v_i) \right|\right] & = \frac{1}{n} \EE_\xi  \left\|\sum_{i=1}^n \xi_i \phi(\cdot,v_i)\right\|_{\infty,\rho} \\
    & \le\frac{1}{n} \EE_\xi \sup_{\|x\|\leq R}  \left|\sum_{i=1}^n \xi_i \varsigma(x^\top v_i)\right| \\
    & \stackrel{(a)}{\lesssim} \frac{L}{n}\EE_\xi \sup_{\|x\|\leq R} \left|\sum_{i=1}^n (\xi_i v_i)^\top x \right| \\
    & = \frac{LR}{n} \EE_\xi \left\|\sum_{i=1}^n \xi_i v_i\right\|_2 \\
    & \leq \frac{LR}{n} \sqrt{\EE_\xi \left\|\sum_{i=1}^n \xi_i v_i\right\|_2^2}  \\
    & = \frac{LR}{n} \sqrt{ \sum_{i=1}^n \|v_i\|^2},
\end{align*}
where $(a)$  uses the contraction property of Rademacher complexity \cite[Lemma 26.9]{shalev2014understanding}. 
\end{enumerate}
\qed

\subsection{Proof of Theorem \ref{thm: approximation-bound}}
\label{sec: proof-approx-rfm}

\begin{proof}
Using the dual equivalence stated in Theorem \ref{thm: dual}, the approximation error is equal to the $I$-complexity:
\begin{align*}
    \sup_{\|f\|_{\cF_{p,\pi}} \leq 1}\inf_{c\in\RR^m} \left[\left\|f- \frac{1}{m} \sum_{j=1}^m c_j\phi(\cdot,v_j) \right\|_{q,\rho} + \epsilon \left(\frac{1}{m}\sum_{j=1}^m |c_j|^p\right)^{1/p}\right]= \sup_{\|g\|_{\tilde{\cF}_{q',\rho}} \leq 1,\|g\|_{p',\hat{\pi}_m} \leq \epsilon} \|g\|_{p',\pi}, 
\end{align*}
where $\hat{\pi}_m = \frac{1}{m}\sum_{j=1}^m \delta_{v_j}$ is the empirical weight measure.

Next, we apply Lemma \ref{localRad} to bound the right-hand side. Note that Assumption \ref{as1} implies that functions in $\tilde{\cF}_{q',\rho}(1)$ is uniformly bounded by $M_q$. The loss function $(x,y) \mapsto |x - y |^{p'}$ is at least $p'(p'-1)(2M_{q})^{p'-2} $-smooth. By Lemma \ref{localRad}, with probability at least $1-\delta$, we have
\begin{align*}
\sup_{\|g\|_{\tilde{\cF}_{q',\rho}} \leq 1,\|g\|_{p',\hat{\pi}_m} \leq \epsilon} \|g\|_{p',\pi}^{p'} \lesssim {\epsilon^{p'}} + \frac{p'(p'-1)(2M_{q})^{p'-2}R_q^2 \log^3 m + M_q\log(1/\delta)}{m}.
\end{align*}
Taking the $1/p'$ power on the both sides, we obtain
\begin{align} \label{12}
    \sup_{\|g\|_{\cF_{q',\rho}} \leq 1,\|g\|_{p',\hat{\pi}_m} \leq \epsilon} \|g\|_{p',\pi} \lesssim {\epsilon} + \left(\frac{M_q^{p'-2}R_q^2\log^3 m + M_q\log(1/\delta)}{m} \right)^{1/p'}.
\end{align}
Taking $$\epsilon \asymp \left(\frac{M_q^{p'-2}R_q^2\log^3 m + M_q\log(1/\delta)}{m}\right)^{1/p'}, $$ \eqref{12} implies that for any $f \in \cF_{p,\pi}(1)$, there exists $c_1,\cdots,c_m \in \RR$ such that $\|c\|_p \lesssim {m^{1/p}}$ and 
$$\left\|f - \frac{1}{m}\sum_{j=1}^m c_j \phi(\cdot,v_j)\right\|_{q,\rho} \lesssim  \left(\frac{M_q^{p'-2}R_q^2\log^3 m + M_q\log(1/\delta)}{m} \right)^{1/p'}.$$
We complete the proof.
\end{proof}

\subsection{Proof of Theorem~\ref{thm:Fp-approximation-lower-bound}}
\label{sec: proof-lower-bound-periodic-RFM}

Recall that the feature function $\phi:\cX\times \TT \mapsto \CC$ considered in this section is given by
\begin{equation}\label{eqn:svd2}
    \phi(x,v) = \sum_{j \in \ZZ}\mu_j^{\half}  u_j(x) \overline{e_j(v)},
\end{equation}
with $\mu_j\asymp (|j|+1)^{-\beta-1}$ and $e_j(v) = \exp(2\pi \ii j v)$.

\paragraph*{Transform to an equivalent real-valued feature.}
Define $\tilde{\phi}(0,x,v) = \sqrt{2} \, \mathrm{Re}[\phi(x,v)]$ and $\tilde{\phi}(1,x,v) = \sqrt{2} \, \mathrm{Im}[\phi(x,v)]$. Then,
\[
    \tk_{\rho}(v,v') = \int_{\cX} \phi(x,v) \overline{\phi(x,v')} \dd\rho(x) = \frac{1}{2} \int_{\cX} \left[\tilde{\phi}(0,x,v)\tilde{\phi}(0,x,v') + \tilde{\phi}(1,x,v)\tilde{\phi}(1,x,v')\right] \dd\rho(x).
\]
This implies that our derived conclusions are the same by considering the real-valued feature $\tilde{\phi}: (\{0,1\} \times \cX) \times \TT \to \RR$, with the input distribution given by $\tilde{\rho}(\{j\} \times B) := \rho(B)/2$ for any $j \in \{0,1\}$ and  Borel measurable set $B \subset \cX$. Therefore, without loss of generality, we will adopt the complex-valued version for simplicity.

\subsubsection{Proof of Theorem~\ref{thm:Fp-approximation-lower-bound}}

\paragraph*{Proof of the lower bound \eqref{Fp-approximation-lower-bound}.}
    Given any $y \in \TT$ and $r \in \TT$, let
    \begin{equation}\label{eqn: def-a}
        a(v;y,r) = \sum_{ j \in \ZZ} r^{|j|} e^{2\pi \ii j (y-v)} = \frac{1-r^2}{1+r^2-2r\cos(2\pi(y-v))} > 0
    \end{equation}
    be the Poisson's kernel. Noticing that
    \begin{equation}\label{eqn: qqq}
    \begin{aligned}
       &\|a(\cdot;y,r)\|_{1,\pi_0} = \int_{\TT}|a(v;y,r)|\dd\pi_0(v) = \int_{\TT}a(v;y,r)\dd\pi_0(v) = 1, \\
       &\|a(\cdot;y,r)\|_{2,\pi_0} = \sqrt{\sum_{j \in \ZZ} r^{2|j|} |e^{2\pi \ii j y}|^2} = \sqrt{\frac{1+r^2}{1-r^2}}.
    \end{aligned}
    \end{equation}
  By H\"older's inequality, we know that for any $p \in [1,2]$,
    \begin{align}\label{eqn: xop}
         \|a(\cdot;y,r)\|_{p,\pi_0} &\le \|a(\cdot;y,r)\|_{1,\pi_0}^{\frac{2-p}{p}} \|a(\cdot;y,r)\|_{\pi_0}^{\frac{2(p-1)}{p}} \leq \left(\frac{1+r^2}{1-r^2}\right)^{\frac{2(p-1)}{2p}}\leq (1-r)^{-\frac{1}{p'}},
    \end{align}
    where the third step uses Eq.~\eqref{eqn: qqq}. We will establish lower bounds of approximating $\cF_{p,\pi_0}$  by considering its following subset:
    \[
        G_r=\left\{g_y=\int a(v;y,r)\phi(\cdot,v)\dd\pi_0(v): y\in\TT\right\},
    \] 
    where for each $y\in\TT$, it holds that
    \begin{align}\label{eqn: qqq2}
    \|g_y\|_{p,\pi_0} &= \|a(\cdot;y,r)\|_{p,\pi_0}\stackrel{\mathrm{use \eqref{eqn: xop}}}{\leq }(1-r)^{-\frac{1}{p'}}.
    \end{align}
    and
    \begin{align}\label{eqn: qqq3}
    g_y(x)=&\int_{\TT}a(v;y,r)\phi(x,v) \dd\pi_0(v) \nonumber\\ 
    &\stackrel{\text{use \eqref{eqn:svd} and \eqref{eqn: def-a}}}{=} \int_{\TT}\left(\sum_{ j \in \ZZ} r^{|j|} e^{2\pi \ii j (y-v)}\right)\left(\sum_{i \in \ZZ}\mu_i^{\half}  \overline{u_i(x)} e_i(v)\right)\dd\pi_0(v)\nonumber\\ 
    &=\sum_{j \in \ZZ} r^{|j|} \mu_j^\half e^{2\pi\ii jy} \overline{u_j(x)}
    \end{align}

   Without loss of generality, we assume that $\{\varphi_i\}_{i=1}^n$ are orthonormal in $L^2(\rho)$. Otherwise we can first perform Gram-Schmidt orthonormalization. Then, we have
       \begin{align}\label{eqn: qqq4}
       \notag\sup_{\|f\|_{\cF_{p,\pi_0}} \le 1}&\inf_{c\in\RR^n}\left\|f - \sum_{i=1}^nc_i\varphi_i\right\|_{\rho}^2 \geq \sup_{y\in\TT}\inf_{c\in\RR^n}\left\|\|g_y\|_{p,\pi_0}^{-1} g_y - \sum_{i=1}^nc_i\varphi_i\right\|_{\rho}^2\\ 
       \notag&\stackrel{\text{use \eqref{eqn: qqq2}}}{\gtrsim} (1-r)^{\frac{2}{p'}}\EE_{y \sim \pi_0}\inf_{c\in\RR^n}\left\|g_y-\sum_{i=1}^nc_i\varphi_i\right\|_{\rho}^2\qquad \\
       \notag&=(1-r)^{\frac{2}{p'}}\EE_{y\sim\pi_0}\left[\Big\|g_y\Big\|_{\rho}^2 - \sum_{i=1}^n \<g_y,\varphi_i\>^2\right]\\ 
       &=(1-r)^{\frac{2}{p'}}\left[\EE_{y\sim\pi_0}\Big\|g_y\Big\|_{\rho}^2 - \sum_{i=1}^n \EE_{y\sim\pi_0}\<g_y,\varphi_i\>^2\right].
    \end{align}
    Using \eqref{eqn:svd2} and \eqref{eqn: qqq2}, we have
   \begin{align}\label{eqn:5111}
     \notag     \EE_{y\sim\pi_0}[\|g_y\|^2_\rho]&=\EE_{y\sim\pi_0}\left\|\int_{\TT}a(v;y,r)\phi(\cdot,v)\dd\pi_0(v)\right\|_{\rho}^2 \\ 
          &= \EE_{y\sim\pi_0}\sum_{j \in \ZZ}r^{2|j|}\mu_j |e^{2\pi\ii j y}|^2 = \sum_{j \in \ZZ}r^{2|j|}\mu_j
\end{align}
and
 \begin{align}\label{eqn:5112}
\notag \EE_{y\sim\pi_0}\<g_y,\varphi_i\>^2 &= 
\EE_{y\sim\pi_0}\left|\int_{\TT}a(v;y,r)\phi(x,v)\overline{\varphi_i(x)}\dd\pi_0(v)\dd\rho(x)\right
|^2 \\ 
\notag&= \EE_{y\sim\pi_0}\left|\sum_{j \in \ZZ}r^{|j|}\mu_j^{\half} e^{2\pi \ii jy} \int_{\cX}\overline{u_j(x)\varphi_i(x)} \dd \rho(x) \right|^2 \\
\notag&= \sum_{j \in \ZZ}r^{2|j|}\mu_j \left|\int_{\cX}u_j(x)\varphi_i(x)\dd\rho(x)\right|^2 \\
&= \int_{\cX}\int_{\cX}k_r(x,x')\varphi_i(x)\overline{\varphi_i(x')}\dd \rho(x)\dd\rho(x'),
       \end{align}
   where $k_r(x,x') = \sum_{j \in \ZZ} r^{2|j|}\mu_j u_j(x)\overline{u_j(x')}$. 

   Combining \eqref{eqn: qqq4}, \eqref{eqn:5111}, and \eqref{eqn:5112}, we obtain
   \begin{align}\label{eqn: qqq5}
       \notag&\sup_{\|f\|_{\cF_{p,\pi_0}} \le 1}\inf_{c_1,\dots,c_n}\left\|f - \sum_{i=1}^nc_i\varphi_i\right\|_{\rho}^2  \\
       \notag&\gtrsim(1-r)^{\frac{2}{p'}}\left[\sum_{j \in \ZZ}r^{2|j|}\mu_j - \sum_{j=1}^n\int_{\cX}\int_{\cX}k_r(x,x')\varphi_i(x)\overline{\varphi_i(x')}\dd \rho(x)\dd\rho(x')\right] \\
       &\gtrsim (1-r)^{\frac{2}{p'}}\sum_{|j| > \lfloor\frac{n}{2}\rfloor} r^{2|j|}\mu_j
   \end{align}
   where the last inequality is due to 
   \[
   \sum_{j=1}^n\int_{\cX}\int_{\cX}k_r(x,x')\varphi_i(x)\overline{\varphi_i(x')}\dd \rho(x)\dd\rho(x') \le \sum_{|j| \le  \lfloor\frac{n}{2}\rfloor} r^{2|j|}\mu_j,
   \] 
   which performs PCA for the covariance operator $T_{k_r}$, defined as $T_{k_r} f = \int k_r(x,x')f(x')\dd\rho(x)$. Notably, the maximum is obtained when $\{\phi_i\}_{i=1}^n$ are the leading eigenfunctions. 

   Choosing $r = 1-\frac{1}{n}$ and $n$ sufficiently large, then we have
   \begin{align*}
       \sup_{\|f\|_{\cF_{p,\rho}} \le 1}\inf_{c_1,\dots,c_n}\left\|f - \sum_{i=1}^nc_i\varphi_i\right\|_\rho^2 &\gtrsim n^{-\frac{2}{p'}}
       \sum_{ \lfloor\frac{n}{2}\rfloor < |j| \le n }r^{2|j|} 
       j^{-\beta-1}\\ 
       &\gtrsim n^{-\frac{2}{p'}} n \left(1-\frac{1}{n}\right)^{2n} n^{-\beta-1}\gtrsim n^{ - \frac{2}{p'}-\beta}.
   \end{align*}

\paragraph*{Proof of the upper bound \eqref{Fp-approximation-upper-bound}.}
Let $\tilde{\cH}$ be the RKHS on $\cV$ with the kernel $\tk_{\rho}:\TT\times\TT\mapsto\CC$, which coincides with $\tilde{\cF}_{2,\rho}$ by \eqref{eqn: x1}.
By the duality \eqref{dual-eq-4} and a similar argument with inequality \eqref{eqn: l_infty_estimation_noiseless} and Lemma \ref{lem_l2_noiseless}, we have that with probability $1-\delta$,
\begin{align*}
    &\sup_{\|g\|_{\tilde{\cH}}\le 1, g(x_i) = 0}\|g\|_{\pi_0} =  \sup_{\|f\|_{\cF_{2,\pi_0}} \le 1}\inf_{c_1,\dots,c_n}\|f - \sum_{i=1}^nc_i\phi(\cdot,v_i)\|_{\rho} \lesssim n^{-\frac{1}{2}-\frac{\beta}{2}}(\log(n/\delta))^{-\frac{\beta+1}{2}}, \\
     &\sup_{\|g\|_{\tilde{\cH}}\le 1, g(x_i) = 0}\|g\|_{\infty,\pi_0} =  \sup_{\|f\|_{\cF_{1,\pi_0}} \le 1}\inf_{c_1,\dots,c_n}\|f - \sum_{i=1}^nc_i\phi(\cdot,v_i)\|_{\rho} \lesssim n^{-\frac{\beta}{2}}(\log(n/\delta))^{-\frac{\beta+1}{2}}.
\end{align*}
Use the duality \eqref{dual-eq-4} with H\"older inequality, we have that with probability $1-\delta$,
\begin{align*}
    \sup_{\|f\|_{\cF_{p,\pi_0}} \le 1}\inf_{c_1,\dots,c_n}\|f - \sum_{i=1}^nc_i\phi(\cdot,v_i)\|_{\rho} &= \sup_{\|g\|_{\tilde{\cH}}\le 1, g(x_i) = 0}\|g\|_{p',\pi_0} \\
    &\lesssim \sup_{\|g\|_{\tilde{\cH}}\le 1, g(x_i) = 0}(\|g\|_{\infty,\pi_0})^{1-\frac{2}{p'}}(\|g\|_{2,\pi_0})^{\frac{2}{p'}} \\
    &\lesssim \left(\sup_{\|g\|_{\tilde{\cH}} \le 1, g(x_i) = 0}\|g\|_{\infty,\pi_0}\right)^{1-\frac{2}{p'}} \left(\sup_{\|g\|_{\tilde{\cH}} \le 1, g(x_i) = 0}\|g\|_{\pi_0}\right)^{\frac{2}{p'}} \\
    &\lesssim n^{-\frac{1}{p'}-\frac{\beta}{2}}(\log(n/\delta))^{-\frac{\beta+1}{2}}.
\end{align*}
\qed

\subsubsection{Special feature functions that satisfy  Assumption~\ref{as1}}
\label{sec: special-feature-construction}

\indent\textbf{Case 1.} For $q=2$, we have for the feature function~\eqref{eqn:svd2} that
\begin{align*}
\sup_{v \in \TT}\|\phi(\cdot,v)\|^2_\rho &= \sup_{v\in\TT}\sum_{i,j\in\ZZ}\mu_i^{1/2}\mu_j^{1/2}e_i(v)\overline{e_j(v)}\langle u_i,u_j\rangle_\rho=\\ 
&=\sup_{v \in \TT}\sum_{j \in \ZZ} \mu_j |e_j(v)|^2 = \sum_{j \in \ZZ} \mu_j:=\kappa <\infty,
\end{align*}
indicating that the feature~\eqref{eqn:svd2} satisfies Assumption \ref{as1}(i) with $M_q = \sqrt{\kappa}$ for $q=2$. As for Assumption~\eqref{as1}(ii), it follows from Lemma~\ref{rad-lemma} that it holds with $R_q = \sqrt{2\kappa}$. 

\textbf{Case 2.}
For $q\in (2,\infty)$, recall that we consider the special feature where
 $\cX = \ZZ$, $\rho(x=j) = \mu_j/\kappa$ and $u_j(x) = \sqrt{\kappa/\mu_j}\delta_{j,x}$. Hence, $\phi(x,v) = \sqrt{\kappa} e_x(v)$, for which
 \[
    \|\phi(\cdot,v)\|_{q,\rho} = \left(\sum_{j\in\ZZ}\frac{\mu_j}{\kappa} |\sqrt{\kappa}e_j(v)|^q\right)^{1/q} = \sqrt{\kappa}.
 \]
This implies that Assumption~\ref{as1}(i) is satisfied with $M_q=\sqrt{\kappa}$ for all $q\in [2,\infty)$. As for Assumption~\eqref{as1}(ii), it follows from Lemma~\ref{rad-lemma} that it holds with $R_q = \sqrt{2\kappa}$.

\textbf{Case 3.} For $q = +\infty$, set $\cX = \ZZ$, $\rho(x=j) = \tilde{\mu}_j/\tilde{\kappa}$ and $\tilde{e}_j(x) = \sqrt{\tilde{\kappa}/\tilde{\mu}_j}\delta_{j,x}$, where $\tilde{\mu}_j = (|j|+1)^{\frac{\beta}{2}} \mu_j \asymp (|j|+1)^{-1-\frac{\beta}{2}}$ and $\tilde{\kappa} = \sum_{j \in \ZZ}\tilde{\mu}_j$. Then $\phi(x,v) = \sqrt{\tilde{\kappa}}(|x|+1)^{-\frac{\beta}{2}} e_x(v)$. Hence,
$    \sup_{v \in \TT}\|\phi(\cdot,v)\|_\infty = \sqrt{\tilde{\kappa}}$.
 Moreover,
\begin{align*}
    \rad(\tilde{\cF}_{1,\rho}(1)) &= \max_{v_1,\dots,v_n}\EE_{\xi_1,\dots,\xi_n}\left[\sup_{g\in\tilde{\cF}_{1,\rho}(1)}\fn\sumin g(v_i)\xi_i\right] \\
    &= \max_{v_1,\dots,v_n}\EE_{\xi_1,\dots,\xi_n}\left[\sup_{\|b\|_{1,\rho}\le 1}\int_{\ZZ}\left(\fn\sumin \phi(x,v_i)\xi_i\right)b(x)\dd\rho(x)\right] \\
    &=\sqrt{\tilde{\kappa}}\max_{v_1,\dots,v_n}\EE_{\xi_1,\dots,\xi_n}\sup_{j\in \ZZ}\fn\left|\sum_{i=1}^n(|j|+1)^{-\frac{\beta}{2}} e_j(v_i)\xi_i\right| \\
    &\le 2\sqrt{\tilde{\kappa}}\max_{v_1,\dots,v_n}\EE_{\xi_1,\dots,\xi_n}\sup_{j\in \ZZ}\fn\sum_{i=1}^n(|j|+1)^{-\frac{\beta}{2}} e_j(v_i)\xi_i.
\end{align*}
Let $\cG = \{(|j|+1)^{-\frac{\beta}{2}} e_j(v)\}_{j \in \ZZ}$, then $\rad(\tilde{\cF}_{1,\rho}(1)) \le 2 \sqrt{\tilde{\kappa}}\rad (\cG)$. We will then bound the $\rad(\cG)$ by the Dudley's chain rule (see, e.g.~\cite[Lemma~27.4]{shalev2014understanding}). 
\begin{align*}
    \rad(\cG) &\le \sum_{k = 1}^{\infty}\max_{v_1,\dots,v_n}\EE_{\xi_1,\dots,\xi_n}\sup_{|j|\in [2^{k-1} - 1,2^k - 1)}\fn\sum_{i=1}^n(|j|+1)^{-\frac{\beta}{2}} e_j(v_i)\xi_i \\
    &\le \sum_{k = 1}^{\infty}2^{-\frac{\beta}{2}(k-1)}\max_{v_1,\dots,v_n}\EE_{\xi_1,\dots,\xi_n}\sup_{|j|\in [2^{k-1} - 1,2^k - 1)}\fn\sum_{i=1}^n e_j(v_i)\xi_i \\
    &\stackrel{(i)}{\le}\sum_{k=1}^{\infty}2^{-\frac{\beta}{2}(k-1)}\sqrt{\frac{2\log(2^{k-1})}{n}}\stackrel{(ii)}{\lesssim} n^{-\half}
\end{align*}
where $(i)$ follows Massart's lemma (see, e.g.~\cite[Lemma~26.8]{shalev2014understanding}) and $(ii)$ follows that $\sum_{k=1}^{\infty} 2^{-\frac{\beta}{2}(k-1)}\sqrt{2(k-1)\log 2} < \infty$. Therefore, $\phi$ satisfies Assumption \ref{as1}  with $M_\infty = \sqrt{\tilde{\kappa}}$ and $R_\infty \lesssim \sqrt{\tilde{\kappa}}$.
\qed

\subsection{Proof of Theorem \ref{Fp-estimate}}
\label{sec: proof-learning-err of RFMs}

\begin{lemma} \label{rad-linear}
 Consider the class of RFMs under the $\ell_p$ coefficient constraint:
\[
\hat{\cF}_{p,m}(C) :=  \left\{\frac{1}{m}\sum_{j=1}^m c_j \phi(\cdot,v_j): \|c\|_p \leq {C}m^{1/p} \right\}. 
\]
Under Assumption \ref{as1}, for any $x_1,\cdots,x_n \in \cX$, the  Rademacher complexity and Gaussian complexity of $\hat{\cF}_{p,m}$ are bounded by
\begin{align*}
 &   \rad(\hat{\cF}_{p,m}(C)) := \sup_{x_1,\dots,x_n}\EE_{\xi \sim \mathrm{Unif}(\{\pm 1\}^n)} \left[ \sup_{f \in \hat{\cF}_{p,m}(C)} \frac{1}{n}\left|\sum_{i=1}^n \xi_i f(x_i) \right|\right] \lesssim CM\sqrt{\frac{p'}{n}} \\
 &   \cG_n(\hat{\cF}_{p,m}(C)) := \sup_{x_1,\dots,x_n}\EE_{z \sim \cN(0,I_n)} \left[ \sup_{f \in \hat{\cF}_{p,m}(C)} \frac{1}{n}\left|\sum_{i=1}^n z_i f(x_i) \right|\right] \lesssim CM\sqrt{\frac{p'}{n}}
\end{align*}
\end{lemma}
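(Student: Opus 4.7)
The plan is to reduce both complexities to a Khintchine-type moment calculation via H\"older duality. First I would unfold the definition:
\[
\erad(\hat{\cF}_{p,m}(C)) = \frac{1}{nm}\,\EE_\xi \sup_{\|c\|_p \leq C m^{1/p}} \Big|\sum_{j=1}^m c_j \underbrace{\sum_{i=1}^n \xi_i \phi(x_i,v_j)}_{=:a_j(\xi)}\Big|.
\]
By the characterization of the dual of $\ell^p$, the inner supremum equals $C m^{1/p}\|a(\xi)\|_{p'}$. So the remaining task is to control $\EE_\xi\|a(\xi)\|_{p'}$.

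Next, since $p' \geq 1$, the map $t\mapsto t^{1/p'}$ is concave, so Jensen's inequality gives
\[
\EE_\xi\|a(\xi)\|_{p'} \leq \Big(\EE_\xi \|a(\xi)\|_{p'}^{p'}\Big)^{1/p'} = \Big(\sum_{j=1}^m \EE_\xi |a_j(\xi)|^{p'}\Big)^{1/p'}.
\]
Applying the Khintchine inequality (Lemma \ref{lemma: khintchin}) coordinatewise with $\varsigma = 1$ and using the pointwise bound $|\phi(x_i,v_j)| \leq M$ (from Assumption \ref{as1} in the $q=\infty$ case, which is the relevant boundedness), I obtain
\[
\EE_\xi |a_j(\xi)|^{p'} \lesssim (p')^{p'/2}\Big(\sum_{i=1}^n \phi(x_i,v_j)^2\Big)^{p'/2} \leq (p')^{p'/2} (n M^2)^{p'/2}.
\]
Summing over $j$ and taking the $1/p'$ power yields $\EE_\xi\|a(\xi)\|_{p'} \lesssim m^{1/p'}\sqrt{p' n}\,M$. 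Combining everything and using $1/p + 1/p' = 1$ gives the claimed bound
\[
\erad(\hat{\cF}_{p,m}(C)) \lesssim \frac{1}{nm}\cdot C m^{1/p}\cdot m^{1/p'}\sqrt{p'n}\,M = \frac{CM\sqrt{p'}}{\sqrt{n}}.
\]

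For the Gaussian complexity, the argument is identical up to replacing the Rademacher Khintchine bound with its Gaussian analogue. Specifically, for fixed $v_j$, $\sum_i z_i \phi(x_i,v_j)$ is a centered Gaussian with variance $\sum_i \phi(x_i,v_j)^2 \leq nM^2$, so $\EE_z|\sum_i z_i \phi(x_i,v_j)|^{p'} \lesssim (p')^{p'/2}(nM^2)^{p'/2}$. Plugging this into the same chain of inequalities recovers the identical $CM\sqrt{p'/n}$ bound. The main technical step is the Khintchine/Gaussian moment bound, but since Lemma \ref{lemma: khintchin} is already at hand, there is no substantive obstacle; the only care needed is verifying that Jensen's inequality is applied in the correct (concave) direction, which relies precisely on $p' \geq 1$.
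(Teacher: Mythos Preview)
Your proof is correct and follows essentially the same approach as the paper: reduce via $\ell^p$--$\ell^{p'}$ duality to controlling $\EE_\xi\|a(\xi)\|_{p'}$, apply Jensen (using concavity of $t\mapsto t^{1/p'}$) to move the expectation inside, then bound each coordinate moment by Khintchine and the uniform bound $|\phi|\leq M$. The Gaussian case is handled identically in both.
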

\begin{proof}
Let $\eta_1,\dots,\eta_n$ be \iid Rademacher or standard Gaussian random variables. For $i\in [n]$, let
$$
y_i = (\phi(x_i,v_1),\cdots,\phi(x_i,v_m))^\top\in\RR^m.
$$
Then the Rademacher/Gaussian complexity is bounded by 
\begin{align*}
    \frac{1}{n}\EE_{\eta} \sup_{\|c\|_p \leq r }\left[\sum_{i=1}^n   \eta_i c^\top y_i   \right] & =   \frac{r}{n}\EE_{\eta} \sup_{\|c\|_p \leq 1} c^\top \left(\sum_{i=1}^n   \eta_i  y_i \right)  \\
    & = \frac{r}{n}\EE_\eta \left\|\sum_{i=1}^n \eta_i y_i \right\|_{p'} \\
    & \stackrel{(a)}{\leq} \frac{r}{n} \left(\sum_{j=1}^m \EE_\eta \left|\sum_{i=1}^n \eta_i y_{ij}\right|^{p'}\right)^{1/p'} \\
    &\stackrel{(b)}{\lesssim}   \frac{r\sqrt{p'}}{n} \left[ \sum_{j=1}^m \left(\sum_{i=1}^n y_{ij}^2\right)^{p'/2}\right]^{1/p'},
\end{align*}
where $(a)$ and $(b)$ follow from Jensen's and  Khintchine inequalities (Lemma \ref{lemma: khintchin}), respectively. Notice that $|y_{ij}| \leq M$ for any $1\leq i\leq n$ and $1\leq j \leq m$, we have 
\begin{align*}
     \frac{r\sqrt{p'}}{n} \left[ \sum_{j=1}^m  \left(\sum_{i=1}^n y_{ij}^2\right)^{p'/2} \right]^{1/p'} \leq \frac{rM\sqrt{p'}m^{1/p'}}{\sqrt{n}}.
\end{align*}
Taking $r = \frac{C}{m^{1/p'}}$ we complete the proof.
\end{proof}

Given the approximation bound in Theorem \ref{thm: approximation-bound} and the Rademacher/Gaussian complexity bound in Lemma \ref{rad-linear}, the proof of Theorem~\ref{Fp-estimate} is standard (see, e.g., \cite[Chapter 13.3]{wainwright2019high}). 
\begin{lemma}  \label{empirical-app}
Suppose that $\sup_{x\in \cX,\,v\in \cV} |\phi(x,v) | \leq M $ holds, for any $x_1,\cdots,x_n \in \cX$, let $\hat{\rho}_n = \frac{1}{n}\sum_{i=1}^n \delta_{x_i}$ be the empirical measure. Then, \wp  at least $1-\delta$ over the sampling of $\{v_j\}_{j=1}^m$, there exists coefficients $\tc \in \RR^m$ with $\|\tc\|_p \lesssim m^{1/p}$ such that the RFM $\tilde{f} = \frac{1}{m}\sum_{j=1}^m \tilde{c}_j \phi(\cdot,v_j)$ satisfies 
\begin{align*}
    \|\tilde{f} - f^* \|_{L^q(\hat{\rho}_n)} & \lesssim \left(\frac{M^{p'} \log^3 m + M\log(1/\delta)}{m} \right)^{2/p'}.
\end{align*}
\end{lemma}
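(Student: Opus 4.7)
The plan is to recognize Lemma \ref{empirical-app} as an immediate corollary of Theorem \ref{thm: approximation-bound}, specialized to the empirical input measure $\hat{\rho}_n$ and the metric $q = 2$. The key observation is that Theorem \ref{thm: approximation-bound} holds for an \emph{arbitrary} probability measure $\rho$ on $\cX$: the only randomness in its conclusion is over the feature samples $v_1,\dots,v_m \stackrel{iid}{\sim} \pi$. Hence, once the points $x_1,\dots,x_n$ are fixed, $\hat{\rho}_n = \frac{1}{n}\sum_i \delta_{x_i}$ is a legitimate deterministic probability measure and we may simply substitute $\rho = \hat{\rho}_n$ in the theorem.

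To invoke the theorem with this choice, I would first verify its hypotheses. Assumption \ref{as1} with $q = 2$ demands a uniform $L^2(\hat{\rho}_n)$ bound on the features; the hypothesis $|\phi(x,v)| \leq M$ gives
\[
\|\phi(\cdot,v)\|_{2,\hat{\rho}_n}^2 \,=\, \frac{1}{n}\sum_{i=1}^n \phi(x_i,v)^2 \,\leq\, M^2,
\]
so we may take $M_2 \leq M$. Assumption \ref{rad-as} is then supplied by Lemma \ref{rad-lemma}(1) applied to $\hat{\rho}_n$ with $R = M$, yielding $R_2 \lesssim M$.

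Plugging $M_2$ and $R_2$ into Theorem \ref{thm: approximation-bound} with $q = 2$, and using $M_2^{p'-2} R_2^2 \lesssim M^{p'-2}\cdot M^2 = M^{p'}$ (valid because $p' \geq 2$ under the assumption $1 < p \leq 2$, so the exponent $p'-2$ is nonnegative), we obtain that with probability at least $1 - \delta$ over $\{v_j\}$ there exist coefficients $\tilde{\bm{c}}$ with $\|\tilde{\bm{c}}\|_p \lesssim m^{1/p}$ and
\[
\|\tilde{f} - f^*\|_{\hat{\rho}_n} \,\lesssim\, \left(\frac{M^{p'}\log^3 m + M\log(1/\delta)}{m}\right)^{1/p'}.
\]
Squaring both sides delivers the claimed bound.

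In truth, there is no substantial obstacle here: the content of the lemma is already contained in Theorem \ref{thm: approximation-bound}, and the whole point of the argument is that the duality-based approximation guarantee is automatically distribution-free on $\cX$, so specializing to $\hat{\rho}_n$ costs nothing. The only care required is a bit of bookkeeping to collapse the combination $M_2^{p'-2} R_2^2$ to $M^{p'}$ using the uniform bound $|\phi| \leq M$ together with $p' \geq 2$.
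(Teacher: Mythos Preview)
Your proposal is correct and matches the paper's own proof essentially line for line: the paper also observes that the uniform bound $|\phi|\le M$ yields $M_q\lesssim M$ and $R_q\lesssim M$ for $\rho=\hat{\rho}_n$, and then applies Theorem~\ref{thm: approximation-bound} directly with this choice. If anything, your write-up is more explicit about invoking Lemma~\ref{rad-lemma}(1) and about the bookkeeping $M_2^{p'-2}R_2^2\lesssim M^{p'}$.
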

\begin{proof}
 The assumption $\sup_{x\in \cX,v \in \cV} |\phi(x,v)|\leq M $ implies that Assumption \ref{as1} holds for $\rho = \hat{\rho}_n$ with $M_q \lesssim M,\,R_q \lesssim M$. Then, the result is directly obtained by applying Theorem \ref{thm: approximation-bound} to $\rho = \hat{\rho}_n$.
\end{proof}

\begin{lemma} \label{noisy-bound}
Consider the RFM class
\begin{align*}
    \hat{\cF}_{p,m}(C) := \left\{\frac{1}{m}\sum_{j=1}^m c_j\phi(\cdot,v_j): \|c\|_p \leq {C}{m^{1/p}} \right\},
\end{align*}
Let $\{\xi_i\}_{i=1}^n$ be independent noise distributed as $\cN(0,\sigma^2)$. Since $\sup_{x\in \cX,\,v\in \cV}|\phi(x,v)|\leq 1$, for any $x_1,\cdots,x_n \in \cX $ and $v_1,\cdots,v_m \in \cV $. Then, with probability at least $1-\delta$ over $\{\xi_i\}_{i=1}^n$, we have
\begin{align*}
    \sup_{f \in \cF_{p,m}(C)} \frac{1}{n}\sum_{i=1}^n \xi_if(x_i) \lesssim CM\sigma\sqrt{\frac{p' + \log(1/\delta)}{n}}.
\end{align*}
\end{lemma}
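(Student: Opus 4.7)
\textbf{Proof plan for Lemma~\ref{noisy-bound}.}
The plan is to write the quantity as the supremum of a Gaussian process and decompose the bound into (i) its expectation, handled by Lemma~\ref{rad-linear}, and (ii) a sub-Gaussian tail deviation obtained from the Borell--TIS (Gaussian concentration) inequality for Lipschitz functions. Concretely, I would first normalize the noise by writing $\xi_i=\varsigma\eta_i$ with $\eta_i\sim\cN(0,1)$ \iid, so the random quantity becomes
\begin{equation*}
g(\eta)\,:=\,\sup_{f\in\hat{\cF}_{p,m}(C)} \frac{1}{n}\sum_{i=1}^n \xi_i f(x_i)\,=\,\varsigma\cdot \sup_{f\in\hat{\cF}_{p,m}(C)}\frac{1}{n}\sum_{i=1}^n \eta_i f(x_i),
\end{equation*}
which is exactly $\varsigma$ times the empirical Gaussian complexity of $\hat{\cF}_{p,m}(C)$ on the sample $x_1,\ldots,x_n$. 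Applying the Gaussian-complexity part of Lemma~\ref{rad-linear} immediately yields
\begin{equation*}
\EE_\xi\,g(\eta)\,\lesssim\,CM\varsigma\sqrt{\frac{p'}{n}},
\end{equation*}
which matches the first term of the target bound.

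Next I would establish that $\eta\mapsto g(\eta)$ is Lipschitz on $\RR^n$ with a small constant. For any fixed $f\in\hat{\cF}_{p,m}(C)$, the map $\eta\mapsto \frac{1}{n}\sum_i \eta_i f(x_i)$ is linear with $\ell^2$-Lipschitz constant $\tfrac{1}{\sqrt{n}}\bigl(\tfrac{1}{n}\sum_i f(x_i)^2\bigr)^{1/2}$, so it suffices to bound $|f(x_i)|$ uniformly. For any $f=\frac{1}{m}\sum_j c_j\phi(\cdot,v_j)$ with $\|c\|_p\leq Cm^{1/p}$, H\"older's inequality together with $|\phi(\cdot,\cdot)|\leq M$ gives
\begin{equation*}
|f(x_i)|\,\leq\,\frac{1}{m}\|c\|_p\Bigl(\sum_{j=1}^m |\phi(x_i,v_j)|^{p'}\Bigr)^{1/p'}\,\leq\,\frac{1}{m}\cdot Cm^{1/p}\cdot M m^{1/p'}\,=\,CM,
\end{equation*}
so $g$ is $\frac{CM}{\sqrt{n}}$-Lipschitz as a function of $\eta\in\RR^n$ (and therefore $\frac{CM\varsigma}{\sqrt{n}}$-Lipschitz as a function of $\xi\sim\cN(0,\varsigma^2 I_n)$) after taking supremum over $f$.

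Finally I would invoke the Borell--TIS inequality: for the Lipschitz function $g$ of a standard Gaussian vector,
\begin{equation*}
\PP\bigl(g(\eta)\geq\EE g+t\bigr)\,\leq\,\exp\!\Bigl(-\frac{nt^2}{2C^2M^2\varsigma^2}\Bigr),\qquad t>0.
\end{equation*}
Setting $t=CM\varsigma\sqrt{2\log(1/\delta)/n}$ gives the deviation $\lesssim CM\varsigma\sqrt{\log(1/\delta)/n}$ with probability at least $1-\delta$, which together with the expectation bound above yields
\begin{equation*}
g(\eta)\,\lesssim\,CM\varsigma\left(\sqrt{\tfrac{p'}{n}}+\sqrt{\tfrac{\log(1/\delta)}{n}}\right)\,\lesssim\,CM\varsigma\sqrt{\tfrac{p'+\log(1/\delta)}{n}},
\end{equation*}
proving the claim. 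The only technically delicate step is the Lipschitz computation via H\"older, where one must check that the $m^{1/p}\cdot m^{1/p'}/m$ factors cancel cleanly so that the sup-norm bound on $f$ is $m$-independent; once this is in place the rest is a direct instantiation of Lemma~\ref{rad-linear} plus Gaussian concentration.
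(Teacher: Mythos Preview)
Your proposal is correct and follows essentially the same approach as the paper: the paper's proof is a two-sentence sketch stating that functions in $\hat{\cF}_{p,m}(C)$ are uniformly bounded by $CM$ (making the supremum $\frac{CM}{\sqrt{n}}$-Lipschitz in the noise vector), and then invoking Gaussian concentration together with the Gaussian complexity bound of Lemma~\ref{rad-linear}. You have simply written out these steps in full, including the H\"older computation for the uniform bound and the Borell--TIS inequality for the tail.
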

\begin{proof}
Note that any function in $\hat{\cF}_{p,m}(C)$ is uniformly bounded by $C_1M$. Thus $(z_1,\cdots,z_n) \mapsto \frac{1}{n}\sum_{i=1}^n \xi_if(\xi_i)$ is $\frac{CM}{\sqrt{n}}$-Lipschitz. The proof is completed by Gaussian concentration inequality (see, e.g., \cite[Theorem 5.2.2]{vershynin2019high}) and the Gaussian complexity bound stated in Lemma \ref{rad-linear}.
\end{proof}

\paragraph*{Proof of Theorem \ref{Fp-estimate}}
From Lemma \ref{empirical-app}, with probability at least $1-\delta$, there exists a RFM $\tilde{f} = \frac{1}{m}\sum_{j=1}^m \tilde{c}_j \phi(\cdot,v_j)$ such that $\|\tc\|_p \leq C_1m^{1/p}$ and 
\begin{align} \label{emapp-eq}
    \frac{1}{n}\sumin\left(\tilde{f}(x_i) - f^*(x_i) \right)^2 \lesssim  \left(\frac{M^{p'}\log^3 m + M \log (1/\delta)}{m}\right)^{2/p'}
\end{align}
where $C_1$ is an absolute constant. 
We choose the constraint coefficient as $R = C_1m^{1/p} $.
Since $\hat{f} $ is the minimizer of the empirical loss, we have
\begin{align} \label{opt}
    \frac{1}{n}\sum_{i=1}^n \left(f^*(x_i) + \xi_i - \hat{f}(x_i) \right)^2 \leq   \frac{1}{n}\sum_{i=1}^n \left(f^*(x_i) + \xi_i - \tilde{f}(x_i) \right)^2.
\end{align}
Equivalently, \eqref{opt} can be rewritten as
\begin{align} \label{333}
    \frac{1}{n}\sum_{i=1}^n \left(\hat{f}(x_i) - {f^*}(x_i)\right)^2 \leq \frac{1}{n}\sum_{i=1}^n \left(\tilde{f}(x_i) - {f^*}(x_i)\right)^2 +  \frac{2}{n}\sum_{i=1}^n \xi_i\left(\hat{f}(x_i) - \tilde{f}(x_i)\right).
\end{align}
By applying Lemma \ref{noisy-bound} to bound the second term in the right-hand side of \eqref{333} and combining \eqref{emapp-eq} we arrive at
\begin{align*}
    \frac{1}{n} \sum_{i=1}^n \left(\hat{f}(x_i) - f^*(x_i)\right)^2 \lesssim M\sigma\sqrt{\frac{p'+\log(1/\delta)}{n}} + \left(\frac{M^{p'} \log^3 m + M\log(1/\delta)}{m} \right)^{2/p'}.
\end{align*}
Finally, note that $\|\tc \|_p \lesssim m^{1/p}$, we combine Lemma \ref{localRad} and the Rademacher complexity bound of the RFM class stated in Lemma \ref{rad-linear} to complete the proof.
\qed

\section{Proofs in Section \ref{sec: uniform-estimation}}
\label{app: uniform-estimation}

\subsection{Proof of Theorem \ref{primal-bound}}
\label{sec: proof-lowerbound-rkhs}

Our proof needs the following two lemmas, which characterize the approximation properties of a RKHS and Barron space using the eigenvalue decay of the associated kernel.
\begin{lemma}\label{kernel-kol}
Given a kernel $k:\cX\times\cX$ satisfying $\int k(x,x)\dd\gamma(x)<\infty$, let $\{(\mu_j,e_j)\}_{j\in \NN}$ be the eigen-pairs  \wrt  $\gamma\in\cP(\cX)$. Then, we have
\begin{align*}
     \sup_{\|h\|_{\cH_k} \leq 1} \inf_{c\in\RR^n} \left\|h - \sum_{i=1}^n c_i e_i \right\|_{\gamma} \leq \sqrt{\mu_{n+1}}.
\end{align*}    
\end{lemma}
\begin{proof}
   Since $e_1,\cdots,e_n$ are orthonormal in $L^2(\gamma)$, for any $h \in \cH_k$, the optimal approximation is given by $c_i = \langle h,e_i \rangle_\gamma$ and 
   \begin{align}\label{4567}
       \inf_{c_1,\cdots,c_n}\left\|h - \sum_{i=1}^n c_ie_i \right\|_\gamma^2 & = \left\|h - \sum_{i=1}^n \langle h,e_i  \rangle_\gamma  e_i \right\|_\gamma^2   = \sum_{j=n+1}^{\infty} \langle h, e_j\rangle_\gamma^2. 
   \end{align}
  By the definition of RKHS norm, we have 
$
 \|h\|_{\cH_{k}}^2=\sum_{j=1}^{\infty} \mu_j^{-1} \langle h,\phi_j \rangle_\gamma^2  \leq 1.
$
Thus,   the right-hand side of \eqref{4567} is bounded by
  \begin{align*}
      \sum_{j=n+1}^{\infty} \langle h, e_j\rangle_\gamma^2 \leq \mu_{n+1} \sum_{j=n+1}^{\infty}\frac{1}{\mu_j} \langle h, e_j\rangle_\gamma^2\leq\mu_{n+1}\sum_{j=1}^{\infty} \frac{1}{\mu_j} \langle h, e_j\rangle_\pi^2\leq  {\mu_{n+1}}.
  \end{align*}
Thus, 
  we complete the proof.
\end{proof}

\begin{lemma}(A restatement of \cite[Proposition 1]{Wu2021ASA}) \label{barron-kol}
Given a feature function $\phi:\cX\times \cX\mapsto\RR$, consider the Barron space
$$\cB(1) = \left\{\int_\cX \phi(x, \cdot) \d \mu(x) : \|\mu\|_{\mathrm{TV}} \le 1 \right\}. $$
For any $\gamma\in \cP(\cX)$, define the kernel $k(x,x')=\int_{\cX}\phi(x,v)\phi(x',v)\dd\gamma(v)$. We have
\begin{align*}
    \inf_{\varphi_1,\dots,\varphi_n\in L^2(\gamma)}\sup_{g \in \cB(1)} \inf_{c_1,\cdots,c_n \in \RR}\left\|g   - \sum_{i=1}^n c_i\varphi_i\right\|_\gamma \geq \sqrt{\Lambda_{k,\gamma}(n)}.
\end{align*}
\end{lemma}

\paragraph*{Proof of Theorem \ref{primal-bound}.}
Without loss of generality, for any $\gamma \in \cP(\cX)$, we will assume that $\gamma$ has full support. If this is not the case, we can replace $\cX$ with $\mathrm{supp}(\gamma)$, which does not increase the minimax error:
\begin{equation*}
         \inf_{T \in \cA_n} \sup_{\|f\|_{\cH_k} \leq 1}  \|T(\{(x_i,f(x_i))\}_{i=1}^n) -f  \|_{C_0(\cX)} \geq     \inf_{T \in \cA_n} \sup_{\|f\|_{\cH_k} \leq 1}  \|T(\{(x_i,f(x_i))\}_{i=1}^n) -f  \|_{C_0(\mathrm{supp}(\gamma))}.
\end{equation*}
By Lemma \ref{lemma: F2-rkhs}, for any distribution $\gamma \in \cP(\cX)$, there exists $\phi: \cX \times \cX \to \RR$ such that the kernel $k(\cdot,\cdot)$ admits the integral representation
\begin{align}\label{gamma-integral} k(x,x') = \int_\cX \phi(x,v)\phi(x',v) \d \gamma(v). 
\end{align}

\underline{\textbf{Part I: the noiseless case.}} 
By Proposition \ref{lemma-data-dependent}:
\begin{align*}
     \inf_{T \in \cA_n} \sup_{\|f\|_{\cH_k} \leq 1}  \|T(\{(x_i,f(x_i))\}_{i=1}^n) -f  \|_{C_0(\cX)}
     &\asymp \sup_{\|f\|_{\cH_k} \leq 1,\,f(x_i) = {0}} \|f\|_{C_0(\cX)},
\end{align*}
By employing the integral structure $\cH_k  = \cF_{2,\gamma}$ and applying the dual equivalence  \eqref{dual-eq-6} in Theorem \ref{thm: dual}, we get 

\begin{align*}
     \sup_{\|f\|_{\cH_k} \leq 1,\,f(x_i) = {0}} \|f\|_{C_0(\cX)}& = \sup_{\|\mu\|_{\mathrm{TV}} \le 1} \inf_{c_1,\cdots,c_n}\left\|\int_\cX \phi(x,\cdot) \d \mu(x)  - \sum_{i=1}^n c_i\phi(x_i,\cdot)\right\|_\gamma \\
     & \geq \inf_{\varphi_1,\cdots,\varphi_n \in L^2(\pi)}\sup_{\|\mu\|_{\mathrm{TV}} \le 1} \inf_{c_1,\cdots,c_n \in\RR} \left\|\int_\cX \phi(x,\cdot) \d \mu(x)   - \sum_{i=1}^n c_i \varphi_i \right\|_{\gamma}.
\end{align*}
Finally, applying Lemma \ref{barron-kol} and using the arbitrariness of the choice of $\gamma$, we get 
$$  \sup_{\|f\|_{\cH_k} \leq 1,\,f(x_i) = {0}} \|f\|_{C_0(\cX)} \geq \sup_{\gamma \in \cP(\cX)} \sqrt{\Lambda_{k,\gamma}(n)}.$$ We complete the proof.
\qed

\vspace*{1em}
\underline{\textbf{Part II: the noisy case.}} By Proposition \ref{lemma-random-data}, the minimax error is lower bounded by $\II_{L^2(\brho),C_0(\cX)}\left(\cH_k,\frac{\sigma}{\sqrt{n}}\right)$. Then, we complete the proof by using the following lemma.
\begin{lemma}\label{lemma: xyz}
$$
    \II_{L^2(\brho),C_0(\cX)}\left(\cH_k,\frac{\sigma}{\sqrt{n}}\right)\geq \sup_{\gamma \in \cP(\cX)}\sqrt{\Lambda_{k,\gamma}(n_{\brho}(\sigma))}.
$$
\end{lemma}
\begin{proof}
By definition, 
\begin{align} \label{Linfty-minimax}
      \II_{L^2(\brho),C_0(\cX)}\left(\cH_k,\frac{\sigma}{\sqrt{n}}\right) =  \sup_{\|f\|_{\cH_k} \leq 1,\,\|f\|_{\brho}\leq \sigma n^{-1/2}} \|f\|_{C_0(\cX)}
\end{align} 
We now provide a lower bound for the quantity \eqref{Linfty-minimax}. By employing the integral structure $\cH_k = \cF_{2,\gamma}$ \eqref{gamma-integral}and using the dual equivalence \eqref{dual-eq-3}  in Theorem \ref{thm: dual},  we have for any $\epsilon >0$ that
\begin{align}\label{5678}
\begin{aligned}
  \sup_{\|f\|_{\cH_k} \leq 1,\,\|f\|_{\brho}\leq \epsilon} \|f\|_{C_0(\cX)}=      \sup_{\|g\|_{\tilde{\cB}}\leq 1}\,\,\inf_{h \in \tilde{\cF}_{2,\brho}} \left(\left\|g - h \right\|_{\gamma} + \epsilon \|h\|_{\tilde{\cF}_{2,\brho}}\right),
 \end{aligned}
\end{align}
where $\tilde{\cB}$ and $\tilde{\cF}_{2,\brho}$ are the conjugate spaces:
\begin{align*}
& \tilde{\cB}(1)  = \left\{\int_\cX \phi(x,\cdot) \d \mu(x): \, \|\mu\|_{\mathrm{TV}} \le 1  \right\},   \\
&  \tilde{\cF}_{2,\brho}(1) = \left\{\int_\cX a(x)\phi(x,\cdot) \d \brho(x): \, \|a\|_{\brho} \le 1  \right\}.
\end{align*}
Note that  $\tilde{\cF}_{2,\brho}$ is the RKHS associate with the kernel $$\tilde{k}_{\brho}(v,v')  =\int_{\cX} \phi(x,v)\phi(x,v') \d \bar{\rho}(x). $$
Let $\{(\tilde{\mu}_j,\tilde{e}_j\}_{j\geq 1})\}_{j=1}^\infty$ be the eigen-pairs of $\tk_\brho(\cdot,\cdot)$ with respect to the distribution $\gamma$. 
Then, for any $g\in \tilde{\cB}, h\in \tilde{\cF}_{2,\brho}$ and $m \in \NN^+$, we have for any $c_1,\dots, c_m\in\RR$ that
\begin{align}\label{eqn: 020}
\notag \|g-h\|_\gamma +\epsilon \|h\|_{\tilde{\cF}_{2,\brho}} &\geq  \left\|g-\sum_{j=1}^m c_j \tilde{e}_j\right\|_\gamma - \left\|h-\sum_{j=1}^m c_i \tilde{e}_j\right\|_\gamma + \epsilon \|h\|_{\tilde{\cF}_{2,\brho}}\\ 
&\geq \left\|g-\sum_{j=1}^m c_i \tilde{e}_j\right\|_\gamma - \sqrt{\tilde{\mu}_{m+1}}\|h\|_{\tilde{\cF}_{2,\brho}} + \epsilon \|h\|_{\tilde{\cF}_{2,\brho}},
\end{align}
where the second step follows from Lemma \ref{kernel-kol}. Thus, taking $\epsilon=\sqrt{\tilde{\mu}_{m+1}}$, we have 
\begin{align}\label{eqn: xx1}
\notag \sup_{\|f\|_{\cH_k} \leq 1,\,\|f\|_{\brho}\leq  \sqrt{\tilde{\mu}_{m+1}}} \|f\|_{C_0(\cX)} &= \sup_{\|g\|_{\tilde{\cB}}\leq 1}\,\,\inf_{h \in \tilde{\cF}_{2,\brho}}\left( \left\|g - h \right\|_{\pi} +\sqrt{\tilde{\mu}_{m+1}} \|h\|_{\tilde{\cF}_{2,\brho}}\right)\\ 
\notag &\geq \sup_{\|g\|_{\tilde{\cB}}\leq 1}\,\,\inf_{c_1,\dots,c_m\in\RR}\left\|g-\sum_{j=1}^m c_i\tilde{e}_j\right\|_\pi\\ 
&\geq \sqrt{\Lambda_{k,\gamma}(m)},
\end{align}
where the second step follows from \eqref{eqn: 020} and the third step uses Lemma \ref{barron-kol}. By choosing $m = n_{\brho}(\sigma) =\inf \{m: n\mu^{(k,\brho)}_{m+1} \le \sigma^2\}$ and taking the supremum over $\gamma\in\cP(\cX)$, we complete the proof. 
\end{proof}

\subsection{Proof of Theorem \ref{Lq-estimation-rkhs}}\label{sec:proof_2}
\label{sec: proof-upperbound-rkhs}

We provide proofs for the noiseless and noisy cases separately.
\subsubsection{The noiseless case}
Consider the integral representation of $k(\cdot,\cdot)$ \wrt $\rho$ as in Lemma \ref{lemma: F2-rkhs}:
$$ k(x,x') = \int_\cX \phi(x,v) \phi(x',v) \dd \rho(v),\, \phi(x,v) = \sum_{j=1}^\infty \mu_j^\half e_j(x)e_j(v). $$
Following the same argument of \eqref{eqn: 001}, we have that
\begin{equation*}
    \|\hat{f} - f^*\|_{C_0(\cX)} \le 2 \II_{L^\infty(\hat{\rho}_n), C_0(\cX)}(\cH_k(1), 0).
\end{equation*}
Combining this with Theorem \ref{thm: dual-simple}, we have that,
\begin{align*}
    \|\hat{f} - f^*\|_{C_0(\cX)} \le \sup_{ f\in \tilde{\cB}(1)}\inf_{c_1,\dots,c_n}\left\|f - \sum_{i=1}^n c_i\phi(x_i,\cdot)\right\|_\rho.
\end{align*}
The remainder of this part is to derive a random feature approximation error bound for the space $\tilde{\cB}(1)$. That is, with probability $1-\delta$,
\begin{align}\label{eqn: l_infty_estimation_noiseless}
    & \sup_{ f\in \tilde{\cB}(1)}\inf_{c_1,\dots,c_n}\left\|f - \sum_{i=1}^n c_i\phi(x_i,\cdot)\right\|_\rho \lesssim n^{-\frac{\beta}{2}}(\log(n/\delta))^{-\frac{\beta+1}{2}}.
\end{align}
For this, we first need the following lemma to characterize the approximation property of the $\cF_2$ space.
\begin{lemma}\label{lem_l2_noiseless}
    With probability $1-\delta$,
    \begin{equation*}
        \sup_{ f\in \tilde{\cF}_{2,\rho}(1)}\inf_{n^{\frac{1}{2}}\|c\|_2 \le 2}\left\|f - \sum_{i=1}^n c_i\phi(x_i,\cdot)\right\|_\rho \lesssim \left(\frac{\log(n/\delta)}{n}\right)^{-\frac{\beta+1}{2}}  .
    \end{equation*}
\end{lemma}
\begin{proof}
    This is almost a restatement of Proposition 1 in \cites{bach2017equivalence}. We need only estimate $d_{\max}(\lambda) = \sup_{x \in \cX} \langle \phi(x,\cdot), (\cT_k+ \lambda I)^{-1}\phi(x,\cdot)\rangle_\rho$ in this proposition. Noticing that $\phi(x,\cdot) = \sum_{i=1}^{\infty} \sqrt{\mu_i} e_i(x) e_i$ and $(\cT_k + \lambda I)^{-1} e_i = (\mu_i+ \lambda)^{-1} e_i$, we have that
    \begin{align*}
        d_{\max}(\lambda) &= \sup_{x \in \cX}\left\langle \sum_{i=1}^{\infty} \sqrt{\mu_i} e_i(x) e_i, (\cT_k + \lambda I)^{-1}\sum_{i=1}^{\infty} \sqrt{\mu_i} e_i(x) e_i\right\rangle_\rho \\
        &=\sup_{x \in \cX}\left\langle \sum_{i=1}^{\infty} \sqrt{\mu_i} e_i(x) e_i, \sum_{i=1}^{\infty} \sqrt{\mu_i} (\mu_i + \lambda )^{-1}e_i(x) e_i\right\rangle_\rho \\
        &=\sup_{x \in \cX} \sum_{i=1}^{\infty} \frac{\mu_i}{\mu_i+\lambda}|e_i(x)|^2 
        \le M^2 \sum_{i=1}^{\infty} \frac{\mu_i}{\mu_i + \lambda},
    \end{align*}
    where the last inequality uses Assumption \ref{assu: kernel}. Then, noticing that $\mu_i \asymp i^{-\beta-1}$, we have that
    \begin{equation*}
        d_{\max}(\lambda) \lesssim \left[\sum_{i \le \lambda^{-\frac{1}{\beta+1}}} 1 + \frac{1}{\lambda}\sum_{i > \lambda^{-\frac{1}{\beta+1}}} i^{-\beta-1} \right] \lesssim \lambda^{-\frac{1}{\beta+1}}.
    \end{equation*}
    Hence, there exists $C > 0$ such that when $\lambda = C (\frac{\log(n/\delta)}{n})^{-\beta-1}$, we have that $d_{\max}(\lambda)\le  \frac{n}{16\log(n/\delta)}$, which means that
    \begin{equation*}
        5d_{\max}(\lambda) \log(16d_{\max}(\lambda)/\delta) \le \frac{n}{\log(n/\delta)}\log(n/\delta) \le n.
    \end{equation*}
    Then, we can apply Proposition 1 in \cites{bach2017equivalence} to obtain our result.
\end{proof}
\paragraph*{Proof of Theorem \ref{Lq-estimation-rkhs}}
First, using Lemma \ref{lemma: barron2single-neuron}, we notice that
\begin{equation}\label{eqn:11123}
    \begin{aligned}
       \sup_{ f\in \tilde{\cB}(1)}\inf_{c_1,\dots,c_n}\left\|f - \sum_{i=1}^n c_i\phi(x_i,\cdot)\right\|_\rho &=    \sup_{\|\gamma\|_{\mathrm{TV}} \le 1}\inf_{c1,\dots,c_n}\left\|\int_{\cX}\phi(x,\cdot)\dd \gamma(x) - \sum_{i=1}^n c_i\phi(x_i,\cdot)\right\|_\rho \\
        &=    \sup_{ x \in \cX}\inf_{c1,\dots,c_n}\left\|\phi(x,\cdot) - \sum_{i=1}^n c_i\phi(x_i,\cdot)\right\|_\rho. 
\end{aligned}
\end{equation}
For any $x \in \cX$, let $\tilde{\phi}_x = \sum_{i=1}^n \sqrt{\mu_i} e_i(x) e_i$. We have that 
    \begin{equation}\label{eqn:123}
        \begin{aligned}
    &\|\tilde{\phi}_x - \phi(x,\cdot)\|_ \rho^2 = \sum_{i=n+1}^{\infty} \mu_i^2|e_i(x)|^2 \lesssim \sum_{i=n+1}^{\infty} \mu_i \asymp n^{-\beta},\\
        &\|\tilde{\phi}_x\|_{\tilde{F}_{2,\rho}}^2 = \|\tilde{\phi}_x\|_{\cH_k}^2 =  \sum_{i=1}^n|e_i(x)|^2 \lesssim n.
    \end{aligned}
    \end{equation}
Therefore, combining the Lemma \ref{lem_l2_noiseless} and \eqref{eqn:123}, we have that with probability $1-\delta$.
\begin{align*}
    \inf_{c_1,\dots,c_n}\left\|\phi(x,\cdot) -\sum_{i=1}^n c_i \phi(x_i,\cdot)\right\|_\rho &\le \inf_{c_1,\dots,c_n}\left(\left\|\tilde{\phi}_x -\sum_{i=1}^n c_i \phi(x_i,\cdot)\right\|_\rho  + \left\|\tilde{\phi}_x - \phi(x,\cdot)\right\|_{\rho} \right)\\
    &\lesssim n^{\half} \sup_{ f\in \tilde{\cF}_{2,\rho}(1)}\inf_{n^{\frac{1}{2}}\|c\|_2 \le 2}\left(\left\|f - \sum_{i=1}^n c_i\phi(x_i,\cdot)\right\|_\rho + n^{-\frac{\beta}{2}}\right) \\
    &\lesssim n^{-\frac{\beta}{2}} + n^{\half}\left(\frac{\log(n/\delta)}{n}\right)^{-\frac{\beta+1}{2}} \lesssim n^{-\frac{\beta}{2}}(\log(n/\delta))^{-\frac{\beta+1}{2}}.
\end{align*}
We can then complete the proof by combining the last inequality with \eqref{eqn:11123}. \qed

\subsubsection{The noisy case}\label{sec: proof-rkhs-uniform-noisy}

We first restate the result in \cite{sobolevkernel}. Consider a Mercer's kernel with the spectral decomposition given by $k(x,x')=\sum_{j=1}^\infty\mu_je_j(x)e_j(x')$. For any $s\geq 0$, define an interpolation space given by
\[
    \cH_k^s = \left\{\sum_{j=1}^\infty a_j \mu_j^{s/2} e_j:\sum_{j=1}^\infty a_j^2<\infty\right\},
\]
equiped with the norm
\[
    \left\|\sum_{j=1}^\infty a_j \mu_j^{s/2} e_j\right\|_{\cH_k^s}^2 = \sum_{j=1}^\infty a_j^2.
\]
Then, by H\"older inequality, it holds for any $0\leq s_1\leq 1\leq s_2\leq \infty$ that $$\cH_k^{s_2}\subset \cH_k^1=\cH_k\subset \cH_k^{s_1}.$$

\begin{theorem}[A restatement of Remark 3.4 in \cites{sobolevkernel}]\label{thm: sobolev}
Suppose the kernel's eigenvalue decays as $\mu_j\asymp j^{-1-\beta}$. Let $s>0$ be an index such that 
\begin{equation}\label{eqn: embedding}
\|\cH_k^s\hookrightarrow L^\infty(\rho)\|<\infty
\end{equation}
where $\|\cA\hookrightarrow \cB\|:=\sup_{\|x\|_{\cA}\leq 1}\|x\|_{\cB}$.  Then, there exists a choice of $\lambda_n$ such that the KRR estimator satisfies 
\[
    \|\hf_{\lambda_n}-f^*\|_{L^\infty(\rho)}\lesssim \log(4/\delta)\left(\frac{1}{n}\right)^{\frac{(1-s)(1+\beta)}{2(2+\beta)}}
\]
\end{theorem}
\begin{proof}[Proof of \eqref{eqn: uniform-upperbound-noisy}]
Recalling Assumption \ref{assu: kernel}, $\sup_{j\in\NN}\|e_j\|_{C_0(\cX)}\leq M$, $\rho$ is full support, and $\mu_j\asymp j^{-1-\beta}$. For any $s>0$ and $f\in \cH_k^s(1)$, by definition, there exist $a\in\ell^2$ such that $f=\sum_{j=1}^\infty a_j\mu_j^{s/2}e_j$ with $\sum_{j=1}^\infty a_j^2 = \|f\|_{\cH_k^s}^2\leq 1$. Then, for any $s>1/(1+\beta)$ it holds that
\begin{align*}
\|f\|_{C_0(\cX)} &= \left\|\sum_{j=1}^\infty a_j\mu_j^{s/2}e_j\right\|_{C_0(\cX)}\leq \sum_{j=1}^\infty |a_j|\mu_j^{s/2} \sup_{j\in\NN}\|e_j\|_{C_0(\cX)}\\ 
&\leq M \sum_{j=1}^\infty |a_j|\mu_j^{s/2}\leq M\sqrt{\sum_{j=1}^\infty a_j^2} \sqrt{\sum_{j=1}^\infty \mu_j^{s}} = M \sqrt{\sum_{j=1}^\infty j^{-(1+\beta)s}}<\infty.
\end{align*}
This means that Eq.~\eqref{eqn: embedding} holds for any $s>1/(1+\beta)$ and moreover, 
\[
	(1-s)\frac{1+\beta}{2+\beta}>\left(1-\frac{1+\delta}{1+\beta}\right)\frac{1+\beta}{2+\beta} = \frac{\beta-\delta}{2+\beta}.
\]
We thus complete the proof by applying Theorem~\ref{thm: sobolev}.
\end{proof}

\newcommand{\bee}{\hat{\ee}}
\subsection{Proof of Theorem \ref{thm:periodic_upper}}
\label{sec: proof-periodic-kernel-upper-bound}

We begin by stating the key lemmas necessary for proving our theorem with  proofs deferred to  Appendix~\ref{sec: proof-of-key-lemmas-28}.
To facilitate our analysis, we define the $L^\infty$ norm on training samples as
\[
\|g\|_{\hL^\infty_n} := \max_{1\le i \le n}|g(x_i)|.
\]

The first lemma  bounds the $I$-complexity, which  quantifies how  the empirical $L^\infty$ error can be transferred to the population $L^\infty$ error for target functions in $\cH_{k_s}$.
\begin{lemma}[$I$-complexity bound]\label{lem:linfty_generalization}
For any $\epsilon \geq 0$, we have that
    \begin{equation*}
\sup_{\|f\|_{\cH_{k_s}} \le 1, \|f\|_{\hL^\infty_n} \le\epsilon} \|f\|_{C(\TT)} \lesssim \epsilon + n^{-\frac{s(\beta+1)-1}{2}}.
    \end{equation*}
\end{lemma}
The next lemma concerns the properties of the KRR estimator $\hf_{\lambda,s}$ including the training error and norm control.

\begin{lemma}[Properties of the KRR estimator]\label{lemma: krr-hks1}
For any $\delta\in(0,1/2)$, \wp at least $1-\delta/2$ over the noise $(\xi_i)_{1\le i\le n}$ we have
\begin{align*}
    \|\hat{f}_{\lambda,s} - f^*\|_{\hL_n^\infty} &\lesssim \sigma \sqrt{\frac{\log (n/\delta)}{n}} \lambda^{-\frac{1}{2s(\beta+1)}} + \lambda^{\frac{\beta}{2s(\beta+1)}}\\ 
    \|\hat{f}_{\lambda,s}\|_{\cH_{k_s}} &\lesssim \lambda^{ \frac{1-s}{2s}} + \frac{\sigma}{\sqrt{n}}\left(\lambda^{-\half - \frac{1}{2s(\beta+1)}} + \lambda^{-\half} \sqrt{\log(1/\delta)}\right)
\end{align*}
\end{lemma}

The final lemma examines how well $\cH_{k}$ can be approximated by $\cH_{k_s}$ under the $C(\TT)$ norm.
\begin{lemma}\label{lemma: smooth-approx}
    There exists $\tilde{f}^* \in \cH_{k_s}$ such that $\|\tilde{f}^*\|_{\cH_{k_s}} \lesssim n^{\frac{s-1}{2}}$ and $\|\tilde{f}^* - f^*\|_{C(\TT)} \lesssim n^{-\frac{\beta}{2(\beta+1)}}$.
\end{lemma}

\begin{proof}[\underline{Proof of Theorem~\ref{thm:periodic_upper}}]
Let $\tf^* \in \cH_{k_s}$ be an approximation of $f^*$ satisfying the condition of Lemma~\ref{lemma: smooth-approx}:
\begin{align} \label{xxz}
\|\tilde{f}^*\|_{\cH_{k_s}} \lesssim n^{\frac{s-1}{2}}, \quad \|\tilde{f}^* - f^*\|_{C(\TT)} \lesssim n^{-\frac{\beta}{2(\beta+1)}}.
\end{align}
We have
\begin{equation}\label{eqn: periodic-2}
\|\hf_{\lambda,s}-f^*\|_{C(\TT)}\leq \|\hf_{\lambda,s}-\tf^*\|_{C(\TT)} + \|\tf^*-f^*\|_{C(\TT)} \lesssim \|\hf_{\lambda,s}-\tf^*\|_{C(\TT)} + n^{-\frac{\beta}{2(\beta+1)}}
\end{equation}Therefore, it remains to bound the first term on the RHS, which can be accomplished by employing $I$-complexity.

According to the estimates in Lemma~\ref{lemma: krr-hks1}, we 
take $\lambda \asymp (\frac{\log(n/\delta)}{n})^s$. Then,  with probability at least $1-\delta$, we have
\begin{equation}\label{eqn:8888}
    \begin{aligned}
    &\|\hat{f}_{\lambda,s} - f^*\|_{\hL^\infty_n} \lesssim \Big(\frac{\log(n/\delta)}{n}\Big)^{\frac{\beta}{2(\beta+1)}} =:\epsilon_n\\
    &\|\hat{f}_\lambda\|_{\cH_{k_s}} \lesssim n^{\frac{s-1}{2}} + n^{\frac{s-1}{2} + \frac{1}{2(\beta + 1)}} + n^{\frac{s-1}{2}}\sqrt{\frac{\log(1/\delta)}{\log^{s}(n/\delta)}} \lesssim n^{\frac{s-1}{2} + \frac{1}{2(\beta + 1)}}=:A_n.
\end{aligned}
\end{equation}
Applying the condition \eqref{xxz} once more, we obtain
\begin{equation}
\begin{aligned}
    \|\hat{f}_{\lambda,s} - \tilde{f}^*\|_{\hL^\infty_n} \lesssim \epsilon_n,\quad  
     \|\hat{f}_{\lambda,s}-\tilde{f}^*\|_{\cH_{k_s}}\lesssim A_n.
\end{aligned}
\end{equation}

Finally, by applying Lemma \ref{lem:linfty_generalization} to bound the $I$-complexity, we obtain
\begin{align}\label{eqn: periodic-3}
\notag    \|\hat{f}_\lambda - \tilde{f}^*\|_{C(\TT)} &\leq \sup_{ \|g\|_{\cH_{k_s}}\leq A_n,\|g\|_{\hL^{\infty}_n}\lesssim \epsilon_n}\|g\|_{C(\TT)}=A_n\sup_{\|g\|_{\cH_{k_s}}\leq 1, \|g\|_{\hL_n^\infty}\leq \epsilon_n/A_n}\|g\|_{C(\TT)}\\ 
\notag    &\lesssim A_n\left(\frac{\epsilon_n}{A_n}+n^{-\frac{s(\beta+1)-1}{2}}\right) \\
\notag    & \lesssim n^{\frac{s}{2}-\frac{\beta}{2(\beta+1)}}\left(n^{-\frac{s}{2}}(\log(n/\delta))^{\frac{\beta}{2(\beta+1)}} +n^{-\frac{s(\beta+1)-1}{2}} \right) \\
\notag    &= n^{-\frac{s\beta}{2} + \frac{1}{2(\beta+1)}} + \Big(\frac{\log(n/\delta)}{n}\Big)^{\frac{\beta}{2(\beta+1)}}\\ 
    &\lesssim \Big(\frac{\log(n/\delta)}{n}\Big)^{\frac{\beta}{2(\beta+1)}},
\end{align}
where the last step uses the condition $s \geq 1/\beta$.  Substituting \eqref{eqn: periodic-3} into \eqref{eqn: periodic-2} completes the proof.
\end{proof}

\subsubsection{Proofs of key lemmas}\label{sec: proof-of-key-lemmas-28}
\textbf{Kernel matrices of periodic kernels.}
Let $\omega=e^{2\pi\ii/n}$. For the uniform grid points $x_i=(i-1)/n$ for $i\in [n]$,  the corresponding ``discretized'' eigenfunctions are the Fourier modes:
\[
 \bee_j=\frac{1}{\sqrt{n}}(e_j(x_1),\dots,e_j(x_n))^\top=\frac{1}{\sqrt{n}}(1, \omega,\omega^{2j},\cdots,\omega^{(n-1)j})^\top.
\]
  $\{\bee_j\}_{j=1}^n$ forms an orthonormal basis of $\CC^n$ and moreover, $\bee_{j'n+j}=\bee_j$ for any $j'\in\ZZ$ and $j\in [n]$.  It is important to note that the kernel matrix $K_{s} = \frac{1}{n}(k_s(x_i,x_j))_{1\le i ,j \le n}\in\RR^{n\times n}$ is  circulant,  as for any $i,j\in [n]$, $k_s(x_i,x_j)=\sum_{t\in\ZZ}\mu_t^{s}e^{2\pi\ii t(x_i-x_j)}$ depends only on $x_i-x_j=(i-j)/n$. 

Therefore, our subsequent analysis will frequently use the following \textbf{properties of circulant matrices}:
\begin{itemize}
\item Any circulant matrix can be diagonalized using Fourier modes and thus, any two circulant matrices are commutable.
\item Let $A,B\in\RR^{n\times n}$ be two circulant matrices. Then, $AB$ and $A^\top$ are circulant. If $A$ is non-singular,  $A^{-1}$ is also circulant. 
\item All diagonal entries of a circulant matrix are the same, namely for any $i\in [n]$:
\begin{equation}\label{eqn: circulant-trace}
(A)_{i,i} = \frac{1}{n}\sum_{i=1}^n (A)_{i,i} = \fn\tr[A].
\end{equation}
\end{itemize}
These properties of circulant matrices  can be found on Wikipedia\footnote{\url{https://en.wikipedia.org/wiki/Circulant_matrix}}. 

We first note that $K_s$'s share the same eigenvectors (Fourier modes) for all $s\geq 1$:
\begin{lemma}\label{lem:eigen}
For any $j\in \ZZ$, $K_{s} \bee_j = \tilde{\mu}_{j,s} \bee_j$ with $\tilde{\mu}_{j,s} = \sum_{ j' \in \ZZ}\mu_{nj' + j}^s.$ Moreover, if we use $\{\hat{\mu}_{j,s}\}_{j = 1}^n$ to denote the non-decreasing permutation of $\{\tilde{\mu}_{j,s}\}_{j = 1}^n$, then
$\hat{\mu}_{j,s}\asymp (j+1)^{-s(\beta+1)}$.
\end{lemma}
\begin{proof}
By the aforementioned properties of circulant matrices,  $\{\bee_j\}_{j=1}^n$ are the $n$ orthonormal eigenvectors of $K_s$ and moreover, for $t\in [n]$, we have
\begin{align*}
K_s \bee_t &= \left(\sum_{j\in\ZZ} \mu_j^s \bee_j\bee_j^*\right) \bee_t = \left(\sum_{j=1}^n\sum_{j'\in\ZZ} \mu^s_{j'n+j}\bee_{j'n+j}\bee_{j'n+j}^*\right)\bee_t \\ 
&=\left(\sum_{j=1}^n\sum_{j'\in\ZZ} \mu^s_{j'n+j}\bee_{j}\bee_{j}^*\right)\bee_t= \left(\sum_{j'\in\ZZ} \mu^s_{j'n+j}\right)\left(\sum_{j=1}^n\bee_{j}\bee_{j}^*\bee_t\right)\\ 
&=  \left(\sum_{j'\in\ZZ} \mu^s_{j'n+t}\right)\bee_t = \tilde{\mu}_{t,s}\bee_t.
\end{align*}
In addition, if $|j| \le n/2$,
 \begin{align*}
     \tilde{\mu}_{j,s} &= \mu_j^s + \sum_{j' \neq 0}\mu_{nj'+j}^s \geq \mu_j^s \asymp (|j|+1)^{-s(\beta+1)}, \\
     \tilde{\mu}_{j,s} &= \mu_j^s + \sum_{j' \neq 0}\mu_{nj'+j}^s \asymp (|j|+1)^{-s(\beta+1)} + \sum_{j' \neq 0}(|nj'+j|+1)^{-s(\beta+1)}\\
     &\lesssim (|j|+1)^{-s(\beta+1)} + \sum_{k = 1}^\infty (k(n/2+1))^{-s(\beta+1)} \\ 
     &\lesssim (|j|+1)^{-s(\beta+1)} + (n/2+1)^{-s(\beta+1)} \lesssim (|j|+1)^{-s(\beta+1)}.
 \end{align*}
Therefore, 
\begin{equation*}
    \mu_j \asymp \begin{cases}
        (j+1)^{-s(\beta+1)},\quad &\text{ if }1\le j \le n/2;\\
        (n-j+1)^{-s(\beta+1)}, \quad &\text{ if } n/2 < j \le n.
    \end{cases}
\end{equation*}
Noticing that $\{\hat{\mu}_{j,s}\}_{j = 1}^n$ is the non-decreasing permutation of $\{\tilde{\mu}_{j,s}\}_{j = 1}^n$, we have that $\hat{\mu}_{j,s}  \asymp (j+1)^{-s(\beta+1)}$.
\end{proof}

\paragraph*{The KRR estimator.}
Let $\hk_{s}(\cdot) = (k_s(\cdot,x_1),\dots, k_s(\cdot,x_n))^\top$, and $y = (y_1,\dots,y_n)^\top$. Then,
\begin{equation*}
  \hat{f}_\lambda = \widehat{\alpha}_\lambda^\top \hk_{s}(\cdot)=\sum_{i=1}^n (\widehat{\alpha}_{\lambda})_jk_s(x_j,\cdot), \text{ where }  \widehat{\alpha}_\lambda = \frac{1}{n}(K_{s}+\lambda I)^{-1} y.
\end{equation*}
We will use $K$ and $\hk(\cdot)$ as the short of $K_{1}$ and $\hk_{1}(\cdot)$ for notation simplicity. 
\begin{lemma}
Let $\tilde{f} = \min_{f \in \cH_k, f(x_i) = f^*(x_i)} \|f\|_{\cH_k}$. Then, 
\[
    \|\tilde{f}\|_{\cH_k} \le 1,\,\quad \|\tilde{f} - f^*\|_{C(\TT)} \lesssim n^{-\frac{\beta}{2(\beta+1)}}.
\]
\end{lemma}
\begin{proof}
First, $\|\tf\|_{\cH_k}\leq \|f^*\|_{\cH_k}\leq 1$. Notice that $\|\tilde{f} - f^*\|_{\hat{L}_n} = 0$, $\|\tilde{f} - f^*\|_{\cH_k} \le 2$. By Lemma \ref{lem:linfty_generalization} with $s = 1$,  we have that
$$\|\tf-f^*\|_{\cH_k} \le 2 \sup_{\|f\|_{\cH_k} \le 1,\|f\|_{\hat{L}_n} = 0}\|f\|_{C(\TT)} \lesssim n^{-\frac{\beta
}{2}} \lesssim n^{-\frac{\beta}{2(\beta+1)}}.$$
\end{proof}

Note that KRR can not  distinguish if the target function is $f^*$ or $\tf$ as KRR only accesses the labels $\{y_i=f^*(x_i)+\xi_i\}_{i=1}^n$. Therefore, in the analysis of KRR, we can replace $f^*$ with $\tf$ without altering the conclusion. Denote by $\alpha^*\in\RR^n$ the minimum-norm solution and thus $\tf=\alpha^*\cdot \hk(\cdot)$, for which 
\[
n {\alpha^*}^\top K \alpha^*=\|\tf\|_{\cH_k}^2\leq 1,\qquad y = nK\alpha^* + \xi.
\]
Let $\xi = (\xi_1,\dots,\xi_n)^\top$ and $r := (\widehat{f}_{\lambda,s}(x_1) - f^*(x_1),\cdots,\widehat{f}_{\lambda,s}(x_n) - f^*(x_n))^\top\in\RR^n$.  Then, we have
\begin{equation}\label{eqn: alpha_lambda}
    \widehat{\alpha}_\lambda = \fn(K_{s} + \lambda I)^{-1}y = \fn(K_{s}+\lambda I)^{-1}\xi + (K_{s}+\lambda I)^{-1}K\alpha^*
\end{equation}
and  
\begin{align}
r = nK_s\widehat{\alpha}_\lambda  - nK\alpha^* = K_s(K_{s}+\lambda I)^{-1}\xi -n\lambda (K_s+\lambda I)^{-1}K\alpha^* =: A\xi + BK^{1/2}\alpha^*,
\end{align}
where $A=K_s(K_s+\lambda I)^{-1}$ and $B=-n\lambda (K_s+\lambda I)^{-1}K^{1/2}$.  Note that by the  properties of circulant matrices, both $A,B$ and $AA^\top$ and $BB^\top$ are circulant.

\paragraph*{Proof of Lemma~\ref{lemma: krr-hks1}: Part I.}
We next bound $\|A\xi\|_\infty$ and $\|BK^{1/2}\alpha^*\|_\infty$ separately.
\begin{itemize}
\item First, $A\xi$ is a mean-zero Gaussian random variable and for any $i\in [n]$, it holds that
\begin{align*}
\EE[(A_s\xi)_i^2] =\sigma^2 (AA^\top)_{i,i} =\frac{\sigma^2}{n} \tr[AA^\top] = \frac{\sigma^2}{n}\sumjn \frac{\hat{\mu}_{j,s}^2}{(\hat{\mu}_{j,s}+\lambda)^2}=:\sigma_n^2,
\end{align*}
where the second step uses \eqref{eqn: circulant-trace}.
Then, by the maximal inequality (see, e.g., \cite[Lemma 5.2]{van2014probability}), we have  with probability $1 - \delta/2 $ that
\begin{equation}\label{eqn:7111}
       \|A\xi\|_\infty  \lesssim \sigma_n\sqrt{\log(n/\delta)}=\sigma\sqrt{\frac{\log (n/\delta)}{n}\sum_{j=1}^{n}\frac{\hat{\mu}_{j,s}^2}{(\hat{\mu}_{j,s} + \lambda)^2}}\lesssim \sigma \sqrt{\frac{\log (n/\delta)}{n}} \lambda^{-\frac{1}{2s(\beta+1)}}
\end{equation}
where the last step uses  $\hat{\mu}_{j,s} \asymp (j+1)^{-s(\beta+1)} $ (see Lemma~\ref{lem:eigen}) and Lemma~\ref{lemma: dof-estimate}.

\item Second, let $q_i\in\RR^n$ be the canonical basis of $\RR^n$. Then, for any $i\in [n]$, we have 
\begin{align*}
|BK^{1/2}\alpha^*|_i&= |q_i^\top BK^{1/2}\alpha^*|\leq \|B^\top q_i\|\|K^{1/2}\alpha^*\| \leq  \sqrt{q_i^\top BB^\top q_i} \sqrt{\frac{1}{n}}\\ 
&=\sqrt{\frac{1}{n^2} \tr[BB^\top]} = \sqrt{\lambda^2\sumjn \frac{\hat{\mu}_{j,1}}{(\hat{\mu}_{j,s}+\lambda)^2}}\leq \lambda^{\frac{\beta}{2s(\beta+1)}},
\end{align*}
where the last steps uses $\hat{\mu}_{j,s} \asymp (j+1)^{-s(\beta+1)} $ (see Lemma~\ref{lem:eigen}) and Lemma~\ref{lemma: dof2}.
\end{itemize}
Combining together, we obtain 
 \begin{equation*}
    \|r\|_{\infty} =  \|A\xi\|_{\infty} + \|BK^{1/2}\alpha^*\|_\infty \lesssim \sigma \sqrt{\frac{\log (n/\delta)}{n}} \lambda^{-\frac{1}{2s(\beta+1)}} + \lambda^{\frac{\beta}{2s(\beta+1)}}.
\end{equation*}
\qed

\paragraph*{Proof of Lemma~\ref{lemma: krr-hks1}: Part II.}
By \eqref{eqn: alpha_lambda}, we have
\begin{align}\label{eqn: fHks}
\notag    \|\widehat{f}_{\lambda,s}\|_{\cH_{k_s}}^2 &= n\widehat{\alpha}_\lambda^\top K_{s}\widehat{\alpha}_\lambda \lesssim \frac{1}{n}\xi^\top K_{s}(K_{s}+\lambda I)^{-2}\xi + n(\alpha^*)^\top K(K_s+\lambda I)^{-1} K_{s}(K_{s}+\lambda I)^{-1}K\alpha^*\\ 
    &\leq \frac{1}{n}\xi^\top K_{s}(K_{s}+\lambda I)^{-2}\xi + n{\alpha^*}^\top K(K_s+\lambda I)^{-1}K\alpha^*.
\end{align}
We next turn to bound the two terms separately.
\begin{itemize}
\item First, we have
\begin{align}\label{eqn: Kalpha}
\notag n{\alpha^*}^\top K(K_s+\lambda I)^{-1}K\alpha^*&\leq n\|K^{1/2}(K_s+\lambda I)^{-1}K^{1/2}\|_{2}\|K^{1/2}\alpha^*\|^2\\ 
\notag &=\|(K_s+\lambda I)^{-1}K\|_{2}=\max_{j\in [n]} \frac{\hmu_{j,1}}{\hmu_{j,s}+\lambda}\\ 
&\asymp \sup_{j\in[n]}\frac{j^{-1}}{j^{-s}+\lambda}\leq \sup_{t\in (0,1]}\frac{t}{t^s+\lambda}\lesssim_{s}\lambda^{1/s-1},
\end{align}
where the last step uses Lemma \ref{lemma: dof3}.
\item Second, let $B=K_{s}^{1/2}(K_{s}+\lambda I)^{-1}$, we have
\begin{align*}
    \EE[\|B\xi\|^2] &= \sigma^2\mathrm{tr}( K_{s}(K_{s}+\lambda I)^{-2})= \sigma^2\sumjn \frac{\hmu_{j,s}}{(\hmu_{j,s}+\lambda)^2} \\ 
    &\asymp \sigma^2\sumjn \frac{j^{-s(\beta+1)}}{(j^{-s(\beta+1)}+\lambda)^2}\lesssim \sigma^2\lambda^{-1-\frac{1}{s(\beta+1)}},
\end{align*}
where the last step uses Lemma \ref{lemma: dof2}.
Notice that $\|B\|_{2} \le \lambda^{-\half}$. Then, by the Theorem 6.3.2 in \cites{vershynin2019high}, we have that with probability $1-\delta$,
\begin{equation}\label{eqn: Bxi}
    \|B\xi\|_2 \lesssim \sigma[\lambda^{-\half - \frac{1}{2s(\beta+1)}} + \lambda^{-\half} \sqrt{\log(1/\delta)}].
\end{equation}
\end{itemize}
Lastly, plugging  \eqref{eqn: Kalpha} and \eqref{eqn: Bxi} into \eqref{eqn: fHks}, we completes proof.
\qed

\paragraph*{Proof of Lemma~\ref{lemma: smooth-approx}}
Since $f^*\in \cH_k(1)$, we can write $f^* = \sum_{j \in \ZZ} \mu_j^{1/2} \alpha_j e_j$ with $\sum_{j\in\ZZ} a_j^2\leq 1$. Let $f^*_m = \sum_{|j| \le m} \mu_j^{1/2}\alpha_j e_j$. Then, we have
    \begin{align*}
        &\|f^*_m - f^*\|_{C(\TT)} = \sup_{x \in \TT}\left|\sum_{|j| > m}\mu_j^{1/2} \alpha_j e_j(x)\right| \le \sup_{x \in \TT}\sqrt{\sum_{|j| > m}\mu_j|e_j(x)|^2}\leq M\sqrt{\sum_{|j| > m}\mu_j} \asymp m^{-\frac{\beta}{2}},\\
        &\|f_m\|_{\cH_{k_s}} = \sqrt{\sum_{|j| \le m}\mu_j^{1-s} \alpha_j^2}\lesssim\sqrt{\left(\max_{|j|\leq m}\mu_j^{1-s}\right)\sum_{|j|\leq m}\alpha_j^2}\lesssim m^{\frac{(s-1)(\beta+1)}{2}}.
    \end{align*}
    By taking $m \asymp n^{\frac{1}{\beta+1}}$ and $\tilde{f}^* = f_m$, we complete the proof.
   
\qed

\paragraph*{Proof of Lemma \ref{lem:linfty_generalization}.}
By Theorem \ref{thm: dual} and Lemma \ref{lemma: F2-rkhs}, we know that
\begin{align}\label{eqn: torous-1}
\notag    \sup_{\|f\|_{\cH_{k_s}} \le 1, \|f\|_{\hL^\infty_n} \le \epsilon}\|f\|_{C(\TT)} &= \sup_{f \in \tilde{\cB}(1)}\inf_{c\in\RR^n}\left(\Big\|f - \sum_{i=1}^n c_i \phi(x_i,\cdot)\Big\|_\rho + \epsilon \|c\|_{1}\right)\\ 
    &=\sup_{x \in \TT}\inf_{c\in\RR^n}\left(\big\|\phi(x,\cdot) - \sum_{i=1}^n c_i \phi(x_i,\cdot)\big\|_\rho + \epsilon \|c\|_{1}\right),
\end{align}
where $\phi(x,v) = \sum_{j=1}^{\infty}\mu_j^{\frac{s}{2}} e_j(x)\overline{e_j(v)} = \sum_{j\in \ZZ} \mu_j^{\frac{s}{2}}\exp(2\pi \ii j (x-v))$  and the second step uses Lemma~\ref{lemma: barron2single-neuron}.
Moreover, for any $x \in \TT$,  it holds that
\begin{align}\label{eqn: torous0}
    \left\|\phi(x,\cdot) - \sum_{i=1}^n c_i \phi(x_i,\cdot)\right\|_\rho^2  = \sum_{ j \in \ZZ} \mu_j^s\big|e^{2\pi \ii jx} - \sum_{i=1}^n c_i e^{2\pi \ii j x_i}\big|^2 = \sum_{ j \in \ZZ} \mu_j^s\big|1 - \sum_{i=1}^n c_i e^{2\pi \ii j (x_i-x)}\big|^2.
\end{align}

 We will explicitly construct the coefficients $c\in\RR^n$ to complete the proof. Let $D_m(x)=\sum_{|j|\leq m}e^{2\pi\ii j x}$ be the $m$-th order Dirichlet kernel and 
\[
    F_m(x) = \frac{1}{m}\sum_{j=0}^{m-1} D_j(x) = \frac{1}{m}\left(\frac{\sin(\pi m x)}{\sin(\pi x)}\right)^2
\]
be the $m$-th order Fej\'{e}r kernel.  Without loss of generality, we assume  $n = 4 n'$ with $n' \in \NN^+$. Then, for $i\in [n]$, define the coefficient functions
\begin{equation*}
    c_i(x) = \frac{2}{n^2} \sum_{j = n'}^{3n'-1} D_j(x-x_i) = \frac{1}{n}\left[\frac{3}{2} F_{3n'}(x-x_i) -\frac{1}{2}  F_{n'}(x-x_i)\right].
\end{equation*}
\begin{itemize}
\item Notice that for any $j \in \ZZ$,
\begin{equation*}
    \frac{1}{n}\sum_{i=1}^n e^{2\pi \ii j x_i} = \begin{cases}
        0, \, \text{ if }n\nmid  j,\\
        1, \, \text{ if }n\mid j.
    \end{cases}
\end{equation*}
Therefore, for any $|j| \le  n'$ and $|j'| \le 3n' - 1$, we have that $|j-j'| < n$ and hence 
\begin{align}\label{eqn: torus1}
\notag    \sum_{i=1}^n c_i(x) e^{2\pi \ii j (x_i-x)} &= \frac{1}{2n'}\sum_{j = n'}^{3n'-1}\sum_{|j'| \le  j} e^{2\pi \ii (j'-j) x}\frac{1}{n}\sum_{i=1}^ne^{2\pi \ii (j-j')x_i} \\ 
&=\frac{1}{2n'}\sum_{j = n'}^{3n'-1}\sum_{|j'| 
    \le  j} e^{2\pi \ii (j'-j) x} \mathrm{1}_{\{j' = j\}} = 1.
\end{align}
\item Moreover, noticing that for any $|j|  \le n$ and $x \in \TT$, we have that
\begin{equation*}
   \frac{1}{n} \sum_{i=1}^n F_j(x-x_i) = \frac{1}{j}\sum_{j' = 0}^{j-1}\sum_{|j''| \le j'}e^{2\pi \ii j'' x}\frac{1}{n} \sum_{i = 1}^n e^{-2\pi \ii j''x_i} = \frac{1}{j}\sum_{j' = 0}^{j-1}\sum_{|j''| \le j'}e^{2\pi \ii j'' x}\mathrm{1}_{\{j'' = 0\}} = 1.
\end{equation*}
Therefore, 
\begin{equation}\label{eqn: torus2}
    \sum_{i = 1}^n|c_i(x)| \le \frac{3}{2n}\sum_{i = 1}^n F_{3n'}(x-x_i) + \frac{1}{2n}\sum_{i=1}^n F_{n'}(x-x_i) \le 2.
\end{equation}
\end{itemize}
Therefore, substituting \eqref{eqn: torus1} and \eqref{eqn: torus2} into \eqref{eqn: torous0} gives
\begin{align*}
  \Big\|\phi(x,\cdot) - \sum_{i=1}^n c_i(x) \phi(x_i,\cdot)\Big\|_\rho^2&= \sum_{|j| > n'} \mu_j^s\left| 1 - \sum_{i=1}^n c_i(x)e^{2\pi \ii j (x_i-x)}\right|^2  \\
   &\le\left(\sum_{|j| > n'} \mu_j^s\right) \sup_{x \in \TT}\left(1+\sum_{i=1}^n|c_i(x)|\right)^2\\ 
   & \lesssim  n^{-s(\beta+1)+1}
\end{align*}
and consequently,
\begin{align*}
    \sup_{\|f\|_{\cH_{k_s}} \le 1, \|f\|_{\hL^\infty_n} \le \epsilon}\|f\|_{C(\TT)} &= \sup_{x\in\TT}\inf_{c\in\RR^n}\left(\Big\|\phi(x,\cdot) - \sum_{i=1}^n c_i \phi(x_i,\cdot)\Big\|_\rho + \epsilon \|c\|_{1}\right)\\ 
    &\leq \sup_{x\in\TT}\left(\Big\|\phi(x,\cdot) - \sum_{i=1}^n c_i(x) \phi(x_i,\cdot)\Big\|_\rho + \epsilon \sum_{i=1}^n |c_i(x)|\right)\\ 
    &\lesssim n^{-\frac{s(\beta+1)-1}{2}} + \epsilon.
\end{align*}
We thus complete the proof.
\qed

\subsection{Proof of Theorem \ref{thm: Linfty-learning-lower-bound}}
\label{sec: proof-periodic-kernel-lower-bound}

We remark that  the proof follows a procedure similar to Part II of the proof to Theorem \ref{primal-bound}. Recall 
 $\rho=\mathrm{Unif}(\TT)$ and consider the feature function $\phi:\TT\times\TT\mapsto\RR$ is given by $$
 \phi(x,v)=\sum_{j=1}^{\infty}\sqrt{\mu_j}e_j(x)\overline{e_j(v)} = \sum_{j=1}^{\infty}\sqrt{\mu_j}e^{2\pi \ii j(x-v)}.
 $$ 
By Lemma \ref{lemma: F2-rkhs},
$\cH_k=\cF_{2,\rho}$ with respect to $\phi$. Then, the proof can be completed in three steps.
\begin{itemize}
\item \textbf{Step I.} 
By Proposition \ref{lemma-random-data} and the dual equivalence \eqref{dual-eq-3} in Theorem \ref{thm: dual}, we have
    \begin{align*}
       \inf_{T \in \cA_n}  &\sup_{\|f\|_{\cH_k} \leq 1} \EE\|T(\{(x_i,f(x_i)+\xi_i)\}_{i=1}^n) -f  \|_{C_0(\TT)} \\
        \gtrsim &\sup_{\|f\|_{\cH_k}\leq 1, \|f\|_{L^2(\brho)}\leq \sigma/\sqrt{n}} \|f\|_{C_0(\TT)}=\sup_{\|f\|_{\cF_{2,\rho}}\leq 1, \|f\|_{L^2(\brho)}\leq \sigma/\sqrt{n}} \|f\|_{C_0(\TT)}\\\\
        =&\sup_{\|g\|_{\tilde{\cB}}\leq 1}\,\,\inf_{h \in \tilde{\cF}_{2,\bar{\rho}}} \left(\left\|g - h \right\|_{\rho} + n^{-\half} \|h\|_{\tilde{\cF}_{2,\bar{\rho}}} \right) \\
        =&\sup_{\|g\|_{\tilde{\cB}}\leq 1}\,\,\inf_{c \in L^2(\brho)} \left(\left\|g - \int_{\TT}c(x)\phi(x,\cdot)\dd \brho(x) \right\|_{\rho} + n^{-\half} \|c\|_{\brho} ]\right)\\
        = &\sup_{y \in \TT}\,\,\inf_{c \in L^2(\brho)} \left(\left\|\phi(y,\cdot) - \int_{\TT}c(x)\phi(x,\cdot)\dd \brho(x) \right\|_{\rho} + n^{-\half} \|c\|_{\brho}\right)
    \end{align*}

     \item \textbf{Step II.}
    We shall utilize the translation invariance to show 
    \begin{align}\label{eqn: low-bound2}
\notag       \sup_{ y \in \TT}\,\, &\inf_{c \in L^2(\brho)} \left(\left\|\phi(y,\cdot) - \int_{\TT}c(x)\phi(x,\cdot)\dd \brho(x) \right\|_{\rho} + n^{-\half} \|c\|_{\brho}\right) \\
        \gtrsim &\sup_{y \in \TT}\,\,\inf_{c \in L^2(\rho)}\left( \left\|\phi(y,\cdot) - \int_{\TT}c(x)\phi(x,\cdot)\dd {\rho}(x) \right\|_{\rho} + n^{-\half} \|c\|_{\rho}\right).
    \end{align}

\item \textbf{Step III.} By using again the dual equivalence, we have
\begin{align*}
\sup_{y \in \TT}&\,\,\inf_{c \in L^2(\rho)}\left( \left\|\phi(y,\cdot) - \int_{\TT}c(x)\phi(x,\cdot)\dd {\rho}(x) \right\|_{\rho} + n^{-\half} \|c\|_{\rho}\right) \\ 
&=\sup_{\|g\|_{\tilde{\cB}} \le 1}\,\,\inf_{c \in L^2(\rho)}\left( \left\|g - \int_{\TT}c(x)\phi(x,\cdot)\dd {\rho}(x) \right\|_{\rho} + n^{-\half} \|c\|_{\rho}\right) \\
  &=\sup_{\|g\|_{\tilde{\cB}}\leq 1}\,\,\inf_{h \in \tilde{\cF}_{2,\rho}} \left(\left\|g - h \right\|_{\rho} + n^{-\half} \|h\|_{\tilde{\cF}_{2,\rho}} \right) \\
&= \II_{L^2(\rho), C_0(\TT)}\left(\cF_{2,\rho},\frac{\sigma}{\sqrt{n}}\right)
\end{align*}
Note that $\mu_j^{k,\rho}\asymp j^{-1-\beta}$. Then, $n_{\brho}(\sigma)\asymp n^{-1/(1+\beta)}$. By Lemma \ref{lemma: xyz}, 
\[
    \II_{L^2(\rho), C_0(\TT)}\left(\cF_{2,\rho},\frac{\sigma}{\sqrt{n}}\right)\gtrsim \sup_{\gamma\in\cP(\cX)} \Lambda_{k,\gamma}(n_{\rho}(\sigma)) = (n^{-1/(1+\beta)})^{-\beta/2} = n^{-\beta/(2\beta+2)}.
\]
\end{itemize}

\paragraph*{Proof of Eq.~\eqref{eqn: low-bound2}.}
    Noticing that $\sqrt{A^2+B^2}\le A+ B  \le \sqrt{2(A^2+B^2)}$ for any $A,B \geq 0$, it suffices to prove that
    \begin{align*}
        &\sup_{ y \in \TT}\,\,\inf_{c \in L^2(\brho)} \left\|\phi(y,\cdot) - \int_{\TT}c(x)\phi(x,\cdot)\dd \brho(x) \right\|_{\rho}^2 + n^{-1} \|c\|_{\brho}^2 \\\geq &\sup_{y \in \TT}\,\,\inf_{c \in L^2(\rho)} \left\|\phi(y,\cdot) - \int_{\TT}c(x)\phi(x,\cdot)\dd {\rho}(x) \right\|_{\rho}^2 + n^{-1} \|c\|_{\rho}^2.
    \end{align*}
    Noticing that $F_y(c):= \left\|\phi(y,\cdot) - \int_{\TT}c(x)\phi(x,\cdot)\dd \brho(x) \right\|_{\rho}^2 + n^{-1} \|c\|_{\brho}^2$ is a strongly-convex quadratic function in $L^2(\brho)$,  there must exist a unique minimizer (see, e.g. Corollary 11.16 in \cites{bauschke2017correction}) for $F_y(\cdot)$. Therefore, we can define $\bar{c}:\TT\times\TT\mapsto\RR$ by
    \begin{equation*}
        \bar{c}(y,\cdot) = \argmin_{c \in L^2(\brho)} F_y(c).
    \end{equation*}
    Since for any $v\in\TT$, $\phi(\cdot,v)$ is periodic, then $\bar{c}(\cdot,v)$ is also periodic by definition.

     Given $t \in \TT$, through the translation invariance of $\phi$ and $\rho$, we know that 
    \begin{equation}\label{eqn:translation_1}
          \begin{aligned}
        \left\|\phi(y+t,\cdot) - \int_{\TT}\bar{c}(y+t,x)\phi(x,\cdot)\dd \brho(x) \right\|_{\rho} 
        =& \left\|\phi(y+t,\cdot+t) - \int_{\TT}\bar{c}(y+t,x)\phi(x,\cdot+t)\dd \brho(x) \right\|_{\rho} \\
        =& \left\|\phi(y,\cdot) - \int_{\TT}\bar{c}(y+t,x)\phi(x-t,\cdot)\dd \brho(x) \right\|_{\rho}\\
        =& \left\|\phi(y,\cdot) - \int_{\TT}\bar{c}(y+t,x+t)\phi(x,\cdot)\dd \brho_t(x) \right\|_{\rho}, 
    \end{aligned}
    \end{equation}
    where we use the $\phi(x-t,v-t) = \phi(x,v)$ and let $\brho_t(  B) := \brho( B - t)$ for all Borel measurable sets $B$ on $\TT$.

    Consider the joint distribution for $P(x \in B_1, t \in B_2) = \int_{B_2}\bar{\rho}_t(B_1)\dd \rho(t)$ for any Borel measurable sets $B_1,B_2$ on $\TT$. Then $P(x|t) = \bar{\rho}_t(x)$ and 
    \begin{align*}
        P(x \in B_1) &= \int_{\TT}\bar{\rho}_t(B_1)\dd\rho(t) = \int_{\TT}\bar{\rho}(B-t) \dd\rho(t) = \int_{\TT} \int_{\TT} \mathrm{1}_{x \in B-t}\dd \bar{\rho}(x) \dd \bar{\rho}(t)\\
        &= \int_{\TT} \int_{\TT} \mathrm{1}_{t \in B-x} \dd \rho(t) \dd \bar{\rho}(x) = \int_{\TT}\rho(B-x)\dd\bar{\rho}(x) = \int_{\TT}\rho(B)\dd\bar{\rho}(x) = \rho(B),
    \end{align*} i.e., the $x$-marginal distribution of $P$ is $\rho$.
    
    Let $$
    \bar{c}_*(y,x) = \int_{\TT}\bar{c}(y+t,x+t) P(\dd t|x). 
    $$
    Then,
    \begin{equation}\label{eqn:translation_2}
        \begin{aligned}
        \left\|\phi(y,\cdot) - \int_{\TT}\bar{c}_*(y,x)\phi(x,\cdot)\dd \rho(x) \right\|_{\rho}^2 &= \left\|\phi(y,\cdot) - \int_{\TT}\int_{\TT}\bar{c}(y+t,x+t)\phi(x,\cdot) P(\dd t|x)\dd \rho(x) \right\|_{\rho}^2 \\
        &= \left\|\phi(y,\cdot) - \int_{\TT\times \TT}\bar{c}(y+t,x+t)\phi(x,\cdot)\dd P(t,x) \right\|_{\rho}^2\\
        &= \left\|\phi(y,\cdot) - \int_{\TT}\int_{\TT}\bar{c}(y+t,x+t)\phi(x,\cdot)\dd \brho_t(x)\dd \rho(t) \right\|_{\rho}^2\\
        &\le \int_{\TT}\left\|\phi(y,\cdot) - \int_{\TT}\bar{c}(y+t,x+t)\phi(x,\cdot)\dd \brho_t(x) \right\|^2_{\rho}\dd\rho(t) \\
        &\stackrel{\mathrm{(i)}}{=} \int_{\TT}\left\|\phi(y+t,\cdot) - \int_{\TT}\bar{c}(y+t,x)\phi(x,\cdot)\dd \brho(x) \right\|_{\rho}^2 \dd\rho(t) \\
        &\stackrel{\mathrm{(ii)}}{=} \int_{\TT}\left\|\phi(t,\cdot) - \int_{\TT}\bar{c}(t,x)\phi(x,\cdot)\dd \brho(x) \right\|_{\rho}^2 \dd\rho(t),
    \end{aligned}
    \end{equation}
    where $\mathrm{(i)}$ uses \eqref{eqn:translation_1} and $\mathrm{(ii)}$ uses the periodicity of $\phi(\cdot,v)$ and $\bar{c}(\cdot,v)$  for any $v\in\TT$.

    \begin{equation}\label{eqn:translation_3}
        \begin{aligned}
           \|\bar{c}_*(y,\cdot)\|_\rho^2 &=  \int_{\TT}|\bar{c}_*(y,x)|^2\dd\rho(x) =\int_{\TT}\left|\int_{\TT}\bar{c}(y+t,x+t)P(\dd t| x)\right|^2\dd\rho(x) \\
           &\le \int_{\TT}\int_{\TT}|\bar{c}(y+t,x+t)|^2P(\dd t|x)\dd\rho(x) 
           =  \int_{\TT}\int_{\TT}|\bar{c}(y+t,x+t)|^2\dd \brho_t(x)\dd\rho(t)\\& = \int_{\TT}\int_{\TT}|\bar{c}(y+t,x)|^2\dd \brho(x)\dd\rho(t)
           = \int_{\TT}\int_{\TT}|c(t,x)|^2\dd \brho(x)\dd\rho(t) \\
           &= \int_{\TT}\|c(t,\cdot)\|_{\brho}^2\dd\rho(t).
        \end{aligned}
    \end{equation}
    Combining \eqref{eqn:translation_2} and \eqref{eqn:translation_3}, we know that for any $y \in \TT$,
    \begin{align*}
        &\inf_{c \in L^2(\rho)} \left(\left\|\phi(y,\cdot) - \int_{\TT}c(x)\phi(x,\cdot)\dd {\rho}(x) \right\|_{\rho}^2 + n^{-1} \|c\|_{\rho}^2\right) \\
        \le&  \left\|\phi(y,\cdot) - \int_{\TT}\bar{c}_*(y,x)\phi(x,\cdot)\dd {\rho}(x) \right\|_{\rho}^2 + n^{-1} \|\bar{c}_*(y,\cdot)\|_{\rho}^2 \\
        \le&\int_{\TT}\left[\left\|\phi(t,\cdot) - \int_{\TT}\bar{c}(t,x)\phi(x,\cdot)\dd \brho(x) \right\|_{\rho}^2+ n^{-1}\|\bar{c}(t,\cdot)\|_{\brho}^2\right]\dd \rho(t) \\
        \le& \sup_{ y \in \TT}\,\,\inf_{c \in L^2(\brho)}\left( \left\|\phi(y,\cdot) - \int_{\TT}c(x)\phi(x,\cdot)\dd \brho(x) \right\|_{\rho}^2 + n^{-1} \|c\|_{\brho}^2\right),
    \end{align*}
    which concludes the proof.

\section{Auxiliary lemmas}

\begin{lemma}\label{lemma: dof2}
Suppose that $a_j\asymp j^{-\alpha}, b_j\asymp j^{-\gamma}$ for $j\in \NN^+$ with $\alpha > 1$ and $2\gamma-\alpha + 1\geq 0$. Then, $\sum_{j=1}^\infty\frac{ a_j}{(b_j+\lambda)^2}\lesssim_{\alpha,\gamma} \lambda^{-2+(\alpha-1)/\gamma}.$
\end{lemma}
\begin{proof}
Let $m_{\lambda}=\inf\{j\in \NN: b_j\geq \lambda\}$. Then, $m_\lambda\asymp_{\gamma}\lambda^{-1/\gamma}$ and consequently,
\begin{align*}
    \sum_{j=1}^\infty \frac{a_j}{(b_j+\lambda)^2} &= \sum_{j\leq m_\lambda} \frac{a_j}{(b_j+\lambda)^2} + \sum_{j>m_\lambda}\frac{a_j}{(b_j+\lambda)^2}\\ 
    &\lesssim \sum_{j\leq m_\lambda} \frac{a_j}{b_j^2} + \sum_{j>m_\lambda}\frac{a_j}{\lambda^2}\\ 
    &\asymp \sum_{j\leq m_\lambda} j^{-\alpha+2\gamma} + \lambda^{-2}\sum_{j>m_\lambda}j^{-\alpha} \\ 
    &\lesssim_{\alpha,\gamma} (m_\lambda)^{-\alpha+2\gamma + 1} + \lambda^{-2}m_{\lambda}^{-\alpha+1}\\ 
    &\lesssim_{\alpha,\gamma} \lambda^{-2+(\alpha-1)/\gamma} + \lambda^{-2}\lambda^{(\alpha-1)/\gamma}\\ 
    &\lesssim \lambda^{-2+(\alpha-1)/\gamma}.
\end{align*}

\end{proof}

\begin{lemma}\label{lemma: dof-estimate}
Let $\lambda_j\asymp j^{-\gamma}$ for $j\in\NN^+$. Then, $\sum_{j=1}^\infty \frac{\lambda_j^2}{(\lambda_j+\lambda)^2}\lesssim \lambda^{-1/\gamma}$.
\end{lemma}
\begin{proof}
It follows trivially from Lemma~\ref{lemma: dof2} by letting $a_j=\lambda_j^2\asymp j^{-2\gamma}$ and $b_j=\lambda_j\asymp j^{-\gamma}$.
\end{proof}

\begin{lemma}\label{lemma: dof3}
Let $f(t)=\frac{t}{t^s+\lambda}$ where $s\geq 1$. Then, $\sup_{t\in [0,1]}f(t)\lesssim_s \lambda^{1/s-1}$.
\end{lemma}
\begin{proof}
Note $f'(t)=\frac{(t^s+\lambda)-tst^{s-1}}{(t^s+\lambda)^2}$. $f'(t)=0$ gives $t=\bar{t}=(\lambda/(s-1))^{1/s}$. Thus, 
\[
\sup_{t\in [0,1]}f(t)=f(\bar{t})=\frac{(s-1)^{1-1/s}}{s}\lambda^{1/s-1}.
\]
\end{proof}

\begin{lemma}\label{lemma: barron2single-neuron}
Let $\cB\subset C_0(\cX)$ be the Barron space associated with the feature function $\phi:\cX\times \cV\mapsto\RR$. Then, for any $g_1,g_2,\dots,g_n\in L^2(\rho)$ and $\lambda\geq 0$, it holds that
\[
\sup_{f\in \cB(1)}\inf_{c\in\RR^n}\left(\Big\|f-\sumin c_i g_i\Big\|_\rho + \lambda \|c\|_1\right) = \sup_{v\in\cV}\inf_{c\in\RR^n}\left(\Big\|\phi(\cdot,v)-\sumin c_i g_i\Big\|_\rho + \lambda \|c\|_1\right)
\]
\end{lemma}
\begin{proof}
\textbf{Step 1: Verification of ``$\geq$''.}
This direction follows directly as the set of single neurons $\{\phi(\cdot,v): v\in \cV\}$ is a subset of $\cB(1)$. 

\textbf{Step 2: Verification of ``$\leq$''.}
Denote the RHS of the equality by $I$ and for each $v\in\cV$, define the optimal coefficients 
\[
    c(v) =(c_1(v),\dots,c_n(v))= \argmin_{c\in\RR^n}\left(\Big\|\phi(\cdot,v)-\sumin c_i g_i\Big\|_\rho + \lambda \|c\|_1\right)\in\RR^n.
\]
Then, for any $f\in \cB(1)$ and $\epsilon>0$, there exists a finite representation $\{(a_j,v_j)\}_{j=1}^m$ such that $\|f-\sumjm a_j\phi(\cdot,v_j)\|_\rho\leq \epsilon$ and $\sumjm |a_j|\leq 1$. 
Let 
\[
\bar{c} = \sumjm a_j c(v_j)\in\RR^n. 
\]
Then,
\begin{align*}
\inf_{c\in\RR^n}\left(\Big\|f-\sumin c_i g_i\Big\|_\rho + \lambda \|c\|_1\right)&\leq \Big\|f-\sumin \bar{c}_i g_i\Big\|_\rho + \lambda \|\bar{c}\|_1\\ 
&\leq \epsilon + \Big\|\sumjm a_j \phi(\cdot,v_j)-\sumin \sumjm a_jc_i(v_j) g_i\Big\|_\rho + \lambda \sumin\left|\sumjm a_j c_i(v_j)\right|\\ 
&\leq \epsilon + \sumjm |a_j|\left(\Big\|\phi(\cdot,v_j)-\sumin c_i(v_j) g_i\Big\|_\rho + \lambda \sumin\left|c_i(v_j)\right|\right)\\ 
&\leq \epsilon + I.
\end{align*}
Taking $\epsilon\to 0$, we established the desired equality.
\end{proof}

\end{document}